\documentclass{article}

\usepackage{arxiv}

\usepackage[utf8]{inputenc} 
\usepackage[T1]{fontenc}    
\usepackage{hyperref}       
\usepackage{url}            
\usepackage{booktabs}       
\usepackage{amsfonts}       
\usepackage{nicefrac}       
\usepackage{microtype}      
\usepackage{graphicx}
\usepackage{doi}


\usepackage[numbers,sort&compress]{natbib}
\bibliographystyle{plainnat}


\usepackage{xspace}

\usepackage{algorithm}
\usepackage{algorithmic}
\usepackage{wrapfig}
\usepackage{amsmath}
\usepackage{amsthm}

\theoremstyle{remark}
\newtheorem{remark}{Remark}

\usepackage{pifont}
\usepackage{multirow}
\usepackage{tabularx}
\usepackage{tikz}
\usepackage{subcaption} 
\usepackage[dvipsnames]{xcolor}
\usepackage{colortbl}
\usepackage{bm}

\newcommand{\circlednum}[1]{%
  \tikz[baseline=(char.base)]{
    \node[shape=circle,draw=green!50!black,fill=green!10,inner sep=1pt] (char) {\small \textbf{#1}};
  }%
}

\newcommand{\circlednumorange}[1]{%
  \tikz[baseline=(char.base)]{
    \node[shape=circle,draw=orange!50!black,fill=orange!10,inner sep=1pt] (char) {\small \textbf{#1}};
  }%
}
\newcommand{\circlednumblue}[1]{%
  \tikz[baseline=(char.base)]{
    \node[shape=circle,draw=blue!50!black,fill=blue!10,inner sep=1pt] (char) {\small \textbf{#1}};
  }%
}

\definecolor{shallowyellow}{RGB}{255, 255, 210}  
\definecolor{shallowblue}{RGB}{210, 235, 255}    

\definecolor{darkgreen}{RGB}{0,100,0}

\definecolor{lightgreen}{RGB}{0,120,0}
\definecolor{lightred}{RGB}{180,40,40}

\DeclareMathOperator{\dom}{dom}
\DeclareMathOperator{\intv}{do}

\usepackage{amssymb}  
\newcommand{\mapsfrom}{\mathrel{\reflectbox{\ensuremath{\mapsto}}}}
\setlength{\algorithmicindent}{2.5em}

\theoremstyle{plain}
\newtheorem{theorem}{Theorem}
\newtheorem{lemma}{Lemma}

\theoremstyle{definition}

\DeclareMathOperator*{\argmin}{arg\,min}
\DeclareMathOperator*{\argmax}{arg\,max}
\newcommand{\E}{\mathbb{E}}

\newcommand{\method}{DeepBlip\xspace}

\newcommand{\xmark}{\textcolor{lightred}{\ding{55}}}
\newcommand{\cmark}{\textcolor{ForestGreen}{\ding{51}}}

\title{DeepBlip: Estimating Conditional Average Treatment Effects Over Time}

\date{} 					

\author{
  Haorui Ma \\
  Munich Center for Machine Learning \\
  LMU Munich, Germany \\
  \texttt{H.Ma@lmu.de}
  \And
  Dennis Frauen \\
  Munich Center for Machine Learning \\
  LMU Munich, Germany \\
  \texttt{frauen@lmu.de}
  \And
  Stefan Feuerriegel \\
  Munich Center for Machine Learning \\
  LMU Munich, Germany \\
  \texttt{feuerriegel@lmu.de}
}



\hypersetup{
pdftitle={DeepBlip: Estimating Conditional Average Treatment Effects Over Time},
pdfsubject={stat.ML, cs.LG},
pdfauthor={Haorui Ma, Dennis Frauen, Stefan Feuerriegel},
pdfkeywords={Treatment effect over time estimation, Structural Nested Mean Models, Neural architecture, AI in medicine, CATE, time-varying confounding, SNMM, blip, Neyman-orthogonal, healthcare},
}

\begin{document}
\maketitle

\begin{abstract}
Structural nested mean models (SNMMs) are a principled approach to estimate the treatment effects over time. A particular strength of SNMMs is to break the joint effect of treatment sequences over time into localized, time-specific ``blip effects''. This decomposition promotes interpretability through the incremental effects and enables the efficient offline evaluation of optimal treatment policies without re-computation. However, neural frameworks for SNMMs are lacking, as their inherently sequential g-estimation scheme prevents end-to-end, gradient-based training.  Here, we propose DeepBlip, the first neural framework for SNMMs, which overcomes this limitation with a novel double optimization trick to enable simultaneous learning of all blip functions. Our DeepBlip seamlessly integrates sequential neural networks like LSTMs or transformers to capture complex temporal dependencies. By design, our method correctly adjusts for time-varying confounding to produce unbiased estimates, and its Neyman-orthogonal loss function ensures robustness to nuisance model misspecification. Finally, we evaluate our DeepBlip across various clinical datasets, where it achieves state-of-the-art performance. 
\end{abstract}


\section{Introduction}\label{sec:introduction}
 
Predicting the effects of treatment sequences is crucial for personalized medicine to choose the best therapeutic strategy for a patient based on their history \cite{Feuerriegel.2024}. Methodologically, the \textbf{conditional average treatment effect (CATE) \emph{over time}} captures the combined effect of multiple treatments in the next $\tau$ time steps (see Fig.~\ref{fig:CATE-trajectory}). In clinical practice, CATEs over time can be frequently estimated from observational data with patient histories, such as the electronic health records \cite{Allam.2021, Bica.2021}. Yet, a key challenge is \textit{time-varying confounding}, which arises when past treatments affect future covariates that, in turn, influence both subsequent treatment assignment and the outcome of interest.

Several statistical strategies exist to address time-varying confounding. 
While these strategies all target the same estimand, they differ in their practical strengths, especially in finite-sample settings. \circlednumblue{1}~One approach is \emph{$G$-computation} \cite{Robins.1986, Robins.1999}, with neural instantiations such as \textbf{G-Net} \cite{Li.2021} and the \textbf{G-transformer} \cite{Hess.2024}). This approach sequentially models the conditional outcome given treatment and covariates, and then averages (integrates) over the distribution of time-varying covariates under specific treatment regimes. \circlednumblue{2}~Another approach is \textit{marginal structural models (MSNs)} \cite{Robins.1994, Robins.2004}, with neural instantiations such as \textbf{RMSNs} \cite{Lim.2018}. However, MSNs rely on inverse propensity weighting and can be unstable as extreme weight often occurs in long prediction windows. \circlednumblue{3}~A third approach is \textit{\textbf{structural nested mean models (SNMMs)}} \cite{Robins.1994, Robins.2004}. A particular strength of SNMMs is their ability to break the joint effect of treatment sequences over time into localized, time-specific \textbf{\textit{``blip effects''}}. This yields a simpler estimand that does not require learning full counterfactual outcome trajectories, which can be especially beneficial over long horizons. In practice, this can have several benefits: the incremental effects are interpretable, and, the blip effects can be reused to predict treatment effects for new treatment sequences without re-computation, which allows for identifying optimal treatment sequences through offline evaluation. $\Rightarrow$~\textit{Our work is located in the third stream and contributes to research on SNMMs.}


However, neural implementations of SNMMs are missing. The closest effort in this direction is \textbf{Dynamic DML} by \cite{Lewis.2021}; however, this approach has several \textit{limitations}: \textcolor{BrickRed}{\textbf{\textit{(i)}}}~The original implementation is based on linear models. While, in principle,  other machine learning models \textit{could} be used, this is impractical because essentially \textit{a different machine model must be estimated for each time step}, so the complexity \textit{grows linearly with the prediction horizon}. \textcolor{BrickRed}{\textbf{\textit{(ii)}}}~Dynamic DML uses a sequential minimization scheme that does not scale to large datasets. Hence, their training follows an \textit{iterative} procedure, while our second stage later is \textit{end-to-end}. \textcolor{BrickRed}{\textbf{\textit{(iii)}}}~Dynamic DML is unable to process the full, accumulating history of a patient. Its architecture requires a fixed-size input, preventing it from conditioning on patient data (clinical records) that grows over time. Yet, building a neural framework for SNMMs is non-trivial. One major obstacle is that the SNMM formulation inherently prevents end-to-end learning due to its sequential scheme. For the same reason, Dynamic DML cannot be directly adapted. As a remedy, we propose a new \textit{double optimization trick} to break the sequential dependence and enable
gradient-based training. This approach allows us to address \textcolor{BrickRed}{\textbf{\textit{(i)}}}--\textcolor{BrickRed}{\textbf{\textit{(iii)}}}.

\begin{wrapfigure}{r}{0.5\textwidth}
    \centering
    \includegraphics[width=1.0\linewidth]{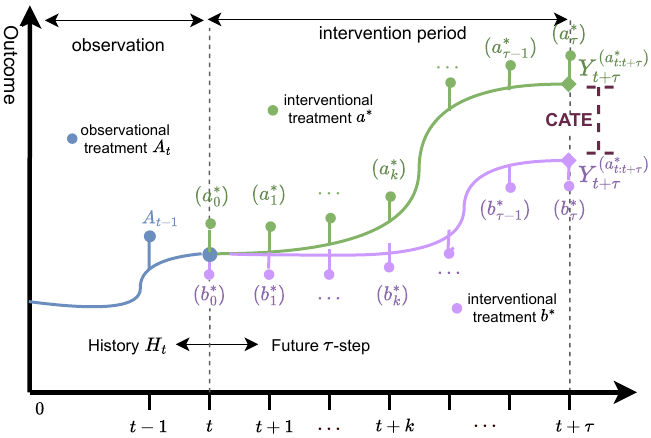}
    \vspace{-0.3cm}
    \caption{\textbf{CATE over time.} Trajectories of potential outcomes under two interventional sequences $a^*_{t:t+\tau},b^*_{t:t+\tau}$ given the shared observed history $H_t$. The difference between the two curves is the CATE over time.}
    \label{fig:CATE-trajectory}
\end{wrapfigure}

Here, we propose \textbf{\method}, the first \emph{neural} framework to estimate CATE over time by leveraging the blip function from SNMMs. \method decomposes the joint effect of treatment sequences over time into localized, time-specific blip effects, which enables more tractable and stable learning. As a result, our \method adjusts for time-varying confounding and is thus \emph{unbiased}. To do so, a key novelty is that we propose a new \textit{double optimization trick} to break the sequential dependence in SNMMs and allow for gradient-based training. Our \method is built on top of sequential neural networks (e.g., LSTMs, transformers) to capture complex temporal dependencies. For this, it employs a two-stage architecture: Stage~1 models the probability of time-varying treatments and mean outcomes conditioned on a patient's history, while Stage~2 reformulates $g$-estimation \cite{Robins.1994, Robins.2004} as a risk minimization task to directly learn the blip functions. 

Our \method has several further strengths for medical practice: \textbf{(i)}~The estimated blip effects are \textit{interpretable}, so that clinicians can understand the incremental gain from treatment. \textbf{(ii)}~The loss function in \method is \emph{Neyman-orthogonal}, which makes  the estimates robust against model misspecification. \textbf{(iii)}~The learned blip effects are reusable, which allows clinicians to predict outcomes under novel treatment sequences \emph{without re-computation}. This offers an efficient approach for offline evaluation of different therapeutic strategies and thus to search for optimal treatment sequences. Formally, at inference time, \method can identify the optimal treatment sequence within just one forward pass. This is unlike other methods, which typically require either re-training \cite{Hess.2024, Li.2021} or multiple forward passes due to exhaustive search \cite{bica.2020, Lim.2018, Melnychuk.2022}. 


Our \textbf{contributions} are three-fold:\footnote{Code is available at \href{https://github.com/mHaoruikk/DeepBlip}{https://github.com/mHaoruikk/DeepBlip}.} (\textbf{1})~We introduce the first neural framework to predict CATE over time via the SNMM framework. (\textbf{2})~Our framework is carefully tailored to medical practice by benefiting from robustness due to Neyman-orthogonality and efficient offline evaluation. (\textbf{3})~ We conduct extensive experiments across multiple medical datasets to demonstrate that our \method is effective and also robust across long time horizons.

\section{Related Work\protect\footnote{We provide an extended related work in Appendix~\ref{app:more-related-works}}}  \label{sec:related-work}

\textbf{Estimating CATE in the static setting:} There has been extensive research in estimating CATE with neural networks in the static setting \cite[e.g.,][]{Alaa.2017, Curth.2021, Kuenzel.2019, Shalit.2017, Wager.2018, yoon.2018, Kuenzel.2019}. However, these methods are aimed at static settings and thus struggle with medical datasets such as electronic health records, where patient histories are recorded \emph{over time} and give rise to \emph{time-varying confounding}. 

\textbf{Estimating ATE over time:} One line of work has developed methods for the ATE over time \cite{Frauen.2023, Shirakawa.2024}. However, the ATE captures only population-level effects and thus overlooks differences in treatment effectiveness across patients. In contrast, we focus on the CATE, which provides a more granular, individualized estimate of treatment outcomes, which is highly relevant for personalized medicine \cite{Feuerriegel.2024}.

\textbf{Estimating CATE over time:} There are several \emph{neural} methods for this task\footnote{There have been some attempts to use non-parametric models \cite{Schulam.2017, soleimani.2017, Xu.2016}, yet these approaches impose strong assumptions on the outcomes, and their scalability is limited. For these reasons, we focus on neural methods, which offer better flexibility and scalability for complex, high-dimensional medical data.}, which can be broadly categorized into different streams (see Table~\ref{tab:related-works}):

$\bullet$\,\emph{No proper adjustment for confounding and thus bias:} Some methods for CATE over time fail to properly adjust for time-varying confounding, which leads to estimates that are \emph{biased}. Here, prominent examples are counterfactual recurrent network (\textbf{CRN}) \cite{bica.2020} and the causal transformer (\textbf{CT}) \cite{Melnychuk.2022}. These methods attempt to alleviate time-varying confounding via balanced representations. However, balancing was originally designed for reducing finite-sample estimation variance and \emph{not} for mitigating confounding bias \cite{Shalit.2017}. Hence, such methods act as heuristics without a theoretical justification. The difficulty of enforcing balanced representations may even introduce further confounding bias \cite{Melnychuk.2024}. Unlike these methods, our \method allows for proper adjustments and is thus unbiased.

$\bullet$\,\emph{Non-SNMM-based adjustment strategies:} Other neural methods build on frameworks from statistics such as marginal structural models (MSMs) \cite{Orellana.2010, Robins.2000} and G-computation \cite{Robins.1986, Robins.1999}. Examples are: \circlednumblue{1}~\textbf{G-Net} \cite{Li.2021} and G-transformer (\textbf{GT}) \cite{Hess.2024}, which are both based on G-computation and thus compute nested conditional expectations over time. Hence, these methods require the entire counterfactual outcome trajectories and thus model the full data-generating process of covariates and outcomes, which becomes exponentially more complex as time horizons grow. \circlednumblue{2}~MSMs-based methods like \textbf{R-MSNs} \cite{Lim.2018} and the \textbf{DR-learner} (for time-varying settings) \cite{Frauen.2025} use inverse propensity weighting (IPW) to re-weight outcomes as in a randomized control trial. In time-varying settings, the propensity is a multiplication of a sequence of probabilities, which has known to become unstable when the horizon is large. In sum, methods from both streams can often become \emph{unstable} over long time horizons (different from SNMMs that break the CATE into localized effects at each time step).

\begin{table}[htbp]
\centering
\tiny
\resizebox{0.9\textwidth}{!}{%
   \begin{tabular}{ll
      >{\columncolor{shallowyellow}}c
      >{\columncolor{shallowyellow}}c
      @{\hspace{10pt}} 
      >{\columncolor{shallowblue}}c
      >{\columncolor{shallowblue}}c}
    \toprule
    \textbf{Stream} & \textbf{Method} & \multicolumn{2}{c}{\textbf{Methodological advantage}} & \multicolumn{2}{c}{\textbf{Benefits for medical practice}} \\
    \cmidrule(lr){3-4} \cmidrule(lr){5-6}
     & & {Unbiased}  & {Neural} & {Orthogonal}  & {Offline efficiency} \\
    \midrule
    \multirow{2}{*}{W/o adjustment} & CRN \cite{bica.2020} &  \xmark &  \cmark &  \xmark &  \cmark \\
    
    & CT \cite{Melnychuk.2022} & \xmark &  \cmark &  \xmark & \cmark \\

    \midrule
    
    \multirow{2}{*}{G-computation} & G-Net \cite{Li.2021} & \cmark  & \cmark & \xmark  &  \xmark\\
    
    & GT \cite{Hess.2024} & \cmark &  \cmark &  \xmark & \xmark \\

    \midrule
    
    \multirow{2}{*}{MSNs} &  R-MSNs \cite{Lim.2018} & \cmark & \cmark & \xmark &    \xmark \\

    & DR-learner \cite{Frauen.2025} & \cmark & \cmark & \cmark &  \xmark \\

    \midrule
   \multirow{2}{*}{SNMMs} & Dynamic DML \cite{Lewis.2021} & \cmark &  \xmark &  \cmark &   \cmark\\
    & \textbf{\method (\emph{ours})} & \cmark &  \cmark &  \cmark &   \cmark \\
    \bottomrule
    \end{tabular}
}
\vspace{0.5cm}
\caption{\textbf{Methods for learning CATE over time with their strengths and weaknesses}.}
\label{tab:related-works}
\end{table}

$\bullet$\,\emph{SNMM-based adjustment strategies:} SNMMs \cite{Robins.1994, Robins.2004} offer a principled framework for estimating CATE over time by directly estimating incremental treatment effects via so-called blip functions. In principle, SNMMs could address both of the above limitations; however, SNMMs are only an abstract theoretical foundation, \textbf{\emph{not}} an off-the-shelf model that can be directly applied. Early implementations \cite{Robins.1994, Robins.2004} relied on strong parametric assumptions (e.g., linearity) and were limited to short time horizons. 

Despite the theoretical appeal of SNMMs, a neural implementation is missing. Closest to our work is Dynamic DML \cite{Lewis.2021}, which builds upon SNMMs but still has key \textit{limitations}. The original implementation is provided for \textit{linear} models, which cannot capture complex, non-linear relationships. In principle, the framework in \cite{Lewis.2021}  is model-agnostic and \textit{could} be used along with other deep learning models. Yet, this is impractical because essentially \textit{a different network must be estimated for each time step}, so the complexity \textit{grows linearly with the prediction horizon}. Because of this, extending Dynamic DML into a scalable neural framework is \textit{non-trivial:} the sequential g-estimation of SNMM (as used by Dynamic DML) is iterative and essentially prevents efficient gradient-based and thus end-to-end optimization. As a remedy, we later propose a new \textit{double optimization trick}. 


\textbf{Research gap:} To the best of our knowledge, there is \underline{no} neural implementation of SNMMs. For this, we introduce a new neural architecture together with a new \textit{double optimization trick} to leverage the blip function from SNMMs for CATE estimation over time.

\section{Problem Formulation} \label{sec:problem}

\textbf{Setup:} We follow the standard setup \cite{bica.2020, Frauen.2025, Hess.2024, Melnychuk.2022} for estimating CATE over time $t \in \{1,2, \ldots, T\} \subset \mathbb{N}$ given by: (1)~the target outcome $Y_t \in \mathbb{R}$; (2)~time-varying covariates $X_t \in \mathcal{X} \subset\mathbb{R}^{d_x}$; and (3)~treatment $A_t \in  \mathcal{A} \subset\mathbb{R}^{d_a}$ that can be either discrete or continuous. We assume w.l.o.g. that the static features (e.g., age, sex) are included in the covariate.

\textbf{Notation:} To simplify notation, we use overlines to denote the full sequence of a variable (e.g., $\overline{X}_t = (X_1, \ldots, X_{t})$). We refer to a sequence of variables that starts at $t$ and ends at $t+\tau$ via $A_{t:t+\tau} = (A_t,A_{t+1}, \ldots, A_{t+\tau})$. We use the lowercase letter to denote a realization of a random variable (e.g., $X_t = x_t$). We use an asterisk $*$ to indicate a constant quantity (e.g., a fixed treatment $a_t^*$). We denote the patient history by $H_t=(\overline{X}_{t}, \overline{A}_{t-1}, \overline{Y}_{t-1})$. 

\textbf{CATE estimation over time:} We build upon the potential outcome framework \cite{Rubin.1972} for the time-varying setting \cite{Robins.1999}. We aim to estimate the CATE over time between two treatment sequences for a given patient history, i.e., 
\begin{equation}\label{eq:cate-def}
    \mathbb{E} \left[ Y_{t+\tau}^{(a^*_{t:t+\tau})} - Y_{t+\tau}^{(b^*_{t:t+\tau})} \;\middle|\; H_t=h_t \right], \quad 0\leq t\leq T-\tau ,
\end{equation}
where $Y_{t+\tau}^{(a^*_{t:t+\tau})}$ and $Y_{t+\tau}^{(b^*_{t:t+\tau})}$ represent the $\tau$-step-ahead potential outcomes under interventions $\intv(A_{t:t+\tau}=a^*_{t:t+\tau})$ and $\intv(A_{t:t+\tau}=b^*_{t:t+\tau})$, respectively (see Appendix~\ref{app:preliminaries} for a formal definition of potential outcomes and interventions).

\textbf{Identifiability:} We make the following identifiability assumptions \cite{ Petersen.2014, Robins.2000} that are standard in the time-varying setting \cite{bica.2020,  Lim.2018, Melnychuk.2022, Li.2021}: (1)~\emph{Consistency}: The potential outcome under the intervention by the observed treatment equals the observed outcome, namely, $Y_{t}^{(A_t)} = Y$. (2)~\emph{Overlap}: Given an observed history $H_t=h_t$, if $\Pr(H_t=h_t) > 0$, then any possible treatment has a positive probability of being received: $\forall a \in \mathcal{A}$, $\Pr(A_{t}=a\mid H_t=h_t) > 0 $. (3)~\emph{Sequential ignorability}: The potential outcome under an arbitrary intervention is independent of the treatment assignment conditioned on the history, i.e., $Y_{t}^{(a_{t}^*)}  \perp A_t \mid H_t=h_t$.


However, estimating the CATE over time is non-trivial due to \emph{time-varying confounding} \cite{Coston.2020, Hess.2024}. In the time-varying setting, covariates act as confounders because they are influenced by earlier treatments and then affect later treatments and outcomes. This means they cannot be adjusted for na{\"i}vely as in the static setting without introducing bias (see Appendix~\ref{app:time-varying-conf}). In this paper, we address this issue via SNMMs. 

\textbf{Blip function:} SNMMs model the incremental effect of treatments (which are called ``blips'') at time $t+k$ on the mean outcome at $t+\tau$, given observed patient history $H_t$ \cite{Vansteelandt.2014}. These ``blips'' accumulate over time into the total treatment effect and thus allow to rigorously adjust for time-varying confounding \cite{Robins.2004} (see Thm~\ref{thm:identification-1} below). Formally, let $d$ be a treament policy. The blips are defined via a \textbf{blip function} \cite{Robins.1994, Robins.2004}:
\begin{align} \label{eq:blip-function}
\gamma_{t,k}\bigl(\bar{x}_{t+1:t+k},\bar{a}_{t:t+k}\,;h_t\bigr) \;=\;
\mathbb{E} \Bigl[& Y_{t+\tau}^{(a_{t:t+k},\;d_{t+k+1:t+\tau})} - Y_{t+\tau}^{(a_{t:t+k-1},\;0,\;d_{t+k+1:t+\tau})}
  \; \nonumber \\
  &\Big|\; A_{t:t+k} = a_{t:t+k}, X_{t+1:t+k} = x_{t+1:t+k}, H_t=h_t \Bigr].
\end{align}
Intuitively, the blip function $\gamma_{t,k}$ isolates the causal effect of each treatment decision \emph{locally}. By isolating the local effect of each treatment, this formulation avoids modeling the full outcome trajectory, making SNMMs often stable over long horizons. 

\begin{theorem}{\textbf{Adjustment via blip functions} (Theorem 3.1 from \cite{Robins.2004})} \label{thm:identification-1}
    Given a policy $d$ between time $t$ and $t+\tau$, the following identification holds under the sequential ignorability assumption in Eq.~\eqref{eq:sequential-ignorability}:
    \begin{equation} \label{eq:theory-po-identification}
        \mathbb{E}\!\Bigl[
            Y^{(d)} \;\Big|\; H_t = h_t \Bigr]
            \;=\;
            \mathbb{E}\!\Bigl[ Y \;+\; \sum_{k=0}^{\tau} \rho_{t,k}\bigl(X_{t+1:t+k}, A_{t:t+k};h_t\bigr)
              \;\Big|\; H_t = h_t
        \Bigr],
    \end{equation}
    with
    \begin{equation}
    \rho_{t,k}\bigl(X_{t+1:t+k},A_{t:t+k};h_t\bigr)
    =
    \gamma_{t,k}\Bigl(X_{t+1:t+k},\bigl(A_{t:t+k-1},d_{t+k}\bigr);h_t\Bigr)
    -
    \gamma_{t,k}\bigl(X_{t+1:t+k},A_{t:t+k};h_t\bigr).
    \end{equation}
    
\end{theorem}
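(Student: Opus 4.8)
The plan is to prove the identity by a telescoping argument over a family of ``hybrid'' potential outcomes that interpolate between following the policy $d$ and following the observed treatment, and then to rewrite each increment in terms of the blip functions using sequential ignorability. Concretely, for $k=-1,0,\ldots,\tau$ I would define
\[
V_k \;=\; Y_{t+\tau}^{(A_{t:t+k},\, d_{t+k+1:t+\tau})},
\]
the potential outcome under the regime that applies the observed treatments up to time $t+k$ and then switches to the policy $d$. By convention $V_{-1}=Y_{t+\tau}^{(d_{t:t+\tau})}=Y^{(d)}$ (empty observed prefix) and $V_{\tau}=Y_{t+\tau}^{(A_{t:t+\tau})}=Y$ (the observed outcome, by consistency). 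The telescoping sum then gives the pointwise identity $Y^{(d)}-Y=V_{-1}-V_{\tau}=\sum_{k=0}^{\tau}(V_{k-1}-V_k)$, so after taking $\mathbb{E}[\,\cdot\mid H_t=h_t]$ it suffices to show, for each $k$, that $\mathbb{E}[V_{k-1}-V_k\mid H_t=h_t]=\mathbb{E}[\rho_{t,k}(X_{t+1:t+k},A_{t:t+k};h_t)\mid H_t=h_t]$.

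For a fixed $k$ I would condition on the finer information $\mathcal{F}_{t,k}=(A_{t:t+k},X_{t+1:t+k},H_t)$ and apply the tower property. The term $V_k$ is the potential outcome under the regime appearing in $\gamma_{t,k}(\cdot\,;h_t)$ evaluated at the observed $A_{t:t+k}$, so by the definition of the blip function in Eq.~\eqref{eq:blip-function},
\[
\mathbb{E}[V_k\mid\mathcal{F}_{t,k}]
=\gamma_{t,k}(X_{t+1:t+k},A_{t:t+k};h_t)
+\mathbb{E}\bigl[Y_{t+\tau}^{(A_{t:t+k-1},\,0,\,d_{t+k+1:t+\tau})}\mid\mathcal{F}_{t,k}\bigr].
\]
For $V_{k-1}=Y_{t+\tau}^{(A_{t:t+k-1},\,d_{t+k},\,d_{t+k+1:t+\tau})}$ the intervention at time $t+k$ is set to the policy value $d_{t+k}$ rather than the observed $A_{t+k}$; here sequential ignorability lets me replace the observed $A_{t+k}$ in the conditioning set by the event $A_{t+k}=d_{t+k}$ without changing the conditional mean (both equal the expectation taken against $(A_{t:t+k-1},X_{t+1:t+k},H_t)$ alone), so that $\mathbb{E}[V_{k-1}\mid\mathcal{F}_{t,k}]$ equals $\gamma_{t,k}(X_{t+1:t+k},(A_{t:t+k-1},d_{t+k});h_t)$ plus the same reference term $\mathbb{E}\bigl[Y_{t+\tau}^{(A_{t:t+k-1},\,0,\,d_{t+k+1:t+\tau})}\mid\mathcal{F}_{t,k}\bigr]$.

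The decisive step is the cancellation of the two baseline terms. Both involve the reference potential outcome $Y_{t+\tau}^{(A_{t:t+k-1},\,0,\,d_{t+k+1:t+\tau})}$, whose intervention at time $t+k$ is fixed at the reference level $0$ and is therefore, by sequential ignorability, independent of the realized $A_{t+k}$ given $(A_{t:t+k-1},X_{t+1:t+k},H_t)$; hence its conditional mean is identical whether we condition on the observed $A_{t+k}$ (as for $V_k$) or on $A_{t+k}=d_{t+k}$ (as for $V_{k-1}$). Subtracting, the reference terms vanish and $\mathbb{E}[V_{k-1}-V_k\mid\mathcal{F}_{t,k}]=\rho_{t,k}(X_{t+1:t+k},A_{t:t+k};h_t)$, after which the tower property and summation over $k$ give the claim. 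I expect this baseline cancellation to be the main obstacle, since it is exactly where sequential ignorability must be invoked and where the correct adjustment for time-varying confounding is encoded; care is needed to track which history each conditional expectation is taken against and to justify that switching the conditioning event for a reference-level intervention is legitimate.
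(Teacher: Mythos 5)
Your proposal is correct and takes essentially the same route as the paper's own proof: the paper likewise uses the blip definition together with sequential ignorability to equate $\rho_{t,k}$ with the conditional expectation of $Y^{(A_{t:t+k-1},\,d_{t+k:t+\tau})}-Y^{(A_{t:t+k},\,d_{t+k+1:t+\tau})}$ (your $V_{k-1}-V_k$), cancelling the shared reference term with intervention $0$ at time $t+k$, then applies the tower property and telescopes the sum over $k$ using consistency. The only difference is expository ordering (the paper starts from the blip definitions and telescopes last, you set up the telescoping first), so there is no gap.
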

The theorem demonstrates that SNMMs adjust for time-varying confounding by making local adjustments at each step. Furthermore, if we can correctly estimate the blip functions, the CATE over time estimation will converge to the unbiased causal effect given sufficiently large data.

\section{Our \method Framework} \label{sec:framework}

In this section, we present \method. First, we introduce how we learn the CATE via blip functions using a neural parameterization (Sec.~\ref{sec:method_overview}), then introduce our $L^2$-moment loss (Sec.~\ref{sec:loss}), our model architecture (Sec.~\ref{sec:architecture}), and the training and inference procedure (Sec.~\ref{sec:training_inference}).

\subsection{Learning the CATE via blip functions}
\label{sec:method_overview}

\textbf{Overview:} Our \method leverages Eq.~(\ref{eq:cate-identification}) to adjust for time-varying confounding. Our task thus reduces to estimating the blip functions -- in particular, so-called \emph{blip coefficients} that parametrize the blip functions. However, we do \textbf{not} attempt to estimate the coefficients via the iterative g-estimation. Instead, we optimize a $L^2$-moment loss that directly predicts the blip coefficients and which allows us to estimate Eq.~(\ref{eq:cate-identification}) more efficiently.

\textbf{Parameterization trick:} We first explain how we estimate the CATE via the blip function. For this, we adopt a similar parametrization for the blip function as in \cite{Lewis.2021}, namely, $\gamma_{t,k}\bigl(\bar{x}_{t+1:t+k},\bar{a}_{t:t+k}\,;h_t\bigr) = \psi_{t,k}\bigl(h_t\bigr)' a_{t+k}$, but where $\psi_{t,k}$ is a \textbf{neural network}. \footnote{This formulation is chosen mainly for clarity. In fact, it is readily extended to capture nonlinear treatment interactions by incorporating a high-dimensional, nonlinear feature map $\phi(\cdot)$. The blip function can then be parametrized as $\gamma_{t,k}(h_t)=\psi_{t,k}'(h_t)\phi(a_{t+k})$.} Under identifiability assumptions and the parametrization for $\gamma_{t,k}$ defined above, the CATE of $a^*$ against $b^*$ for any two treatment sequences $a^*, b^*\in\mathbb{R}^{(\tau+1)\cdot d_a}$ is (see \cite{Lewis.2021} for a formal derivation):
\begin{align}\label{eq:cate-identification}
    \mathbb{E}\bigl[Y_{t+\tau}^{(a^*)} - Y_{t+\tau}^{(b^*)} \mid H_t=h_t\bigr] \;=\;
    \sum_{k=0}^{\tau} \psi_{t,k}(h_t)' & \bigl(a_{t+k}^* - b_{t+k}^*\bigr) .
\end{align}
We refer to $\psi_{t,k}(h_t)$ as the conditional \emph{blip coefficients} of the blip function $\gamma_{t,k}$. 

\emph{Why do we need a tailored architecture and learning algorithm?} A key component of our framework is that the function $\psi_{t,k}(h_t)$ is parameterized by a sequential neural network (e.g., LSTM or transformer). This is a crucial difference from traditional SNMMs, which were developed for estimating the ATE over a fixed number of time steps (i.e. $\mathcal{H}_t=\emptyset \land t\equiv0 \land T\equiv \tau$) and where, as a result, blip coefficients are {constants}. These constants are typically estimated through \emph{iteratively} {solving} a set of moment equations via g-estimation \cite{Robins.1994, Robins.2004, Vansteelandt.2014} (see Appendix~\ref{sec:SNMM-theory}). However, such an approach is not compatible with neural network-based learning. In contrast, \method introduces a tailored neural architecture (Sec.~\ref{sec:architecture}) that we can train via gradient-based optimization (Sec.~\ref{sec:loss}).

\subsection{$L^2$-moment loss}
\label{sec:loss}


We adapt the moment-based iterative minimization from \cite{Lewis.2021} to our history-conditioned setting. For $k=\tau, \ldots, k=0$, at each time step $k$, we aim to find the minimizer $\psi_{t,k}^*(\cdot)$, which is a {function that maps the history $h_t$ to the blip coefficients, via
\begin{equation} \label{eq:L2-moment-min-seq}
   \psi_{t,k}^*=\argmin_{\widehat{\psi}_{t,k}(\cdot) \in \boldsymbol{\Phi}_{t,k}}
\mathbb{E}\Bigl[ 
  \Bigl(
    \widetilde{Y}_{t,k} -
    \sum_{j=k+1}^{\tau} \psi_{t,j}^*(h_t)' \widetilde{A}_{t,j,k} -
    \widehat{\psi}_{t, k}(h_t)' \widetilde{A}_{t,k,k}
  \Bigr)^2
\Bigr] ,
\end{equation}
where $\boldsymbol{\Phi}_{t,k}$ is the function space for the blip coefficient predictors and where
\begin{align} \label{eq:residual-def}
    \widetilde{Y}_{t,k} =Y_{t+\tau} -\mathbb{E}\bigl[Y_{t+\tau} \mid H_{t+k}=h_{t+k}\bigr],
    \quad
    \widetilde{A}_{t,j,k} =  A_{t+j} -
  \mathbb{E}\bigl[A_{t+j} \mid H_{t+k}=h_{t+k} \bigr] .
\end{align}
are residuals of the conditional means for $Y$ and $A$, respectively. We name the target as the \textbf{$L^2$-moment loss} for step $k$ and denote it as $\mathcal{L}_k$. Notice that only one $\widehat{\psi}_{t,k}(\cdot)$ is solved by minimizing $\mathcal{L}_k$; hence, in order to solve the entire task (i.e., all the $\widehat{\psi}_{t,k},(0\leq k\leq\tau)$), one would normally need to conduct a series of minimizations \textbf{sequentially} from $k=\tau$ down to $k=0$ such as in Dynamic DML from \cite{Lewis.2021}.

\textbf{Double optimization trick for our $L^2$-moment loss: } In order to find $\psi_{t,k}^*$, all the previous blip predictors $\psi_{t,j}^*, j\geq k$ are required. To avoid solving $\psi_{t,k}^*$ sequentially, we propose a \emph{double optimization trick} that allows \emph{simultaneous} training of all the blip predictors: During each iteration, first, the blip predictor $\widehat{\boldsymbol{\psi}}_{t}$ makes two forward passes to generate two sets of the blip coefficients $\widehat{\boldsymbol{\psi}}^1_{t}(h_t)$ and $\widehat{\boldsymbol{\psi}}^2_{t}(h_t)$. Then, $\widehat{\psi}^2_{t,j}(h_t)$ serves as a fixed, "pseudo-ground-truth" target for the future blip effects in Eq.~(\ref{eq:L2-moment-min-seq}) and is detached from the computation graph. For $k=0,\ldots,\tau$, the adapted $L^2$-moment loss at step $k$ is then given empirically by

\begin{align} \label{eq:adapted-L2-moment-loss}
    \mathcal{L}_{\text{blip}}^k 
     \;=\; 
    \frac{1}{T-\tau} \sum_{t=1}^{T - \tau}
    \sum_{i=1}^n
  \Bigl(
    \widetilde{Y}_{t+k}^i
    -
    \sum_{j=k+1}^{\tau} \widehat{\psi}^{2}_{j}(H_t^i)' \,\widetilde{A}_{t,j,k}^i\
    -
    \widehat{\psi}_{k}^{1}(H_t^i)' \,\widetilde{A}_{t,k,k}^i
  \Bigr)^2.
\end{align}
\textbf{Why is our double optimization trick necessary for scalability?} Existing SNMM methods, like Dynamic DML \cite{Lewis.2021}, rely on a sequential scheme that solves for blip coefficients one at a time. For neural networks, this would require \textit{training a separate model for each of the $\tau+1$ time steps, which is computationally prohibitive}. In contrast, our double optimization trick \textit{bypasses this sequential dependency}, allowing us to instead optimize a single, unified objective: $\mathcal{L}_{\text{blip}}=\sum_{k=0}^\tau \mathcal{L}_{\text{blip}}^k$. This makes our framework significantly more scalable, as a neural version of the sequential approach would multiply the total training time by a factor of at least $\tau+1$ (see Figure~\ref{fig:mimic_runtime}).

We show in Appendix~\ref{app:justification-double-opt} that the double optimization trick is a single-step gradient descent realization of an iterative scheme for the complete blip predictor $\boldsymbol{\widehat{\psi}}$. Under this scheme, all the intermediate blip predictors $\{\widehat{\psi}_{j}\mid (j < \tau)\}$ converge to the true value as long as the final predictor $\widehat{\psi}_\tau$ converges to the true value. Further, the convergence of the final predictor $\widehat{\psi}_\tau$ is guaranteed by the $L^2$-moment loss being the correct objective, since $\mathcal{L}_{\text{blip}}^\tau = \mathcal{L}_{\tau}$. Therefore, all the blip predictors will converge under the double optimization trick. We further perform ablation studies on the double optimization trick in Appendix~\ref{app:ablation} and \ref{app:blip-distribution-exp} to empirically show its effectiveness.

\textbf{Theoretical properties:} Below, we state that our loss satisfies \emph{Universal Neyman-orthogonality} (see \cite{Foster.2023} for a formal definition), which ensures double robustness. This means that the target loss is \emph{robust} against perturbations of the nuisance functions \cite{Chernozhukov.2018, kennedy.2023}.The remark below follows from the theory in \cite{Lewis.2021}, which can be extended to our double optimization trick (see Appendix~\ref{app:proof-neyman} for a formal derivation, respectively).

\begin{remark}
\emph{Our moment loss is Neyman-orthogonal.}
\end{remark}

\subsection{Model architecture}
\label{sec:architecture}

\method works in two stages (see Fig.~\ref{fig:deepblip}): $\bullet$\,\textbf{Stage}~\circlednum{1} (\emph{nuisance network}): models the nuisance functions to estimate the residuals in Eq.~(\ref{eq:residual-def}). $\bullet$\,\textbf{Stage}~\circlednumorange{2} (\emph{blip prediction network}): estimates the blip coefficients given the observed history $h_t$. The neural networks in both stages have a similar structure: (i)~a \emph{sequential encoder} that encodes the observed history $H_t$, and (ii)~multiple \emph{prediction heads} that take the encoded history as input to predict the targets. 


\textbf{Why we need a two-stage design: }
To construct the $L^2$-moment loss defined Eq.~(\ref{eq:L2-moment-min-seq}), we need the variables $\widetilde{Y}_{t,k}$, $\widetilde{A}_{t,j,k}$, defined in Eq.~(\ref{eq:residual-def}) (see Appendix~\ref{sec:SNMM-theory} for details). Based on the definitions, we must compute the \emph{residuals} between the outcome variable and the regressed means \emph{before} we optimize $\mathcal{L}_{\text{blip}}^k$. We thus follow previous literature  \cite{Lewis.2021, Frauen.2025, kennedy.2023} and treat the conditional expectations $\mathbb{E}\bigl[Y_{t+\tau} \mid H_{t+k}=h_{t+k}\bigr]$ and $\mathbb{E}\bigl[A_{t+j} \mid H_{t+k}=h_{t+k} \bigr]$ as \emph{nuisance functions}, so that we first estimate the nuisance function (Stage~\circlednum{1}) and then train the blip prediction network to minimize the $L^2$-moment loss (Stage~\circlednumorange{2}).

\begin{figure}
    \centering
    \includegraphics[width=0.8\linewidth]{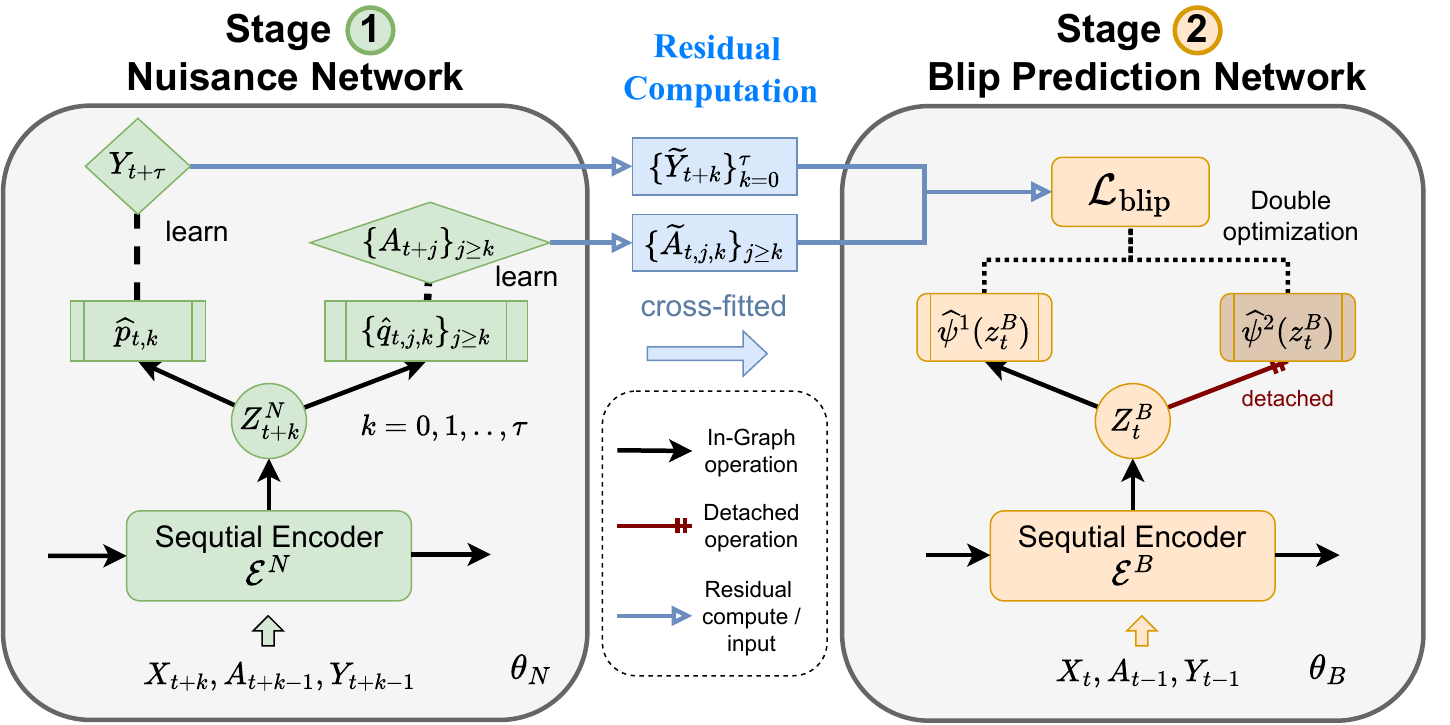}
    \caption{\textbf{Neural architecture of the two-stage \method framework}.}
    \label{fig:deepblip}
\end{figure}

\textbf{Neural backbone:} Our \method is flexible and allows for different neural backbones (e.g., LSTM or transformer). These necessary to capture the patient history: We notice that the networks at both Stage \circlednum{1} and \circlednumorange{2} take the history variable $H_t=(\overline{X}_t. \overline{A}_{t-1}, \overline{Y}_{t-1}) \in \mathcal{H}_t$ as input. Hence, both stages can be written as a function $f:\cup_{t=1}^{T} \mathcal{H}_{t} \rightarrow \mathbb{R}^{c}$, where $c$ is the number of outputs. However, $\mathrm{dim}(H_t)$ varies over time, which makes $H_t$ not suitable as a direct input to a neural network. A standard way to handle this is by using a sequential model to iteratively take the inputs $(X_t, A_{t-1}, Y_{t-1})$ and then maintain a vector $Z_{t} \in \mathbb{R}^{d_z}$ with fixed dimension that encodes all the necessary information \cite{Hess.2024, Lim.2018, Melnychuk.2022, Li.2021}. Here, we thus use LSTMs and transformers (see details of the architectures in Appendix~\ref{app:implementation}). Finally, we stress that each stage uses a \emph{separate} encoder: $\mathcal{E}_{\theta_N}^N$ for Stage~\circlednum{1}, and $\mathcal{E}_{\theta_B}^B$ for Stage~\circlednumorange{2} ($N$ for \textbf{N}uisance and $B$ for \textbf{B}lip) with different model weights $\theta_N$ and $\theta_B$.

\underline{\textbf{Stage} \circlednum{1}\textbf{: Nuisance network}}. The nuisance network $(\mathcal{E}_{\theta_N}^N, \{\text{gp}^k_{\theta_N}\}_{k=0}^\tau, \{\text{gq}^{j,k}_{\theta_N}\}_{0\leq k \leq j \leq \tau})$ consists of a seqential encoder $\mathcal{E}_{\theta_N}^N$ and a collection of prediction heads $\{\text{gp}^k_{\theta_N}\}_{k=0}^\tau, \{\text{gq}^{j,k}_{\theta_N}\}_{0\leq k \leq j \leq \tau}$. The nuisance networks are responsible for computing the following nuisance functions:
\begin{align}
    p_{t,k}(h_{t+k}) \;:=\; & \mathbb{E}\bigl[Y_{t+\tau} \mid H_{t+k}=h_{t+k}\bigr],
    \quad 1\leq t \leq T-\tau, 0\leq k \leq \tau\\
    q_{t,j,k}(h_{t+k}) \;:=\; & \mathbb{E}\bigl[A_{t,j} \mid H_{t+k}=h_{t+k}\bigr], 1\leq t \leq T-\tau, 0\leq k\leq j\leq \tau
\end{align}
For a patient with history $H_t$ and subsequent covariates $X_{t+1:t+k}, A_{t:t+k-1}$, we proceed as follows: First, the encoder $\mathcal{E}_{\theta_N}^N$ processes history $H_{t+k}$ to produce a representation  $Z_{t+k}^N=\mathcal{E}_{\theta}^N(H_{t+k})$. Then, the prediction heads receive $Z_{t+k}^N$ to compute the regressed outcomes for the nuisance functions via:
\begin{align} \label{eq:stage-1-model}
    \text{gp}^k_{\theta_N}(Z_{t+k}^N) = \widehat{p}_{t,k}(H_{t+k}),\quad \text{gq}^{j,k}_{\theta_N}(Z_{t+k}^N) = \widehat{q}_{t,j,k}(H_{t+k}) \quad \text{for } k=0,\ldots\tau \text{ and } k \leq j \leq \tau 
\end{align}
 where $Z_{t+k}^N = \mathcal{E}_{\theta_N}^N(H_{t+k})$. Third, the residuals are computed via
\begin{align} \label{eq:residual-compute}
    \widetilde{Y}_{t,k} \approx Y_{t+\tau} - \text{gp}^k_{\theta_N}(Z_{t+k}^N),\quad 
    \widetilde{A}_{t,j,k} \approx A_{t+j} - \text{gq}^{j,k}_{\theta_N}(Z_{t+k}^N)
\end{align}
\underline{\textbf{Stage} \circlednumorange{2}\textbf{: Blip prediction network}}.
The blip prediction network $\bigl(\mathcal{E}_{\theta_B}^B, \{\text{gb}_{\theta_B}^k\}_{k=0}^{\tau}\bigr)$ is responsible for predicting the blip coefficients $\boldsymbol{\psi_t}(h_t)=(\psi_{t,0},\ldots,\psi_{t,\tau})\in \mathbb{R}^{r(\tau+1)}$ as described in Eq.~(\eqref{eq:L2-moment-min-seq}). Here, we proceed as follows. First, the sequential encoder $\mathcal{E}^{B}_{\theta}$ (B for \textbf{B}lip) processes the patient's history $H_t$ into a representation $Z_{t}^{B} = \mathcal{E}^{B}_{\theta}(h_t)$. Then, for each horizon $k \in \{0,1, \ldots ,\tau\}$, the prediction head $\text{gb}_{\theta_B}^k$ maps $Z_t^B$ onto the corresponding blip coefficient:
\begin{align} \label{eq:blip-compute}
\widehat{\psi}_{t,k}(H_t)=\text{gb}_{\theta_B}^k(Z_{t}^B) \sim \psi_{t,k}(H_t)\in \mathbb{R}^r 
\quad \text{where } Z_{t}^B = \mathcal{E}_{\theta_B}^B(H_{t}). 
\end{align}

\subsection{Training and Inference}
\label{sec:training_inference}

Taken together, the training procedure of \method now follows two steps (see Fig.~\ref{fig:deepblip}): (1)~train the nuisance networks and compute the residuals, and (2)~ train the blip prediction network. In contrast, inference with \method is highly efficient as it involves \textit{only} the second-stage blip prediction network. Details are below. We provide the pseudocode in Alg.~\ref{alg:train} and Alg.~\ref{alg:inference} in the appendix.

\textbf{Step }\circlednum{1}\textbf{: Train nuisance network}. The nuisance network is trained to predict nuisance functions $p_{t,k}(h_{t+k})$ and  $q_{t,j,k}(h_{t+k})$ simultaneously. Since $p_{t,k}(h_{t+k})$ is the conditional expectation of real outcome $Y_{t+\tau} \in \mathbb{R}$, we use the squared error loss $\mathcal{L}_{p} = \frac{1}{(T-\tau)(\tau+1)}\sum_{t=1}^{T-\tau}\sum_{k=0}^{\tau} (\text{gp}^{k}_{\theta_N}(Z_{t+k}^N)-Y_{t+\tau})^2$. For $q_{t,j,k}(h_{t+k})$, which denotes the treatment response, we proceed for the $i$-th treatment in $A_{t+j} \in \mathbb{R}^{d_a}$ as follows. If $(A_{t+j})_i$ is a continuous variable, then we apply the squared loss: $\mathcal{L}_{q,i}=\frac{2}{(T-\tau)(\tau+1)(\tau+2)}\sum_{t=1}^{T-\tau}\sum_{0\leq k \leq j \leq \tau}\bigl(\text{gq}^{k,j}_{\theta_N}(Z_{t+k}^N)_i - (A_{t+j})_i)\bigr)^2$. If $(A_{t+j})_i$ is a binary variable, then we apply the binary cross entropy loss $\mathcal{L}_{q,i} = \frac{2}{(T-\tau)(\tau+1)(\tau+2)}\sum_{t=1}^{T-\tau}\sum_{0\leq k \leq j \leq \tau}\text{BCE}\bigl((A_{t+j})_i\;,\; \text{gq}^{k,j}_{\theta_N}(Z_{t+k}^N)_i\bigr)$. 
For categorical variables with more than 2 classes, we preprocess the variable into a one-hot vector of binary variables. Since the network predicts these targets simultaneously, we update the parameter $\theta_N$ by backpropagating the sum of all the losses discussed above, i.e., $\mathcal{L}_{N} = \mathcal{L}_p + \frac{1}{d_a} \sum_{i=1}^{d_a}\mathcal{L}_{q,i}$

\textbf{Step }\circlednumorange{2}\textbf{: Train blip prediction network.}
 After having trained the nuisance network, we freeze its parameters and then compute the residuals of each sample as in Eq.~(\ref{eq:residual-def}) for the $L^2$-moment loss. To accelerate the training process, we adopt the double optimization trick from above: For $k = 0,\ldots,\tau$., we perform two forward passes that create two predictions $\widehat{\boldsymbol{\psi}}^{1}_{t,k}(H_t)$ and $\widehat{\boldsymbol{\psi}}^{2}_{t,k}(H_t)$. The latter is then \emph{detached } from the computation graph before feeding into the adapted $L^2$-moment loss at step $k$. The final loss target is then given by: $\mathcal{L}_{\text{blip}}=\sum_{k=0}^\tau \mathcal{L}_{\text{blip}}^k$.

\begin{remark}
\emph{Under standard assumptions, the output of our \method has a mean squared error guarantee:
\begin{align} \label{eq:mse-rainstantiationte}
\max_{t \leq T -\tau}\max_{k\in\{0, \ldots ,\tau\}} \mathbb{E}\left[\left\|\widehat{\psi}_{t,k} - \psi_{t,k}\right\|_{2,2}^2\right] = O\left(r^2 \delta_{n}^{2}\right), \quad \delta_n^2 \propto \frac{\log\log(n)}{n}
\end{align}
Details are in Appendix~\ref{app:proof-MSE-rate}).}
\end{remark}

\textbf{Inference at runtime: } Once trained, our \method predicts the CATE over time (i.e., $\mathbb{E}[Y_{t+\tau}^{(a^*)} - Y_{t+\tau}^{(b^*)} \mid H_t=h_t]$) through only the blip prediction network:
\begin{align}
    \sum_{k=0}^{\tau}\text{gb}_{\theta_B}^k(z_t^B)'(a_k^* - b_k^*), \quad \text{where} \quad z_{t}^B=\mathcal{E}_{\theta_B}^B(h_t)
\end{align}

\textbf{Efficient offline evaluation:} Once we have estimated the blip coefficients, then we can instantly identify the treatment sequence $a^*$ with the best effect compared to the baseline $b^*$ (e.g., a treatment sequence with no interventions). The reason is that the blip coefficients do \emph{not} depend on treatments. Hence, our \method is much more efficient for evaluating the personalized effects of different treatment sequences compared to existing methods that require re-computation \cite{bica.2020, Lim.2018, Melnychuk.2022} or even re-training \cite{Hess.2024, Li.2021}. This is highly relevant in personalized medicine where clinicians and patients jointly reason about different treatment strategies \cite{Feuerriegel.2024}.

\textbf{Implementation details.} We instantiate our \method with a transformer architecture (see Appendix~\ref{app:implementation}). We also provide a variant based on an LSTM, which, despite the simpler architecture, is still highly competitive (see Appendix~\ref{app:ablation}).

\section{Experiments} \label{sec:experiments}

Our experiments primarily serve two purposes: (i)~we aim to show that our \method performs adequate adjustment for time-varying confounding, which is a key property of SNMMs; and (ii)~we aim to show that our \method is especially robust for longer time horizons, which is a natural strength of SNMMs. With (i) and (ii), we are able to demonstrate our \method is an effective neural framework for SNMMs.

\begin{wrapfigure}{r}{0.50\textwidth}
    \centering
    \includegraphics[width=1.\linewidth]{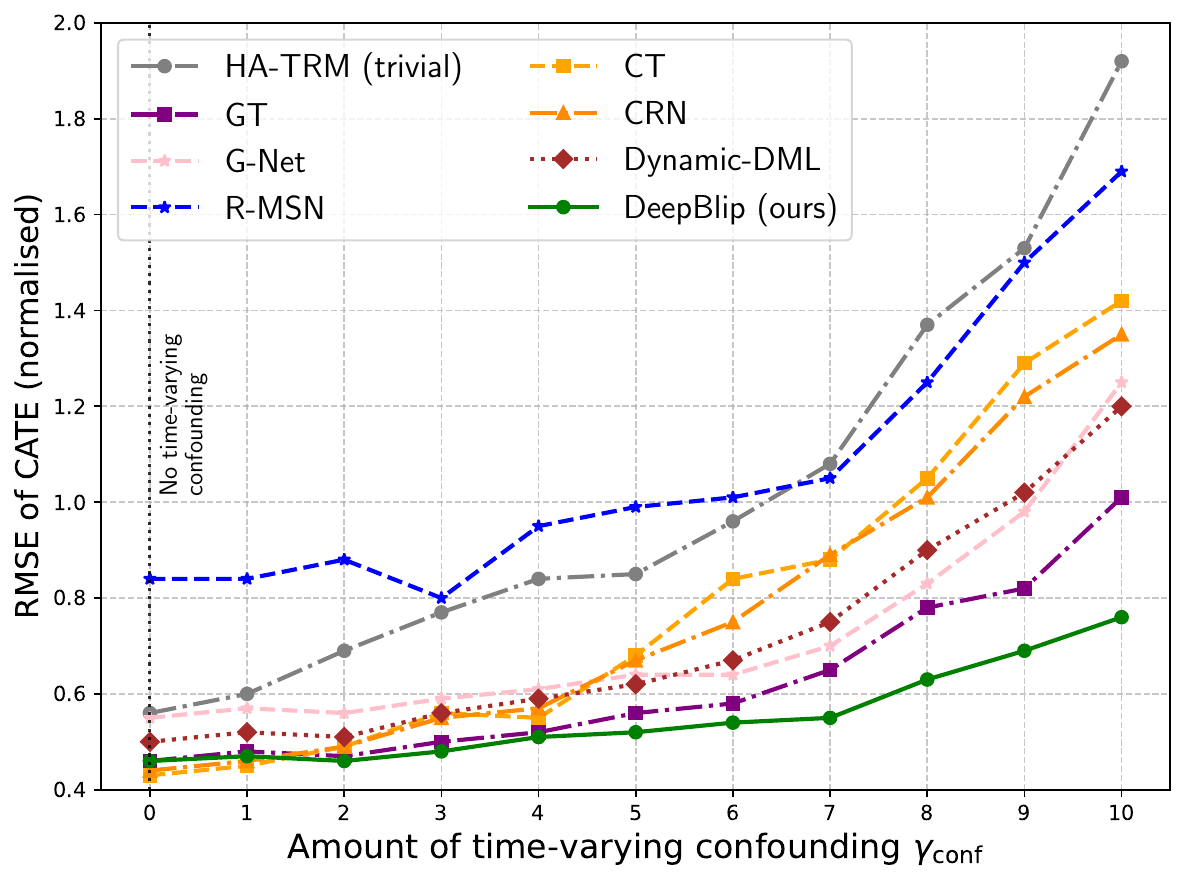}
    \vspace{-0.5cm}
    \caption{\textbf{Results for tumor growth dataset.} Normalized RMSE (averaged over 5 runs) of CATE predictions against ground-truth over growing confounding. Here: $\tau=2$}
    \label{fig:tumor_rmse}

\end{wrapfigure}

\textbf{Baselines:} We demonstrate the performance of our \method against key baselines from the literature (see Table~\ref{tab:related-works})\footnote{We also compare against Dynamic DML \cite{Lewis.2021}; In principle, it does not qualify as a standard baseline because it cannot handle patient histories $H_t$ of varying lengths. Nonetheless, to provide a thorough comparison, we adapt it by training a separate regression model for each individual time step $t$.} for the task of estimating CATE (or conditional average potential outcomes) on medical datasets. Descriptions of the baseline methods are available in Appendix~\ref{app:baseline-methods}. We select the HA-PI-learner from \cite{Frauen.2025} instantiated by transformer (named \textbf{HA-TRM}) as a na{\"i}ve baseline. We provide additional implementation details -- including architecture choices, training procedures, and hyperparameter tuning -- in Appendix~\ref{app:implementation}. To ensure a fair comparison, all methods -- including baselines -- use the \textbf{same} neural backbone architecture, so any performance differences can be attributed to the respective learning frameworks. All results are averaged over 5 runs.

\textbf{Ablations:} We include ablation studies of both the architecture and the double optimization trick in Appendix~\ref{app:ablation}. We also validate our component-wise blip coefficient estimates in Appendix~\ref{app:blip-distribution-exp}.

\subsection{Tumor growth dataset}

\textbf{Setting:} We use the pharmacokinetic-pharmacodynamic tumor growth dataset \cite{Geng.2017},  which is commonly used for benchmarking CATE methods over time \cite{bica.2020, Hess.2024, Lim.2018, Melnychuk.2022, Li.2021}. The dataset describes the time-varying effects of chemotherapies and radiotherapies on cancer volumes. The treatment assignments depend on previous outcomes, subject to time-varying confounding. The amount of confounding is controlled by the simulation parameter $\gamma_{\text{conf}}$. Details are in Appendix~\ref{app:tumor-growth}.

\textbf{Results:} Figure~\ref{fig:tumor_rmse} shows the average RMSE of CATE against increasing confounding $\gamma_{\text{conf}}$ and under $\tau=2$. Our \textbf{\method} outperforms all baselines for $\gamma_{\text{conf}} \geq 2$. This matches the purpose of our method to deal with time-varying confounding. More importantly, our \method achieves large performance gains under strong confounding levels ($\gamma_{\text{conf}} > 6$). This highlights that our \method is robust against time-varying confounding by providing adequate adjustment for time-varying confounding. 

We could further make the following observations: \circlednum{1}~The MSM-based \textbf{R-MSN} performs poorly across all confounding levels and even has a higher RMSE than HA-LSTM for $\gamma_{\text{conf}} \leq 6$. This aligns with the inherent instability of inverse propensity weighting in MSMs, which was the motivation for our method. \circlednum{2}~Baselines like \textbf{CT} and \textbf{CRN} that use balanced representation (in orange) are ineffective. This is expected as balanced representations were originally developed for reducing finite-sample estimation variance and \textit{not} for proper adjustment (see the original work \cite{Shalit.2017} on balanced representations for a discussion), because of which these baselines are known to be biased. \circlednum{3} G-computation-based methods like \textbf{G-Net} and \textbf{GT} show slightly lower RMSE for $\gamma_{\text{conf}} \leq 1$ but still perform significantly worse than our \method for $\gamma_{\text{conf}} \geq 6$. We attribute this to the fact that the learning is unstable, which we empirically verify in the following by varying the prediction horizon $\tau$. \circlednum{4}~Dynamic DML is competitive, thereby showing that SNMMs are a suitable modeling framework for this task, while our \method is better by a clear margin.

\subsection{MIMIC-III dataset}

\textbf{Setting:} Next, we evaluate the performance for longer prediction horizons $\tau$. We build upon MIMIC-III \cite{Johnson.2016}, a widely used benchmark for evaluating CATE over time \cite{bica.2020, Hess.2024, Melnychuk.2022}. Following previous literature \cite{Hess.2024, Melnychuk.2022, Schulam.2017}, we extract patient vitals from MIMIC-III and then simulate the patient outcome over time with the mixed dynamics of exogenous dependency, endogenous dependency, and treatment effects combined. Treatments are assigned based on previous outcomes and patient vitals, which again, introduces time-varying confounding (see Appendix~\ref{app:mimic-dataset}). 

\textbf{Results:} Table~\ref{tab:mimic-rmse} shows the average RMSE (with std. dev.) over five different runs with $\gamma_{\text{conf}}=1$ and varying prediction horizon $\tau$. First, our \textbf{\method} consistently achieves lower RMSE compared to other baselines for $\tau \geq 3$. Of note, the performance gain from \method becomes larger as the prediction horizon $\tau$ increases. For $\tau = 10$, \method achieves $\sim 34\%$ performance gain compared to the second-best model (here: GT). This highlights that our \method is temporally stable over longer horizons.

\begin{figure}[htbp] 
    \centering
    \begin{minipage}[b]{0.70\textwidth}
        \centering
        \resizebox{\textwidth}{!}{%
        \begin{tabular}{l|ccccccccc}
            \toprule
            & $\tau = 3$ & $\tau = 4$ & $\tau = 5$ & $\tau = 6$ & $\tau = 7$ & $\tau = 8$ & $\tau = 9$ & $\tau = 10$ \\
            \midrule
            \text{HA-LSTM (na{\"i}ve)} & 0.89 $\pm$ 0.03 & 0.97 $\pm$ 0.04 & 1.02 $\pm$ 0.10 & 1.42 $\pm$ 0.20 & 1.92 $\pm$ 0.40 & 2.57 $\pm$ 0.44 & 2.58 $\pm$ 0.56 & 3.11 $\pm$ 0.72 \\
            \text{R-MSNs}  & 0.98 $\pm$ 0.17 & 1.12 $\pm$ 0.21 & 1.25 $\pm$ 0.28 & 1.65 $\pm$ 0.57 & 2.25 $\pm$ 1.02 & 2.85 $\pm$ 1.18 & 3.20 $\pm$ 1.42 & 3.55 $\pm$ 1.50 \\
            \text{CRN}  & 0.66 $\pm$ 0.11 & 0.82 $\pm$ 0.12 & 1.05 $\pm$ 0.22 & 1.22 $\pm$ 0.35 & 1.43 $\pm$ 0.33 & 1.62 $\pm$ 0.42 & 1.83 $\pm$ 0.43 & 2.04 $\pm$ 0.54 \\
            \text{CT} & 0.64 $\pm$ 0.12 & 0.79 $\pm$ 0.11 & 1.01 $\pm$ 0.18 & 1.18 $\pm$ 0.33 & 1.77 $\pm$ 0.52 & 1.85 $\pm$ 0.49 & 1.99 $\pm$ 0.63 & 1.98 $\pm$ 0.60\\
            \text{G-Net}  & 0.58 $\pm$ 0.08 & 0.73 $\pm$ 0.12 & 1.05 $\pm$ 0.25 & 1.38 $\pm$ 0.40 & 1.75 $\pm$ 0.60 & 2.15 $\pm$ 0.80 & 2.55 $\pm$ 0.90 & 3.12 $\pm$ 1.05 \\
            \text{GT} & 0.52 $\pm$ 0.02 & 0.63 $\pm$ 0.08 & 0.75 $\pm$ 0.17 & 0.85 $\pm$ 0.13 & 0.95 $\pm$ 0.26 & 1.10 $\pm$ 0.34 & 1.25 $\pm$ 0.37 & 1.50 $\pm$ 0.45 \\
            \text{Dynamic DML} & 0.72 $\pm$ 0.15 & 0.88 $\pm$ 0.23 & 1.12 $\pm$ 0.27 & 1.34 $\pm$ 0.68 & 1.67 $\pm$ 0.43 & 1.98 $\pm$ 0.55 & 2.33 $\pm$ 0.57 & 2.46 $\pm$ 0.72 \\
            \text{\method (\textbf{ours})} & \textbf{0.50 $\pm$ 0.12} & \textbf{0.58 $\pm$ 0.16} &\textbf{ 0.64 $\pm$ 0.19} &\textbf{ 0.75 $\pm$ 0.21} &\textbf{ 0.86 $\pm$ 0.24} & \textbf{0.89 $\pm$ 0.27} & \textbf{0.95 $\pm$ 0.28} & \textbf{0.98 $\pm$ 0.32} \\
            \midrule
            Improvement    & \textcolor{darkgreen}{3.8\%} & \textcolor{darkgreen}{8.9\%} & \textcolor{darkgreen}{14.7\%} & \textcolor{darkgreen}{11.7\%} & \textcolor{darkgreen}{9.5\%} & \textcolor{darkgreen}{19.1\%} & \textcolor{darkgreen}{24.0\%} & \textcolor{darkgreen}{34.6\%} \\
            \bottomrule
        \end{tabular}%
        }
        \captionof{table}{\textbf{MIMIC-III with longer time horizons $\tau$.} Normalized RMSE (mean $\pm$ std. dev. over 5 runs) for $\tau$-step-ahead CATE estimation on the MIMIC-III dataset. We highlight the relative improvement over the best-performing baseline.}
        \label{tab:mimic-rmse}
    \end{minipage}%
    \hfill 
    \begin{minipage}[b]{0.28\textwidth} 
        \centering
        \includegraphics[width=\linewidth]{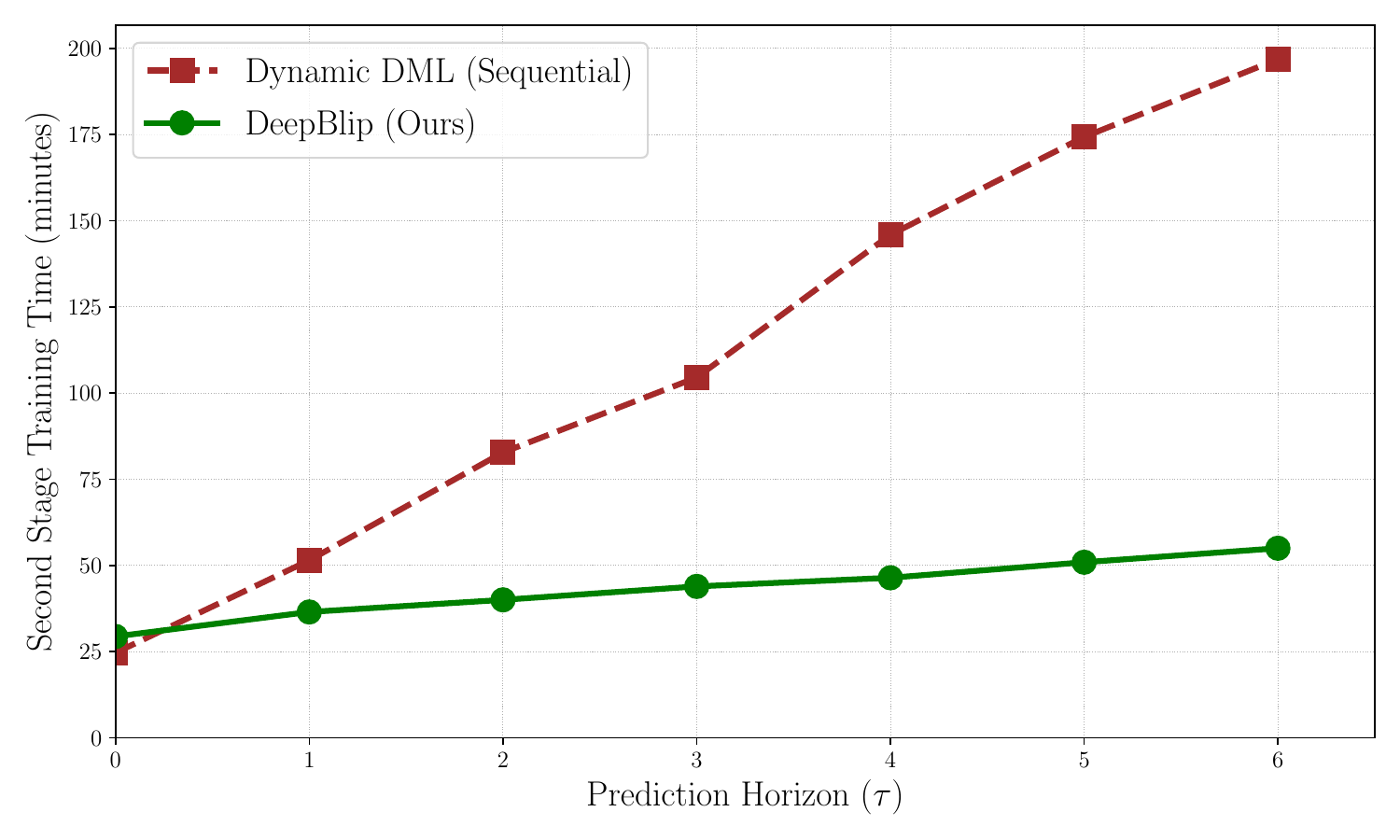} 
        \caption{\textbf{Scalability of Dynamic DML vs. \method.} Shown is the training time.}
        \label{fig:mimic_runtime}
    \end{minipage}
\end{figure}

We further make the observations that all baselines either struggle with high-dimensional covariates or become unstable as $\tau$ increases. \circlednum{1}~The MSM-based method (\textbf{R-MSNs}) exhibits the highest variance across all $\tau$, indicating that it struggles with high-dimensional propensity modeling and becomes unstable over time with increasing standard deviation. The reason is that inverse propensity weighting produces unstable weights. \circlednum{2}~Methods like \textbf{CRN} and \textbf{CT} perform better than baselines with high variance like R-MSNs due to the way they handle high-dimensional covariates. However, both \textbf{CRN} and \textbf{CT} are known to be biased and thus inferior to GT and our \method. \circlednum{3} G-computation-based methods (i.e., \textbf{G-Net} and \textbf{GT}) achieve a lower RMSE than the other baselines due to proper adjustments, but still are not as stable as our method. This is because G-computation accumulates error over time due to modeling nested expectations. \circlednum{4} The performance gain of \method over Dynamic DML is large, which can be attributed to the complexity of the dataset, especially in light of the long time windows.

\textbf{Scalability of DeepBlip: } To demonstrate the computational advantage of our framework, we tested the second-stage training time of both \textbf{Dynamic DML} and our \textbf{\method} over increasing $\tau$. The results are in Figure~\ref{fig:mimic_runtime}: we observed a large difference in scalability. The training time for Dynamic DML grows linearly with $\tau$, due to its sequential scheme, which requires training a separate model at \textit{each} time step. In contrast, the runtime for \method remains \emph{nearly constant}, which confirms the efficiency of our double optimization trick and thus demonstrates that our \method overcomes the scalability bottleneck of SNMMs.

\textbf{Ablations on the transformer architecture:} To further verify our framework's robustness, we replace the transformer architecture by an LSTM \cite{Hochreiter.1997} for our \method and other baselines in the ablation study (see Appendix~\ref{app:ablation}). Even with a simpler architecture, our model is highly competitive and outperforms some of the transformer-based baselines (see Appendix~\ref{app:ablation}), underscoring the effectiveness of our framework.

\section{Conclusion}
We introduced \method, the first neural framework for SNMMs that overcomes the scalability issues of prior works via a double optimization trick. By using blip functions to decompose total effects into localized increments, our method provides unbiased estimates that are stable over long horizons. Experiments across clinical datasets confirm that \method achieves state-of-the-art performance, offering a powerful and scalable tool for personalized medicine.

\section*{Acknowledgements }
This work has been supported by the German Federal Ministry of Education and Research (Grant: 01IS24082).




\bibliography{bib_list}

\newpage
\appendix
\section{Preliminaries}\label{app:preliminaries}
In this section, we briefly introduce the preliminaries of causal inference.
\subsection{Causal modeling and treatment effect}

\textbf{Structural causal model:} A structural causal model is a tuple $\mathcal{M} = (\mathcal{U}, \mathcal{V}, \mathcal{F}, P)$ where:
\begin{enumerate}
    \item $\mathcal{U}$ is a set of exogenous variables, which are not caused by any other variables, namely the unknown background factors or simply noises. Each $U_{i} \in \mathcal{U}$ has a value domain denoted $\dom(U)$. $\mathcal{U}$ is defined on a probability space $(\Omega, \mathcal{B}, P)$.
    \item $\mathcal{V}$ is a set of endogenous variables that are determined within the model from other variables. The domain of $V \in \mathcal{V}$ is $\dom(V)$.
    \item $\mathcal{F}$ is a set of structural functions: $\mathcal{F}=\{f_{i}:\dom(\text{Pa}_{i})\times \dom(U_i) \rightarrow \dom(V_i)\mid V_i \in \mathcal{V}\}$, where $\text{Pa}_i$ is the set of endogenous variable that directly influences $V_i$, and $U_i \in \mathcal{U}$ is the exogenous variable that directly influences $V_i$. Here, $f_i$ is a deterministic function that describes the structural causal mechanism for generating $V_i$: $V_i = f_i(\text{Pa}_i, U_i)$.

\end{enumerate}
Furthermore, the causal relationships specified by $\mathcal{F}$'s $\{\text{Pa}_i\mid V_i \in \mathcal{V}\}$ induce a directed graph $\mathcal{G} = (\mathcal{V}, E)$, for which nodes correspond to the endogenous variables and $(V_i, V_j) \in E  \iff V_i \in Pa_j$. It is important to note that the causal graph $\mathcal{G}$ must be a \textbf{directed acyclic graph (DAG)}. 

For an SCM, the joint probability distribution $P$ induces the probability on the whole variable set. For fixed values of the variables in $\mathcal{U}$, the values of the endogenous set are determined through $\mathcal{F}$. Let $P_M$ be the joint probability distribution over $\mathcal{V}$, then for any value $v \in \prod_{V_i \in \mathcal{V}} \dom(V_i)$:
\begin{equation}
    P_M(V=v) = \Pr(\{u \in \prod_{U_i\in \mathcal{U}}\dom(U_i) \mid \mathcal{F}(u) = v\})
\end{equation}

\textbf{Intervention: } An intervention can be defined under the context of SCM as forcing a subset of variables (here we assume endogenous) $X \subset \mathcal{V}$ to take a specific value $x \in \prod_{V_i\in X}\dom(V_i)$, independent of the original induced probability.\footnote{A general intervention could be defined as setting some variables to follow a specific \textbf{distribution} \cite{Peters.2017}; however, we follow a common practice to assume the intervention means forcing a fixed value.} This action simulates the process of a controlled experiment or a hypothetical scenario.

An intervention, denoted as $\intv(X=x)$, creates a \textbf{new} SCM $\mathcal{M}_x = (\mathcal{U}, \mathcal{V}, \mathcal{F}_x, P)$, which is a mutation from the original SCM $\mathcal{M}$, where:
\begin{enumerate}
    \item The set of endogenous $\mathcal{V}$ and exogenous $\mathcal{U}$ remain unchanged with the same probability space $(\Omega, \mathcal{B}, P)$.
    \item For each $V_i \in X$, the old structural equation $V_i = f_i(\text{Pa}_i, U_i)$ is replaced by a degenerated constant equation $V_i \equiv x_i$.
    \item For all the other endogenous variables, the mapping $f_i$ stays unchanged, but their distributions may adjust. 
\end{enumerate}
The intervention $\intv(X=x)$ induces a new probability distribution $P_{M_x}$. The interventional distribution of a variable, say $Y \in \mathcal{V}$, is denoted as $\Pr(Y\mid \intv(X=x))$ or $\Pr(Y^{\intv(X=x)})$. Under a certain context, we could directly use $\Pr(Y^{(x)})$ for simplicity. 

\textbf{Potential outcome and treatment effect:} Intuitively, an intervention is like cutting the causal links to $X$ from the original diagram, setting the value to $x$, and then observing how the system responds. We emphasize that $\Pr(Y\mid \intv(X=x)) \neq \Pr(Y \mid X = x)$, which is just a conditional distribution under the original SCM.

A \textbf{potential outcome (PO)} of a variable $Y \in \mathcal{V}$ is the value under a hypothetical scenario, where we make an intervention $\intv(X=x)$, given a specific realization of $U = u$. We denote this as $Y_{x}(u)$. When $U \sim P$, the random variable $Y_x(U)$ is just $Y^{(\intv(X=x))}$, which follows the distribution $\Pr(Y\mid \intv(X=x))$. The average potential outcome (APO) is the expectation of PO, i.e., $\mathbb{E}[Y^{\intv(X=x)}]$. The \textbf{conditional average potential outcome (CAPO)} is the expectation of the random variable $Y_x(U)$ conditioned on some covariates $W=w$, where $W \subset \mathcal{V}$. This conditioning on $W=w$ could be traced back to a posterior distribution on $U$, namely $\Pr(U | W=w) := P_w$, then the CAPO is just $\mathbb{E}_{U_w \sim P_w}[Y_x(U_w)]$ or, equivalently, $\mathbb{E}[Y^{(x)} \mid W = w]$.

The \textbf{average treatment effect (ATE)} is the expectation difference between two POs under intervention $x_1$ and $x_2$ respectively: $\mathbb{E}[Y^{(x_1)} - Y^{(x_2)}]$. The \textbf{condtional average treatment effect (CATE)} is then the difference between two CAPOs: $\mathbb{E}[Y^{(x_1)} - Y^{(x_2)} \mid W=w]$.

Importantly, the potential outcomes are counterfactual variables in intervened SCM, while treatment effects are the contrast between these potential outcomes. Within this rigorous framework, we are able to clearly define the goal for our work on estimating CATE over time conditioned on patient history $H_t$ as in Sec.~\ref{sec:problem}.

\subsection{SCM for CATE estimation over time}
\label{app:time-varying-conf}

We formalize the setting for estimating CATE over time with the following structural causal model $\mathcal{M}$:  
\begin{enumerate}
    \item Exogenous variables: $\mathcal{U} = \{U_{X_t}, U_{A_t}, U_{Y_t}\}_{t=1}^T$, where $U_{X_t}, U_{A_t}, U_{Y_t}$ are mutually independent noise.
    \item Endogenous variables: $\mathcal{V} = \{X_t, A_t, Y_t\}_{t=1}^T$. Denote $H_t = \{X_s\}_{s=1}^{t} \cup \{A_j\}_{j=1}^{t-1} \cup  \{Y_i\}_{i=1}^{t-1}$ as the history up until time $t$. 
    \item Structural functions $\mathcal{F}$: for $t \in \{1, \ldots, T\}$, these are given by
        \begin{equation}
            \begin{aligned} \label{eq:scm-cate}
            X_t &= f_{X_t}(H_{t-1}\cup\{A_{t-1}\}, U_{X_t})  \\
            A_t &= f_{A_t}(H_t, U_{A_t}) \\
            Y_t &= f_{Y_t}(H_t, A_t, U_{Y_t}) 
            \end{aligned}
        \end{equation}  
\end{enumerate}
These equations generate the full observed patient trajectory in the following temporal order: $(X_1 \rightarrow A_1\rightarrow Y_1\rightarrow \ldots\rightarrow X_T\rightarrow A_T\rightarrow Y_T)$.

A possible instantiation of the temporal graph under the given structural causal model is presented in Figure~\ref{fig:causal-diagram}
\begin{figure}
    \centering
    \includegraphics[width=0.85\linewidth]{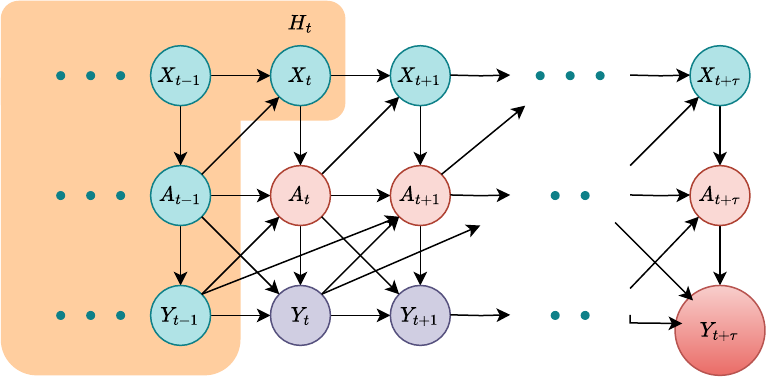}
    \caption{An example of the causal diagram for the SCM in the setting of estimating CATE over time. The variable sequences are truncated at $t+\tau$ (maximum length is $T$). Exogenous variables $\mathcal{U}$ are hidden. }
    \label{fig:causal-diagram}
\end{figure}

\textbf{Time-varying treatment and potential outcomes: }
A \textbf{time-varying intervention} is intervention on a continuous sequence of treatment variables $\intv(A_{t:t+\tau} = a_{t:t+\tau})$, which modifies the structural equations for $A_{t}, \ldots, A_{t+\tau}$:  
\begin{equation}
\forall k \in \{t, \ldots, t+\tau\}: \quad A_k \equiv a_k.
\end{equation} 
This induces a new SCM $\mathcal{M}^{(a_{t:t+\tau})}$, which generates the potential outcome:  
\begin{equation}
Y_{t+\tau}^{(a_{t:t+\tau})} = f_{Y_{t+\tau}}\left(H_{t+\tau}^{(a_{t:t+\tau})}, X_{t+\tau}^{(a_{t:t+\tau})}, a_{t+\tau}, U_{Y_{t+\tau}}\right),
\end{equation}  
where $H_{t+\tau}^{(a_{t:t+\tau})}, X_{t+\tau}^{(a_{t:t+\tau})}$ are recursively computed under the intervention using structural equations defined in \ref{eq:scm-cate}.  

\subsection{Identification assumptions} 
\label{app:identification-assumptions}

The structural functions, although they might be unspecified, already contain rich information about the data-generating process. It is easy to verify that under the specified data-generating process and the mutual independence between the exogenous variables, the following two conditions are satisfied:
\begin{enumerate}
    \item \textbf{Consistency}: If $A_{t:t+\tau} = a_{t:t+\tau}$ is the observed treatment sequence for a given patient $u$, then 
    $Y_{t+\tau}^{(a_{t:t+\tau})}(u) = Y(u)$, or, equivalently $Y_{t+\tau}^{(A_{t:t+\tau})}=Y_{t+\tau}$. This means that, when we intervene with the observed treatment, the potential outcome equals the observed outcome.
    \item \textbf{Sequential ignorability}: The potential outcome under a fixed intervention is conditionally independent of the treatment assignment:
    \begin{equation}
    \label{eq:sequential-ignorability}
    Y^{(a_{t})}_{t} \perp A_{t} \mid H_t, \quad \forall a_{t}, 1 \leq t \leq T .
    \end{equation}
    This ignorability assumption implies that there are no hidden confounders that affect both the outcome and the treatment, which is already decided with the SCM.
    \item \textbf{Sequential overlap}: If $\Pr(H_t=h_t) > 0$, then:
    \begin{equation}
    \label{eq:positivity}
     \Pr(A_{t} = a_{t} \mid H_t = h_t) > 0 \quad \forall a_{t} \in \mathcal{A}_t, 1\leq t \leq T.
    \end{equation}
\end{enumerate}
These three assumptions together allow the use of observational data to compute causal quantities, such as potential outcome over time and treatment effect over time \cite{Robins.2000}. Without these assumptions, identification would be impossible. While these assumptions themselves are untestable, they could be justified under an appropriate study design and domain knowledge.

\textbf{Time-varying confounding: }
Estimating treatment effect over time poses a significant challenge due to the existence of time-varying confounding \cite{Coston.2020}, which differentiates the task from the static setting ($\tau=0$). The underlying reason is that the future time-varying confounders, such as $X_{t+1:t+tau}$ or $Y_{t:t+\tau-1}$, are \textit{unobserved} during the inference phase. As a result, the na{\"i}ve adjustment with the history variable $H_t$ is not sufficient for the backdoor criteria \cite{Peters.2017} and, therefore, leads to biased estimation:
\begin{align}
    \text{for } \tau > 0: \quad\mathbb{E}\left[Y_{t+\tau}^{(a_{t:t+\tau})} \mid H_t=h_t\right] \neq \mathbb{E}\left[Y_{t+\tau} \mid A_{t:t+\tau}=a_{t:t+\tau}, H_t=h_t\right].
\end{align}


\newpage
\section{Theory of structural nested mean models} \label{app:theory}
In this section, we show by strict causal reasoning how to use the SNMMs to estimate CATE over time. The theory consists of the foundations of SNMMs \cite{Robins.2004} and a specific realization \cite{Lewis.2021}, which we adopt for our method. We show relative mathematical derivations for completeness. First, in Sec.~\ref{app:proof-neyman}, we show that our loss is Neyman orthogonal. Finally, we explain the mean squared error rate for the estimated blip coefficients, which is proved in \cite{Lewis.2021}. 

 o simplify the notation, we denote $Y = Y_{t+\tau}$. And we use $\underline{W}_{t+k}$ to denote $\underline{W}_{t+k:t+\tau}$ for any variable $W \in \mathcal{V}$.

\subsection{SNMM}\label{sec:SNMM-theory}
\textbf{Identification: } The blip functions can be accumulated to identify the potential outcomes, as summarized by the following theorem (taken from \cite{Robins.2004}). We show the proof for completeness.

\begin{theorem}{\textbf{Adjustment via blip functions} (Theorem 3.1 from \cite{Robins.2004})} \label{lem:identification}
    Given a policy $d$ between time $t$ and $t+\tau$, the following identification holds under the sequential ignorability assumption in Eq.~\eqref{eq:sequential-ignorability}:
    \begin{equation}
        \mathbb{E}\!\Bigl[
            Y^{(d)} \;\Big|\; H_t = h_t \Bigr]
            \;=\;
            \mathbb{E}\!\Bigl[ Y \;+\; \sum_{k=0}^{\tau} \rho_{t,k}\bigl(X_{t+1:t+k}, A_{t:t+k};h_t\bigr)
              \;\Big|\; H_t = h_t
        \Bigr]
    \end{equation}
    with
    \begin{equation}
    \rho_{t,k}\bigl(X_{t+1:t+k},A_{t:t+k};h_t\bigr)
    =
    \gamma_{t,k}^{}\!\Bigl(X_{t+1:t+k},\bigl(A_{t:t+k-1},d_{t+k}\bigr);h_t\Bigr)
    -
    \gamma_{t,k}^{}\bigl(X_{t+1:t+k},A_{t:t+k};h_t\bigr).
    \end{equation}
\end{theorem}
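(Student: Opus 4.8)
The plan is to prove the identity by a telescoping argument that swaps the observed treatments for the policy treatments one time step at a time, identifying each swap with a single blip contrast via sequential ignorability. First I would introduce the hybrid potential outcomes
\begin{equation*}
V_j \;=\; Y^{(A_{t:t+j-1},\, d_{t+j:t+\tau})}, \qquad j = 0, 1, \ldots, \tau+1,
\end{equation*}
which follow the observed treatment up to time $t+j-1$ and then switch to the policy $d$. By construction $V_0 = Y^{(d)}$, while consistency gives $V_{\tau+1} = Y^{(A_{t:t+\tau})} = Y$. Telescoping then yields
\begin{equation*}
Y^{(d)} - Y \;=\; \sum_{j=0}^{\tau} \bigl(V_j - V_{j+1}\bigr),
\end{equation*}
where $V_j$ and $V_{j+1}$ differ only in the treatment applied at time $t+j$ (the policy value $d_{t+j}$ versus the observed $A_{t+j}$), both agreeing on $A_{t:t+j-1}$ before and on $d_{t+j+1:t+\tau}$ after.

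Second, I would isolate each increment as a difference of two blip-type contrasts by adding and subtracting the potential outcome that sets the treatment at $t+j$ to the baseline $0$, namely $Y^{(A_{t:t+j-1},\,0,\,d_{t+j+1:t+\tau})}$. The two baseline terms cancel, so each $V_j - V_{j+1}$ becomes the difference between the contrast of the policy value $d_{t+j}$ and the contrast of the observed value $A_{t+j}$, both measured against the same $0$-reference. Taking the conditional expectation given $(A_{t:t+j}, X_{t+1:t+j}, H_t)$, the observed-value contrast is exactly $\gamma_{t,j}(X_{t+1:t+j}, A_{t:t+j}; h_t)$ by definition, since the intervention value at $t+j$ matches the conditioning value. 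For the policy contrast the intervention value at $t+j$ is $d_{t+j}$, which need not equal the observed $A_{t+j}$; here I would invoke sequential ignorability so that the counterfactual (whose $t+j$-treatment is fixed by intervention) is independent of the observed $A_{t+j}$ given the history through $t+j$, allowing me to replace the observed conditioning value by $d_{t+j}$ and thereby recover $\gamma_{t,j}(X_{t+1:t+j}, (A_{t:t+j-1}, d_{t+j}); h_t)$. Subtracting gives
\begin{equation*}
\mathbb{E}\bigl[V_j - V_{j+1} \mid A_{t:t+j}, X_{t+1:t+j}, H_t\bigr] = \rho_{t,j}\bigl(X_{t+1:t+j}, A_{t:t+j}; h_t\bigr).
\end{equation*}

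Finally, since $H_t$ is coarser than the conditioning $\sigma$-field at each step, I would apply the tower property to push the conditioning down to $H_t$, obtaining $\mathbb{E}[V_j - V_{j+1} \mid H_t] = \mathbb{E}[\rho_{t,j} \mid H_t]$, and sum over $j = 0, \ldots, \tau$ to conclude. The main obstacle is the ignorability step in the second paragraph: one must carefully match each conditioning event to the intervention value appearing in the corresponding potential outcome, and verify that the sequential version of the no-unmeasured-confounders assumption (not merely the single-step statement in Eq.~\eqref{eq:sequential-ignorability}) licenses replacing the observed treatment by the policy value inside the conditional expectation. The remaining ingredients (the telescoping decomposition, the zero-baseline cancellation, and the tower property) are then routine bookkeeping once the increments are correctly identified with the blip contrasts.
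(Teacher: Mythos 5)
Your proposal is correct and takes essentially the same route as the paper's proof: both arguments rest on the hybrid potential outcomes $Y^{(A_{t:t+k-1},\,d_{t+k:t+\tau})}$, the cancellation of the common zero-baseline term, the sequential-ignorability step that lets one exchange conditioning on the observed $A_{t+k}$ with conditioning on $A_{t+k}=d_{t+k}$, the tower property, and the telescoping sum with consistency supplying $Y^{(A_{t:t+\tau})}=Y$. The only difference is the order of presentation---you telescope $Y^{(d)}-Y$ first and then identify each increment with $\rho_{t,k}$, whereas the paper rewrites the two $\gamma_{t,k}$ terms via ignorability first and lets the telescope emerge when summing---and the caveat you flag (needing ignorability for multi-step interventional outcomes, not just the literal one-step statement) is precisely the form the paper invokes in its ``Ignor.'' step.
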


\begin{proof}
By the definition of $\gamma_{t,k}$, we can write $\gamma_{t,k}$ into the following form:
    \begin{align}
        \gamma_{t,k}&\!\Bigl(x_{t+1:t+k},\bigl(a_{t:t+k-1},d_{t+k}\bigr);h_t\Bigr) \nonumber \\
        \stackrel{\text{Def.}}{=}  \mathbb{E}&\left[Y^{\left({a}_{t:t+k-1},\underline{d}_{t+k}\right)} -            Y^{(a_{t:t+k-1}, 0, \underline{d}_{t+k+1})} \mid X_{t+1:t+k} = x_{t+1:t+k}, A_{t:t+k-1} = a_{t:t+k-1}, A_{t+k} = d_{t+k}, H_t=h_t \right] \nonumber\\
        \stackrel{\text{Ignor.}}{=}\mathbb{E}&\left[Y^{\left({a}_{t:t+k-1}, \underline{d}_{t+k}\right)} -            Y^{(a_{t:t+k-1}, 0, \underline{d}_{t+k+1})} \mid X_{t+1:t+k} = x_{t+1:t+k}, A_{t:t+k} = a_{t:t+k}, H_t=h_t \right] \label{eq:t-gamma-1}
    \end{align}
Similarly, wie yield
    \begin{align}
        &\gamma_{t,k}\!\Bigl(x_{t+1:t+k},a_{t:t+k}\bigr);h_t\Bigr) \nonumber \\
        \stackrel{\text{Def.}}{=} &
        \mathbb{E}\left[Y^{\left({a}_{t:t+k}, \underline{d}_{t+k+1}\right)} -            Y^{(a_{t:t+k-1}, 0, \underline{d}_{t+k+1})} \mid X_{t+1:t+k} = x_{t+1:t+k}, A_{t:t+k} = a_{t:t+k}, H_t=h_t \right] \label{eq:t-gamma-2}
    \end{align}
This implies that
\begin{align}
    \rho_{t,k}&\bigl(X_{t+1:t+k},A_{t:t+k};h_t\bigr) \nonumber \\
    =  \gamma_{t,k}\!&\Bigl(X_{t+1:t+k},\bigl(A_{t:t+k-1},d_{t+k}\bigr);h_t\Bigr)
    -
    \gamma_{t,k}\bigl(X_{t+1:t+k},A_{t:t+k};h_t\bigr) \nonumber \\
    \stackrel{\text{By \ref{eq:t-gamma-1}},\ref{eq:t-gamma-2}}{=} \mathbb{E}&\left[Y^{\left({A}_{t:t+k-1}, \underline{d}_{t+k}\right)} -            Y^{(A_{t:t+k}, \underline{d}_{t+k+1})} \mid X_{t+1:t+k}, A_{t:t+k} , H_t=h_t \right]    
\end{align}

\begin{figure}
    \centering
    \includegraphics[width=0.6\linewidth]{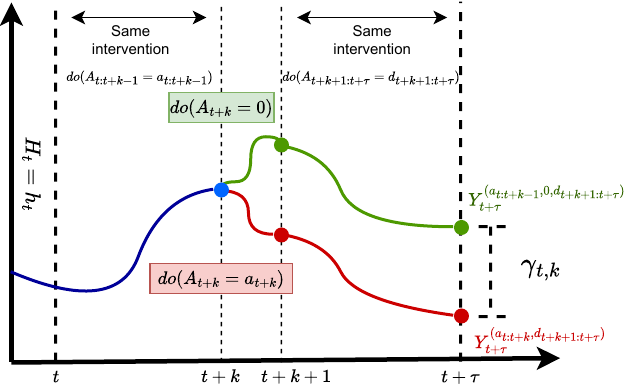}
    \caption{Visualization of the blip function $\gamma_{t,k}$. The blip function quantifies the localized treatment effect at each time step.}
    \label{fig:blip-fucntion}
\end{figure}

By consistency, $Y^{(A_{t:t+\tau})}=Y$, and by the tower law of the conditional expectation, we obtain
\begin{align}
     \mathbb{E}&\Bigl[\rho_{t,k}\bigl(X_{t+1:t+k},A_{t:t+k};h_t\bigr) \mid H_t = h_t\Bigr] \nonumber \\
     = \mathbb{E}&\left[\mathbb{E}\bigl[Y^{\left({A}_{t:t+k-1}, \underline{d}_{t+k}\right)} -            Y^{(A_{t:t+k}, \underline{d}_{t+k+1})} \mid X_{t+1:t+k}, A_{t:t+k} , H_t=h_t \bigr]\mid H_t=h_t\right] \nonumber \\
     = \mathbb{E}&\left[Y^{\left({A}_{t:t+k-1}, \underline{d}_{t+k}\right)} -            Y^{(A_{t:t+k}, \underline{d}_{t+k+1})} \mid H_t=h_t \right] \label{eq:t-rho-decompose}
\end{align}
Summing up Eq.~\eqref{eq:t-rho-decompose} for $k=0,1, \ldots ,\tau$, we observe that all the middle terms cancel off:
\begin{align}
    \mathbb{E}&\left[ \sum_{k=0}^{\tau} \rho_{t,k}\bigl(X_{t+1:t+k},A_{t:t+k};h_t\bigr) \mid H_t = h_t\right] \nonumber \\
    = \mathbb{E}&\left[ \sum_{k=0}^{\tau}Y^{({A}_{t:t+k-1}, \underline{d}_{t+k})} -            Y^{(A_{t:t+k}, \underline{d}_{t+k+1})}  \mid H_t = h_t\right] \nonumber \\
    = \mathbb{E}&\left[ Y^{(d)} - Y\mid H_t = h_t\right]
\end{align}
This is equivalent to Eq.~\eqref{eq:theory-po-identification}.
\end{proof}
Lemma \ref{lem:identification} shows that, in order to estimate the potential outcomes, it suffices to estimate the blip functions. Now the identification problem is reduced to correctly estimating these blip functions. Suppose $\boldsymbol{\gamma}^\dagger_{t}(\cdot, \cdot; h_t)=(\gamma_{t,0}^\dagger, \gamma_{t,1}^\dagger, \ldots, \gamma_{t,\tau}^\dagger)$ is an \textbf{arbitrary} candidate function that takes $x_{t+1:t+k}, a_{t:t+k}$ as input and is subject to the following conditions:
\begin{equation} \label{eq:gamma-requirement}
\forall k \in [0,\tau],\quad \gamma^\dagger_{t,k}(x_{t+1:t+k}, (a_{t:t+k-1}, 0);h_t) = 0.
\end{equation}
Let
\begin{equation} \label{eq:H-def}
    H_{t,k}(\boldsymbol{\gamma}_t^\dagger) = Y + \sum_{j=k}^{T} \left(\gamma_{t,k}^\dagger(X_{t+1:t+j}, (A_{t:t+j-1}, d_{t+j})) - \gamma_{t,k}^\dagger(X_{t+1:t+j},A_{t:t+j} )\right)
\end{equation}
Then, with Lemma \ref{lem:identification} and the sequential conditional ignorability, Theorem 3.2 from \cite{Robins.2004} states that the moment restriction below is satisfied. Here, we restate it as a lemma and show the proof for completeness.
\begin{lemma}{\textbf{Moment restriction for blip functions}} \label{lem:moment-restriction}
    When Eq.~\eqref{eq:gamma-requirement} is satisfied, then for arbitrary function $S_{k}$, the following two condictions are equivalent:
    \begin{enumerate}
        \item $\gamma_{t,k}^\dagger(\cdot, \cdot; h_t)$ is the true $\gamma_{t,k}(\cdot, \cdot; h_t)$.
        \item The following moment restriction is satisfied:
        \begin{equation} \label{eq:ideal-moment}
            \mathbb{E}\left[H_{t,k}(\boldsymbol{\gamma}_t^\dagger) (S_k(X_{t+1:t+k},A_{t:t+k}) - \mathbb{E}\left[S_k(X_{t+1:t+k},A_{t:t+k}) \mid X_{t+1:t+k}, A_{t:t+k-1}\right]) \mid H_t = h_t\right] \;=\; 0 .
        \end{equation}
    \end{enumerate}
\end{lemma}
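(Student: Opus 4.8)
The plan is to prove the equivalence through the g-estimation ``blip-down'' identity, with Theorem~\ref{lem:identification} as the engine, reading $H_{t,k}(\boldsymbol{\gamma}_t^\dagger)$ in \eqref{eq:H-def} in the backward-recursion sense in which the higher-order blips $\gamma_{t,j}^\dagger$ ($j>k$) are already correct. First I would establish the key identity: when $\boldsymbol{\gamma}_t^\dagger$ equals the true blips, each summand of $H_{t,k}$ is the $\rho_{t,j}$ of Theorem~\ref{lem:identification}, for which the intermediate steps \eqref{eq:t-gamma-1}--\eqref{eq:t-gamma-2} supply the pointwise representation $\rho_{t,j}=\mathbb{E}\bigl[Y^{(A_{t:t+j-1},\underline{d}_{t+j})}-Y^{(A_{t:t+j},\underline{d}_{t+j+1})}\mid X_{t+1:t+j},A_{t:t+j},H_t=h_t\bigr]$. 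Projecting each such term onto $\sigma(X_{t+1:t+k},A_{t:t+k})$ by the tower rule (valid because $j\ge k$) and summing over $j=k,\dots,\tau$ telescopes the potential-outcome differences, so that, after cancelling the bare term $Y=Y^{(A_{t:t+\tau})}$ by consistency, $\mathbb{E}\bigl[H_{t,k}(\boldsymbol{\gamma}_t)\mid X_{t+1:t+k},A_{t:t+k},H_t=h_t\bigr]=\mathbb{E}\bigl[Y^{(A_{t:t+k-1},\underline{d}_{t+k})}\mid X_{t+1:t+k},A_{t:t+k},H_t=h_t\bigr]$.

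For the implication $1\Rightarrow 2$, write $\mathcal{G}=\sigma\bigl(X_{t+1:t+k},A_{t:t+k-1}\bigr)$ for the information available just before the treatment $A_{t+k}$. Invoking sequential ignorability \eqref{eq:sequential-ignorability} at step $t+k$, the potential outcome $Y^{(A_{t:t+k-1},\underline{d}_{t+k})}$ is independent of $A_{t+k}$ given the history, so the conditional mean computed in the first step is $\mathcal{G}$-measurable, i.e.\ free of $A_{t+k}$. The residualized test function $S_k-\mathbb{E}\bigl[S_k\mid\mathcal{G}\bigr]$ has, by construction, conditional mean zero given $\mathcal{G}$. Conditioning $H_{t,k}(\boldsymbol{\gamma}_t)$ on $\sigma(X_{t+1:t+k},A_{t:t+k})$ (which contains the residual) and then taking an iterated expectation over $\mathcal{G}$, the $\mathcal{G}$-measurable blip-down mean multiplies a mean-zero residual and the product integrates to zero -- for every $S_k$, which is exactly \eqref{eq:ideal-moment}.

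For the converse $2\Rightarrow 1$, I would first record the variational fact that the moment \eqref{eq:ideal-moment} vanishing for \emph{all} $S_k$ is equivalent to $\mathbb{E}\bigl[H_{t,k}(\boldsymbol{\gamma}_t^\dagger)\mid\sigma(X_{t+1:t+k},A_{t:t+k}),H_t\bigr]$ being $\mathcal{G}$-measurable; the nontrivial direction follows by taking $S_k$ equal to this conditional mean itself, so that the residual becomes the difference of the two conditional means and the moment reduces to the conditional $L^2$-norm of its $A_{t+k}$-varying part. Subtracting the true-blip version (also $\mathcal{G}$-measurable by the first step) and using that the blips for $j>k$ already coincide, the difference of conditional means collapses to $\rho_{t,k}^\dagger-\rho_{t,k}$, whose only $A_{t+k}$-dependent piece is $-\bigl(\gamma_{t,k}^\dagger-\gamma_{t,k}\bigr)\bigl(X_{t+1:t+k},A_{t:t+k}\bigr)$. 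Requiring this to be constant in $A_{t+k}$ and evaluating at $A_{t+k}=0$, where both blips vanish by the anchoring condition \eqref{eq:gamma-requirement}, forces $\gamma_{t,k}^\dagger=\gamma_{t,k}$ throughout the support of $A_{t+k}$ -- which overlap \eqref{eq:positivity} guarantees is rich enough to pin the function down.

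I expect the converse to be the principal obstacle. The two delicate points are the variational lemma -- which requires $S_k$ to range over a rich enough class and an $L^2$-projection argument to convert orthogonality into equality of conditional means -- and the final uniqueness step, where it is precisely anchoring \eqref{eq:gamma-requirement} together with overlap that upgrade ``mean-independence from $A_{t+k}$'' into pointwise equality of blip functions. A secondary care point throughout is the role of the intermediate outcomes $Y_{t:t+k-1}$: since sequential ignorability is phrased relative to the full history $H_{t+k}$, the conditioning $\sigma$-algebras must be aligned with $H_{t+k}$ (or the covariate process understood to subsume those outcomes) for the mean-independence claims to be fully rigorous.
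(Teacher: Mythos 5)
Your forward direction ($1\Rightarrow 2$) is, step for step, the paper's own proof: the paper likewise conditions on $\sigma(X_{t+1:t+k},A_{t:t+k})$, invokes the telescoping blip-down identity (its adjustment lemma) to replace $\mathbb{E}\bigl[H_{t,k}(\boldsymbol{\gamma}_t)\mid X_{t+1:t+k},A_{t:t+k}\bigr]$ by $\mathbb{E}\bigl[Y^{(A_{t:t+k-1},\underline{d}_{t+k})}\mid X_{t+1:t+k},A_{t:t+k}\bigr]$, drops $A_{t+k}$ from the conditioning via sequential ignorability, and then kills the product against the mean-zero residual by the tower rule. The genuine difference is the converse: the paper explicitly proves \emph{only} $1\Rightarrow 2$ (``which already conveys the core thought''), whereas you also supply $2\Rightarrow 1$, so your proposal covers the lemma as actually stated while the paper's proof does not. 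Your converse is sound under the two readings you correctly flag as necessary: $H_{t,k}$ must be interpreted in the backward-recursion sense (blips $\gamma^\dagger_{t,j}$ for $j>k$ already correct --- otherwise the single-$k$ equivalence is simply false), and condition 2 must be quantified over \emph{all} $S_k$. Given that, your chain --- the variational step taking $S_k$ equal to the conditional mean itself so the moment becomes the conditional $L^2$-norm of its $A_{t+k}$-varying part, the collapse of the difference of conditional means to $-\bigl(\gamma^\dagger_{t,k}-\gamma_{t,k}\bigr)$, and the use of the anchoring condition at $a_{t+k}=0$ together with overlap to upgrade constancy in $A_{t+k}$ to pointwise equality --- is exactly the missing half, and it is the standard g-estimation uniqueness argument. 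One last point in your favor: your caveat about aligning the conditioning $\sigma$-algebras with the full history $H_{t+k}$ (the intermediate outcomes $Y_{t:t+k-1}$) applies equally to the paper's own derivation, which glosses over the same issue.
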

\begin{proof} We only prove the direction of $1\rightarrow2$, which already conveys the core thought of the moment restriction.

    We denote 
    \begin{equation} 
    R_k(X_{t+1:t+k}, A_{t:t+k}) = S_k(X_{t+1:t+k},A_{t:t+k}) - \mathbb{E}\left[S_k(X_{t+1:t+k},A_{t:t+k}) \mid X_{t+1:t+k}, A_{t:t+k - 1}, H_t = h_t\right] .
    \end{equation}
    Observing the left-hand side (LHS) of Eq.~\eqref{eq:ideal-moment}:
    \begin{align}
        \text{LHS} = &\mathbb{E}\left[H_{t,k}(\boldsymbol{\gamma}_t^\dagger) R_k(X_{t+1:t+k}, A_{t:t+k})\mid H_t = h_t\right] \nonumber \\
        \stackrel{\text{Cond.}}{=} &\mathbb{E}\Bigl[\mathbb{E}\left[H_{t,k}(\boldsymbol{\gamma}_t) R_k(X_{t+1:t+k}, A_{t:t+k})\mid X_{t+1:t+k}, A_{t:t+k}\right] \mid H_t = h_t\Bigr] \nonumber \nonumber\\
        = &\mathbb{E}\Bigl[\mathbb{E}\left[H_{t,k}(\boldsymbol{\gamma}_t)\mid X_{t+1:t+k}, A_{t:t+k}\right]\;\cdot \; R_k(X_{t+1:t+k}, A_{t:t+k})\mid H_t = h_t\Bigr] \nonumber\\
        \stackrel{\text{Lem.}~\ref{lem:identification}}{=}
         &\mathbb{E}\Bigl[\mathbb{E}\left[Y^{(A_{t:t+k-1},\underline{d}_{t+k})}\mid X_{t+1:t+k}, A_{t:t+k}\right]\;\cdot \; R_k(X_{t+1:t+k}, A_{t:t+k})\mid H_t = h_t\Bigr] \nonumber\\
         \stackrel{\text{Ignor.}}{=} 
          &\mathbb{E}\Bigl[\mathbb{E}\left[Y^{(A_{t:t+k-1},\underline{d}_{t+k})}\mid X_{t+1:t+k}, A_{t:t+k-1}\right]\;\cdot \; R_k(X_{t+1:t+k}, A_{t:t+k})\mid H_t = h_t\Bigr] \nonumber\\
         \stackrel{\text{Tower}}{=} 
         &\mathbb{E}\Bigl[\mathbb{E}\left[Y^{(A_{t:t+k-1},\underline{d}_{t+k})}\mid X_{t+1:t+k}, A_{t:t+k-1}\right]\;\cdot \; \mathbb{E}\left[R_k(X_{t+1:t+k}, A_{t:t+k}) \mid X_{t+1:t+k}, A_{t:t+k-1}\right]\mid H_t = h_t\Bigr] \nonumber\\
         \stackrel{\text{Def.}}{=}
         &\mathbb{E}\Bigl[\mathbb{E}\left[Y^{(A_{t:t+k-1},\underline{d}_{t+k})}\mid X_{t+1:t+k}, A_{t:t+k-1}\right]\;\cdot \; 0\mid H_t = h_t\Bigr] \;=\; 0 .
    \end{align}
\end{proof}
To further reduce the complexity of the moment restriction to achieve locally efficient doubly robust estimator (see Section 3.3 from \cite{Robins.2004}), we can substract $H_{t,k}(\gamma^*)$ in Eq.~\eqref{eq:ideal-moment} by its conditional expectation $\mathbb{E}[H_{t,k}(\gamma_t^\dagger)\mid X_{t+1:t+k}, A_{t:t+k-1}]$:
\begin{equation} \label{eq:local-moment}
            \mathbb{E}\Bigl[\bigl(H_{t,k}(\gamma_t^\dagger) - \mathbb{E}[H_{t,k}(\gamma_t^\dagger)\mid X_{t+1:t+k}, A_{t:t+k-1}]\bigr) R_k(X_{t+1:t+k}, A_{t:t+k}) \mid H_t = h_t\Bigr] \;=\; 0
\end{equation}
\textbf{Parametrization of the blip function: }
To achieve parametric rates of the estimation, we use the common semi-parametrization of the blip function\footnote{In our paper, we choose the simplest parametrization for clarity. In fact, the blip functions could have more sophisticated parametrizations, such as $\gamma_{t,k}\bigl(x_{t+1:t+k},a_{t:t+k}\;;h_t\bigr) = \psi_{t,k}(h_t)'\phi_{t,k}(x_{t+1:t+k},a_{t:t+k}\;;h_t)$, where $\phi$ is a nonlinear, him-dimensional feature map \cite{Lewis.2021,Robins.2004}. } \cite{Lewis.2021, Robins.2004}:
\begin{equation} \label{eq:linear-blip-parametrization}
    \gamma_{t,k}\bigl(x_{t+1:t+k},a_{t:t+k}\;;h_t\bigr) = \psi_{t,k}(h_t)'a_{t+k} .
\end{equation}
Here, $\psi_{t,k}$ is a function that maps a given $H_t=h_t$ to $\psi_{t,k}(h_t)$, which is the parameter of the blip function $\gamma_{t,k}\bigl(\cdot,\cdot\;;h_t\bigr)$. Intuitively, these $\psi_{t,k}(h_t)$ works as the coefficients in a linear parametrization. Hence, we call them in our method \textit{blip coefficients} and the mapping $\psi_{t,k}(\cdot)$ \textit{blip coefficient predictor} or \textit{blip coefficient estimator}.

\textbf{Deriving the moment restriction of SNMM: }
We abuse the notation of $H_{t,k}(\gamma_t^*)$ by using the parameter $\psi_{t}$ to represent the $H_{t,k}(\boldsymbol{\gamma}_{t}^{\psi_t})$ as $H_{t,k}(\psi_t)$. For convenience, we introduce the following notation for residuals:
\begin{align}
    \widetilde{A}_{t,j,k} =& A_{t+j} - \mathbb{E}\left[A_{t+j} \mid X_{t+1:t+k}, A_{t:t+k}, H_t = h_t\right], \quad 0\leq k \leq j \leq \tau \label{eq:Qtilde-def}\\
    \widetilde{Y}_{t,k} =& Y -  \mathbb{E}\left[Y \mid X_{t+1:t+k}, A_{t:t+k}, H_t=h_t\right], \quad \quad 0\leq k \leq \tau \label{eq:Ytilde-def}
\end{align}
Then, one can derive the following statement (with definition of $H_{t,k}(\cdot)$ in (\ref{eq:H-def})), which is an adaptation from \cite{Lewis.2021}:
\begin{equation} \label{eq:H-derive}
\mathbb{E}\Bigl[H_{t,k}(\psi_t) - \mathbb{E}[H_{t,k}(\psi_t)\mid X_{t+1:t+k}, A_{t:t+k-1}] \mid H_t=h_t\Bigr] = \widetilde{Y}_{t,k} - \sum_{j=k}^{\tau} \psi_{t,j}'\widetilde{A}_{t,j,k}
\end{equation}
Moreover, we specify $S_k(x_{t+1:t+k}, a_{t:t+k-1}) = a_{t+k}$ from Lemma~\ref{lem:moment-restriction}, and we then yield
\begin{align}
    & R_k(X_{t+1:t+k}, A_{t:t+k-1}) \nonumber \\
    = & S_k(X_{t+1:t+k},A_{t:t+k}) - \mathbb{E}\left[S_k(X_{t+1:t+k},A_{t:t+k}) \mid X_{t+1:t+k}, A_{t:t+k - 1}, H_t = h_t\right]  = \widetilde{A}_{t,j,k} .
    \label{eq:Qtilde-derive}
\end{align}
Therefore, the moment restriction in (\ref{eq:local-moment}) can be simplified to
\begin{equation}
    \label{eq:local-moment-final}
       \forall k \in \{0,\ldots,\tau\}:\quad
\mathbb{E}\Bigl[
  \Bigl(
    \widetilde{Y}_{t,k} 
    -
    \sum_{j=k+1}^{\tau} \psi_{t,j}(h_t)' \,\widetilde{A}_{t,j,k}
    -
    \psi_{t,k}(h_t)' \,\widetilde{A}_{t,k,k}
  \Bigr)\,\widetilde{A}_{t,k,k} \mid H_t=h_t
\Bigr]
\;=\;0
\end{equation}
which is the primary moment restriction for our setting, adapted from \cite{Lewis.2021}. To estimate $\psi_t(h_t)$, we need to solve the set of equations defined in Eq.~\eqref{eq:local-moment-final} in an iterative way (see Algorithm 3 from \cite{Lewis.2021}):
\begin{enumerate}
    \item Step $0$: for k = $\tau$, solve $\widehat\psi_{t,\tau}(h_t) \in \mathbb{R}^{r}$
    \item Step $\tau - k, (k\geq 1)$, With the given $\widehat\psi_{t,j}(h_t), j> k$, solve $\widehat\psi_{t,k}(h_t)$.
\end{enumerate}
In each step, we need to solve a linear equation, which has a unique solution under standard assumptions (see Theorem 8 from \cite{Lewis.2021}). Since the solution is unique, the solution equals the true blip coefficients.  However, we argue that this classical method has limitations due to its sequential nature which prevents us from batch training. We note that solving Eq.~\eqref{eq:local-moment-final} is equivalent to solving this minimization problem, and thereby,w e successfully establish the estimation of $\psi_t$ under a risk minimization paradigm, suitable for training with neural networks:
\begin{equation} \label{eq:moment-SNMM}
   \psi_{t,k}^*=\argmin_{\widehat{\psi}_{t,k}(\cdot) \in \boldsymbol{\Phi}_{t,k}}
\mathbb{E}\Bigl[
  \Bigl(
    \widetilde{Y}_{t,k} -
    \sum_{j=k+1}^{\tau} \psi_{t,j}^*(h_t)' \widetilde{A}_{t,j,k} -
    \widehat{\psi}_{t, k}(h_t)' \widetilde{A}_{t,k,k}
  \Bigr)^2
\Bigr]
  \end{equation}

\textbf{Estimating CATE over time: }
Once the parameters $\psi_t(h_t)=(\psi_{t,0}(h_t), \ldots,\psi_{t,\tau}(h_t))$ are estimated, we could proceed to estimate the potential outcome from Eq.~\eqref{eq:theory-po-identification}:
 \begin{align} 
        &\mathbb{E}\!\Bigl[
            Y^{(d)} \;\Big|\; H_t = h_t \Bigr] \nonumber \\
            =\;&\mathbb{E}\!\Bigl[ Y \;+\; \sum_{k=0}^{\tau} \rho_{t,k}\bigl(X_{t+1:t+k}, A_{t:t+k};h_t\bigr)
              \;\Big|\; H_t = h_t\Bigr] \nonumber \\
           = \; &\mathbb{E}\!\Bigl[ 
                Y + \sum_{k=0}^{\tau} \widehat\psi_{t,k}(h_t)'(d_{t+k} - A_{t+k}) \mid H_t = h_t
          \Bigr].
    \end{align}
Then, the CATE over time under treatment sequences $a^*$ and $b^*$ is computed by taking difference between the two potential outcomes respectively \cite{Lewis.2021, Robins.1994, Robins.2004}:
\begin{align}
     &\mathbb{E}\!\Bigl[Y^{(a^{*})} - Y^{(b^{*})}  \;\Big|\; H_t = h_t \Bigr] \nonumber \\
     =\; &\mathbb{E}\!\Bigl[\sum_{k=0}^{\tau} \psi_{t,k}(h_t)'\bigl[(a^*_{t+k} - A_{t+k}) - (b^*_{t+k} - A_{t+k})\bigr]\;\Big|\; H_t = h_t \Bigr] \nonumber \\
     =\; &\mathbb{E}\!\Bigl[\sum_{k=0}^{\tau} \psi_{t,k}(h_t)' (a_{t+k}^* - b_{t+k}^*) \mid H_t = h_t \Bigr] \nonumber \\
     =\; &\sum_{k=0}^{\tau} \psi_{t,k}(h_t)' (a_{t+k}^* - b_{t+k}^*)
\end{align}
The expectation term vanishes since all the terms left are constant in the conditional expectation,  thereby enabling us to directly estimate CATE over time.


\subsection{Neyman orthogonality of the $L^2$-moment loss} \label{app:proof-neyman}
\textbf{Frechet Derivative: }
The Fréchet derivative of any functional $\mathcal{L}(f)$ is defined as:
\begin{align}
    \forall v, \quad D_f\mathcal{L}(f)[v] = \frac{\partial}{\partial t}\mathcal{L}(f + tv) \Big|_{t=0} ,
\end{align}
where $h = (p, q)$ is the nuisance parameter, and where $\theta$ is the trainable weight of the blip prediction model from Stage~2. Our objective is to show that
\begin{align}
    D_h(\partial_\theta\mathcal L_{D,k}(\theta;h)) = 0 ,
\end{align}
where $D_h$ is the Fréchet derivative w.r.t. the nuisance function $h$.  This property is called the \textbf{universal orthogonality} \cite{Foster.2023}, which mandates that the gradient of the loss $\partial_\theta\mathcal L_{D,k}(\theta;h)$ has a vanishing derivative over the nuisance parameters $h$. A functional loss that satisfies this property is more robust to the perturbation of the nuisance parameter, since the first-order error of $h$ would only have a second-order effect on the gradient. Further, with the property of universal Neyman orthogonality, it is possible to achieve lower prediction error \cite{Foster.2023}, as stated in the theorem in Section~\ref{app:proof-MSE-rate}.

Recall that our double-optimization-adapted $L^2$-moment loss at step $k$ is:
\begin{align}
\mathcal{L}_{\text{blip}}^k := \mathcal L_{D,k}(\theta;h)
  =\mathbb{E}\Bigl[
             \Bigl(
               Y-p_{t,k}(H_{t+k})
               -\!\!\sum_{j=k+1}^{\tau}\!
                  '\psi_{t,j}^{\text{(ps)}}(H_t)'\!
                  \bigl(A_{t+j}-q_{t,j,k}(H_{t+k})\bigr)               \\   \nonumber
                  -\psi_{t,k}(H_t;\theta)'\!
                 \bigl(A_{t+k}-q_{t,k,k}(H_{t+k})\bigr)
             \Bigr)^2
      \Bigr],
\end{align}
where $h = (\mathbf{p}, \mathbf{q})$ is the nuisance parameter, where $\mathbf{p} = \{p_{t,k}\mid 0 \leq k \leq \tau\}$ and $\mathbf{q} = \{q_{t,j,k} \mid {0\leq k \leq j\leq \tau}\}$ and $\theta$ is the trainable weight of the blip prediction model from Stage~2. Here, $\psi_{t,j}^{\text{(ps)}}(H_t)$ is the pseudo target resulting from gradient detachment in the adapted loss.

First, we denote
\begin{align}
\phi_k(\theta,h)\;:=\; 
       Y-p_{t,k}(H_{t+k})
       -\!\!\sum_{j=k+1}^{\tau}\!
          \psi_{t,j}^{\text{(ps)}}(H_t)'
          \bigl(A_{t+j}-q_{t,j,k}(H_{t+k})\bigr) \nonumber \\
        -\psi_{t,k}(H_t;\theta)'
          \bigl(A_{t+k}-q_{t,k,k}(H_{t+k})
     \bigr) .
\end{align}
Then, we derive the gradient of $\mathcal{L}_{D,k}$ over $\theta$. Note that the nuisance functions $p_{t,k}, q_{t,j,k}$ does not depend on the model weight $\theta$, which means that the gradient is 
\begin{align} \label{eq:phi-gradient}
\partial_\theta\mathcal{L}_{D, k}(\theta,h)=
  \mathbb{E}\Bigl[
     2\cdot\phi_{k}(\theta, h) \cdot
      -\partial_\theta\psi_{t,k}(h_t;\theta)'
       \bigl(A_{t+k}-q_{t,k,k}(H_{t+k})\bigr) 
   \Bigr].
\end{align}

Let
\begin{align}
    \delta_k(\theta, h) =  
    -\partial_\theta\psi_{t,k}(H_t;\theta)'
       \bigl(A_{t+k}-q_{t,k,k}(H_{t+k})\bigr),
\end{align}
then we have
\begin{align}
    \partial_\theta\mathcal{L}_{D, k}(\theta,h)=  2\cdot\mathbb{E}\Bigl[
    \phi_k(\theta, h)\cdot \delta_k(\theta, h) 
    \Bigr].
\end{align}
We proceed to show that the Fréchet derivative of $\partial_\theta\mathcal{L}_{D, k}(\theta,h)$ over $p_{t,k}$, $q_{t,j,k}$ (where $j > k$, and $q_{t, k, k}$ are all zero.

\textbf{(1) Orthogonality over $p_{t,k}$: } Note that $D_{p_{t,k}}\delta_{k}(\theta, h) = 0$ and $D_{p_{t,k}}\phi_{k}(\theta, h)[v]= v(H_{t+k})$, then
\begin{align}
    D_{p_{t,k}}\partial_\theta\mathcal{L}_{D, k}(\theta,h)[v]=& 
    2\cdot\mathbb{E}\Bigl[
    v(H_{t+k})\cdot \delta_{k}(\theta, h)
    \Bigr] \nonumber \\ 
    =& 2\cdot\mathbb{E}\Bigl[ \mathbb{E}[
    v(H_{t+k})\cdot \delta_{k}(\theta, h)
    \mid H_{t+k}]\Bigr]\nonumber \\
    =& 2\cdot\mathbb{E}\Bigl[ v(H_{t+k})\cdot \mathbb{E}[
     \delta_{k}(\theta, h)
    \mid H_{t+k}]\Bigr] \nonumber \\
    =& 2\cdot\mathbb{E}\Bigl[ v(H_{t+k})\cdot 0 \Bigr] = 0.
\end{align}

\medskip
\textbf{(2) Orthogonality over $q_{t,j, k}$: } Note that $D_{q_{t,j,k}}\phi_{k}(\theta, h)[u]= \psi_{t,j}^{\text{(ps)}}(H_t)'u(H_{t+k})$ and $D_{q_{t,j, k}}\delta_{k}(\theta, h) = 0$, then
\begin{align}
    D_{q_{t,j,k}}\partial_\theta\mathcal{L}_{D, k}(\theta,h)[u]=& 
    2\cdot\mathbb{E}\Bigl[
    \psi_{t,j}^{\text{(ps)}}(H_t)'u(H_{t+k}) \cdot \delta_{k}(\theta, h)
    \Bigr] \nonumber \\
    =& 2\cdot\mathbb{E}\Bigl[ \mathbb{E}[
   \psi_{t,j}^{\text{(ps)}}(H_t)'u(H_{t+k}) \cdot \delta_{k}(\theta, h)
    \mid H_{t+k}]\Bigr]\nonumber \\
    =& 0. \text{ (similar to the proof of } p_{t,k} \text{)}
\end{align}
\textbf{(3)Orthogonality over $q_{t,j, k}$: } Note that $D_{q_{t,k,k}}\phi_{k}(\theta, h)[u]= \psi_{t,k}(H_t, \theta)'u(H_{t+k})$ and $D_{q_{t,k, k}}\delta_{k}(\theta, h) = \partial_\theta\psi_{t,k}(H_t;\theta)'u(H_{t+k})$, then
\begin{align}
    D_{q_{t,j,k}}\partial_\theta\mathcal{L}_{D, k}(\theta,h)[u]=& 
    2\cdot\mathbb{E}\Bigl[
   \psi_{t,k}(H_t, \theta)'u(H_{t+k}) \cdot \delta_{k}(\theta, h) + \\
    & \;\phi_{k}(\theta, h) \cdot  \partial_\theta\psi_{t,k}(H_t;\theta)'u(H_{t+k})
    \Bigr] \nonumber \\
    =& 2\cdot\mathbb{E}\Bigl[ \mathbb{E}\bigl[
   \psi_{t,k}(H_t, \theta)'u(H_{t+k}) \cdot \delta_{k}(\theta, h) + \\
    & \;\phi_{k}(\theta, h) \cdot  \partial_\theta\psi_{t,k}(H_t;\theta)'u(H_{t+k}) \mid H_{t+k}
    \bigr]\Bigr] \nonumber \\
    =& 2\cdot\mathbb{E}\Bigl[ \psi_{t,k}(H_t, \theta)'u(H_{t+k}) \cdot \mathbb{E}\bigl[
     \delta_{k}(\theta, h) \mid H_{t+k}\bigr] + \\
    & \;\mathbb{E}\bigl[\phi_{k}(\theta, h) \mid H_{t+k}
    \bigr] \cdot  \partial_\theta\psi_{t,k}(H_t;\theta)'u(H_{t+k})\Bigr] \nonumber \\
    =& 2\cdot\mathbb{E}\Bigl[ \psi_{t,k}(H_t, \theta)'u(H_{t+k}) \cdot 0 + 0 \cdot  \partial_\theta\psi_{t,k}(H_t;\theta)'u(H_{t+k})\Bigr] \nonumber \\
    =& 0.
\end{align}

\medskip
Putting (1), (2), and (3) together, we prove that the overall Fréchet derivative of $\partial_\theta\mathcal{L}_{D, k}(\theta,h)$ over nuisance parameter $h$ is zero, and, hence, the adapted $L^2$-moment loss satisfies universal Neyman orthogonality.

\subsection{The MSE rate theorem} \label{app:proof-MSE-rate}

We now provide a mean squared error (MSE) guarantee under finite samples for the estimator $\hat{\psi}_t$ in a primary heterogeneous setting, where the conditioning variable is the static feature $X_{0}$ (see Theorem 10 from \cite{Lewis.2021}. We point out that the conclusion of the theorem could be extended to our setting, where the conditioning variable is the history $H_t$. Before we show the theorem, we clarify the prerequisite notations and regular assumptions:

\textbf{Mathematical notations: }
\begin{enumerate}
    \item \textbf{Norm of a function in function space}
            Define the norm for a vector-valued function $\psi:\mathcal{X} \rightarrow \mathbb{R}^{d}$, then the $u,v$-norm is defined as:
            \begin{align} \label{eq:norm-def}
                \|\psi\|_{u,v} = \mathbb{E}_{X\sim P_X}\biggl[ \biggl(\sum_{i=1}^{d} \psi_i(X)^u \biggr)^{\frac{v}{u}}\biggr]^{\frac{1}{v}}.   
            \end{align}
            We point out that, when $u=v$, the $u,v$-norm degenerate into the normal norm: $\|\cdot\|_{u,u} = \|\cdot\|_{u}$. Further, we could define the product norm between two functions $f,g$ as:
            \begin{equation} \label{eq:product-norm-def}
                \|f \circ g\|_{u,v} = \mathbb{E}\left[\|f(\hat{X})\|_u^v \cdot \|g(\hat{Y})\|_u^v\right]^{1/v} = \mathbb{E}\left[\biggl(\sum_{i=1}^{r}f_i(X)^{u}\biggr)^{\frac{v}{u}} \cdot \biggl( \sum_{i=1}^{r}g_i(X)^{u}\biggr)^{\frac{v}{u}}\right]^{\frac{1}{v}}
            \end{equation}
    \item \textbf{Localized Rademacher complexity:}
            The localized Rademacher complexity is an extended concept of the standard Rademacher complexity. It measures the fitting capacity of a function space $\mathcal{F}$ where functions have bounded second moment. Formally, it is defined as
            \begin{equation}
                R_{P_X, n}(\mathcal{F}; \delta) = \mathbb{E}_{\epsilon_{1:n}, X_{1:n}\sim P_X} \left[ \sup_{f \in \mathcal{F}: \|f\|_2 \leq \delta} \frac{1}{n} \sum_{i=1}^n \epsilon_i f(X_i) \right]
            \end{equation}
\end{enumerate}

\textbf{Assumptions: } We adapt the assumptions from the original heterogeneous setting \cite{Lewis.2021} to our task of conditioning on the history variable $H_t$:
\begin{enumerate}
    \item All the variables and functions in our problem setting are bounded.
    \item A regularity assumption regarding the treatment variables: The following quantities are also bounded:
        \begin{align} \label{eq:bound_ckj}
            c_{k,j} := \sup_{t\leq T-\tau}\sup_{h_t \in \mathcal{H}_t} \mathbb{E} \left[ \| C_{t,k,j} \|_{u,u}^v \mid H_t=h_t \right]^{2/v} < +\infty ,
        \end{align}    
        where $C_{t,k,j} := \text{Cov}(A_{t+k}, A_{t+j} \mid H_{t+k})$, and:
        \begin{align}
            \forall h_t\in \mathcal{H}_t, \quad\mathbb{E} \left[ C_{t,k,k} \mid H_t=h_t \right]\succeq \lambda I
        \end{align}
    \item The $u,\infty$-norms of $\psi_{t,k}$ are bounded by:
        \begin{align} \label{eq:bound_psi}
        M := \max_{t\leq T-\tau} \max_{0\leq k \leq \tau, \psi_{t,k} \in \Psi_{t,k}} \|\psi_{t,k}\|_{u,\infty} < +\infty .
\end{align}
\end{enumerate}

\textbf{Theorem 1 (Adapted from Heterogeneous R-learner\cite{Lewis.2021}): } Suppose the assumptions above are satisfied. Let $\delta_n$ be an upper bound on the critical radius of the star hull of all the function spaces  
\begin{equation}
\{\Psi_{t,k,i} - \psi_{t,k,i}\}_{t\in[SL-\tau], k\in[\tau],i\in[r]}
\end{equation}
and that $\delta_n = \Omega \left( \sqrt{\frac{r \log \log(n)}{n}} \right)$. Suppose:  

\begin{align} \label{eq:nuisance-prod-cond-1}
\forall 1\leq t \leq T-\tau, \quad &
\max_{0\leq k \leq j \leq \tau} \mathbb{E}_{\hat{q}_{t,k,k},\hat{q}_{t, j,k}} \left[ \left\| (\hat{q}_{t, k,k} - q_{t, k,k}) \circ (\hat{q}_{t, j,k} - q_{t, j,k}) \right\|_{2,2}^2 \right] = O(r^2 \delta_{n/2}^2)
\\
\label{eq:nuisance-prod-cond-2}
\forall 1\leq t \leq T-\tau, \quad &
\max_{0\leq k \leq \tau} \mathbb{E}_{\hat{q}_{t, k,k},\hat{q}_{t,k}} \left[ \left\| (\hat{q}_{t, k,k} - q_{t, k,k}) \circ (\hat{p}_{t,k} - p_{t,k}) \right\|_{2,2}^2 \right] = O(r^2 \delta_{n/2}^2),
\end{align}
then the blip coefficient predictor $\widehat{\psi}_{t,k}$ trained under the $L^2$-moment loss satisfies:
\begin{align}
\max_{t \leq T -\tau}\max_{k\in\{0, \ldots, \tau\}} \mathbb{E}\left[\left\|\widehat{\psi}_{t,k} - \psi_{t,k}\right\|_{2,2}^2\right] = O\left(r^2 \delta_{n}^{2}\right), \quad \delta_n^2 \propto \frac{\log\log(n)}{n}.
\end{align}

The theorem uses the notion of critical radius, which is often applied to describe the statistical complexity of functional spaces\cite{wainwright2019high-high-dim-stats}. Intuitively, the theorem states that when:
\begin{itemize}
    \item our sequential neural architecture is correctly specified and trained with empirical risk minimization, 
    \item the product norm between nuisance networks has an error small enough, and
    \item the boundedness assumption stated above is satisfied,
\end{itemize}
then our trained neural network for predicting the sequential blip effect will only have an $\Omega(\frac{\log\log(n)}{n})$ root mean squared error rate.

\newpage
\section{Theoretical justification for the double optimization trick}
\label{app:justification-double-opt}

Our double optimization trick implements the blip estimation stage with a simultaneous gradient update. This section provides the theoretical rationale for this design. We show that our approach is a practical, one-step approximation of an iterative process that is guaranteed to converge to the correct solution. Below, we proceed as follows:
\begin{enumerate}
    \item We define an idealized, iterative solution operator and prove that it converges to the true blip coefficients.
    \item We derive an error propagation bound for a single application of this operator.
    \item We discuss the practical implications of this theory for our model's training and error propagation.
\end{enumerate}

\subsection{Iterative operator for learning blip estimators}

The $L^2$-moment loss from Eq.~(\ref{eq:L2-moment-min-seq}) in the main chapter defines a system of interdependent optimizations. The solution for the blip coefficients at step $k$, i.e., $\psi_{t,k}$, depends on all the future steps $j > k$. This sequential solution scheme is, however, extremely slow. Hence, we replace it with a fixed-point iteration scheme, which leads to the same exact solution.

Let $\boldsymbol{\Psi}$ be the space of all possible blip coefficient functions $\bm{\psi} = (\psi_0, \ldots, \psi_\tau)$. We define an operator $T: \boldsymbol{\Psi} \to \boldsymbol{\Psi}$ that maps an input set of functions $\bm{\psi}^{(\text{old})}$ to an output set $\bm{\psi}^{(\text{new})}$. The $k$-th component of the output is defined as the exact minimizer of the loss at step $k$, using the input functions as targets:
\begin{equation}
\psi_{t,k}^{(\text{new})}(h_t) = \argmin_{\widehat{\psi}_{t,k}(\cdot) \in \boldsymbol{\Psi}} \E\Biggl[ \Biggl( \widetilde{Y}_{t,k} - \sum_{j=k+1}^{\tau} {\psi_{t,j}^{(\text{old})}(h_t)}' \widetilde{A}_{t,j,k} - {\widehat{\psi}_{t,k}(h_t)}' \widetilde{A}_{t,k,k} \Biggr)^2 \Biggr]
\label{eq:operator_T}
\end{equation}

\textbf{Our double optimization trick is a practical, gradient-based implementation of a single iteration of this operator.} The input $\bm{\psi}^{(\text{old})}$, in practice, is just the output of the detached network $\widehat{\bm{\psi}}^2$ in Eq.~(\ref{eq:adapted-L2-moment-loss}). The loss function $\mathcal{L}_{\text{blip}} = \sum_k \mathcal{L}_{\text{blip}}^k$ is constructed to optimize the summed targets from Eq.~(\ref{eq:operator_T}). The goal of the update step of the double optimization is to make the trained blip prediction network one step closer to the ideal target output $\psi_{t,k}^{(\text{new})}$. Actually, this kind of approach is not a new heuristic. It resembles the policy update in actor-critic methods in reinforcement learning: A single gradient step of the policy is taken to approximate the $\argmax$ operation on the Q-value function.

\textbf{Convergence of the operator:} A fundamental property of this operator is that it looks backward. The output at step $k$, $\psi_{t,k}^{(\text{new})}$, depends only on the input functions from future steps, $j > k$.  At the final time step $k=\tau$, the term $\sum_{j=\tau+1}^{\tau}\bigl(\cdots\bigl)$ is an empty sum that equals zero. The definition of $\psi_{t,\tau}^{(\text{new})}$ is therefore independent of the input $\bm{\psi}^{(\text{old})}$. This means for any two different input blip estimators $\bm{\psi}$ and $\bm{\phi}$, the operator produces the exact same output for the final step: $T(\bm{\psi})_\tau = T(\bm{\phi})_\tau$.

This provides a stable anchor at the end of the horizon. When the operator $T$ is applied again, the output for the second-to-last step, $\psi_{t,\tau-1}$, will now also be uniquely determined, as its only dependency is on the now-fixed final step. This logic propagates backward through all time steps. After $\tau+1$ applications of the operator, any initial difference between two blip predictors will vanish completely. This means that, starting with an arbitrary $\boldsymbol{\psi}_t^{\text{initial}}$, the output of the operator converges to the true blip coefficient after $\tau + 1$ iterations.

\subsection{Error bound of the output of the operator}

In the previous section, we have shown the convergence property of the transformation $T$. In practice, our double optimization trick is only an approximation of $T$. Therefore, in this section, we analyze how estimation errors propagate through a single application of the operator T.

\subsubsection{Formal Assumptions}

We invoke two standard regularity assumptions:

1. \textbf{Strong overlap}: The smallest eigenvalue of the conditional covariance matrix $C_k(h_t) = \E[\widetilde{A}_{t,k,k} \widetilde{A}_{t,k,k}' | H_t=h_t]$ is larger than  some $\sigma_{\min}^2 > 0$. 

2. \textbf{Bounded moments}: The conditional cross-moments of treatment residuals are bounded. There exists a constant $M < \infty$ such that $\E\left[ \|\widetilde{A}_{t,k,k} \widetilde{A}_{t,j,k}'\|_2^2 | H_t=h_t \right] \le M^2$.

\subsubsection{Derivation of the Error Bound}
Let $\bm{\Delta}^{(\text{in})} = \bm{\psi}^{(\text{in})} - \bm{\phi}^{(\text{in})}$ be the difference between two input estimators, and $\bm{\Delta}^{(\text{out})} = T(\bm{\psi}^{(\text{in})}) - T(\bm{\phi}^{(\text{in})})$ be the difference in their outputs. The closed-form solution to the least-squares problem in Eq.~\eqref{eq:operator_T} gives the difference at step $k$:
\begin{equation}
\Delta_{k}^{(\text{out})}(h_t) = \E\left[ -C_k(h_t)^{-1} \sum_{j=k+1}^{\tau} \widetilde{A}_{t,k,k} \widetilde{A}_{t,j,k}' \Delta_{j}^{(\text{in})}(h_t) \bigg| H_t=h_t \right] .
\end{equation}
By taking the squared norm and applying Jensen's inequality, Cauchy-Schwarz inequality, and our assumptions, we can bound the expected error (see derivation below):
\begin{equation}\label{eq:error-bound}
\E[\|\Delta_{k}^{(\text{out})}\|^2] \le C \sum_{j=k+1}^{\tau} \E[\|\Delta_{j}^{(\text{in})}\|^2] ,
\end{equation}
where the constant $C = \frac{\tau M^2}{\sigma_{\min}^4}$ depends on the horizon and the data properties (overlap and moments).

\begin{proof}
\begin{align*}
\|\Delta_{k}^{(\text{out})}(h_t)\|^2 &= \left\| \E\left[ -C_k(h_t)^{-1} \sum_{j=k+1}^{\tau} \widetilde{A}_{t,k,k} \widetilde{A}_{t,j,k}' \Delta_{j}^{(\text{in})}(h_t) \bigg| H_t=h_t \right] \right\|^2 \\
&\le \E\left[ \left\| C_k(h_t)^{-1} \sum_{j=k+1}^{\tau} \widetilde{A}_{t,k,k} \widetilde{A}_{t,j,k}' \Delta_{j}^{(\text{in})}(h_t) \right\|^2 \bigg| H_t=h_t \right] &&\text{\footnotesize (by Jensen's Inequality)} \\
&\le \E\left[ \|C_k(h_t)^{-1}\|_2^2 \left\| \sum_{j=k+1}^{\tau} \widetilde{A}_{t,k,k} \widetilde{A}_{t,j,k}' \Delta_{j}^{(\text{in})}(h_t) \right\|^2 \bigg| H_t=h_t \right] \\
&\le \frac{1}{\sigma_{\min}^4} \E\left[ (\tau-k) \sum_{j=k+1}^{\tau} \left\| \widetilde{A}_{t,k,k} \widetilde{A}_{t,j,k}' \Delta_{j}^{(\text{in})}(h_t) \right\|^2 \bigg| H_t=h_t \right] &&\text{\footnotesize (Assumption 1 \& Cauchy-Schwarz)} \\
&\le \frac{\tau-k}{\sigma_{\min}^4} \sum_{j=k+1}^{\tau} \E\left[ \|\widetilde{A}_{t,k,k} \widetilde{A}_{t,j,k}'\|_2^2 \cdot \|\Delta_{j}^{(\text{in})}(h_t)\|^2 \bigg| H_t=h_t \right] \\
&\le \frac{(\tau-k)M^2}{\sigma_{\min}^4} \sum_{j=k+1}^{\tau} \|\Delta_{j}^{(\text{in})}(h_t)\|^2 &&\text{\footnotesize (Assumption 2)}
\end{align*}
Taking the expectation over $H_t$ on both sides yields the final bound in Eq.~\eqref{eq:error-bound}.
\end{proof}

\subsection{Practical Implications}

The above theory provides two crucial insights for our practical implementation.

First, the error bound confirms the importance of the stable anchor at $k=\tau$. Because the error at any step $k$ depends only on the errors from future steps, and if we can learn an accurate estimator for the final step, $\widehat{\psi}_{t,\tau}$, then the correctness will propagate backward to stabilize the learning of all preceding estimators. The loss for the final step, $\mathcal{L}_{\text{blip}}^\tau$, is exactly the original $L^2$-moment loss $\mathcal{L}_{\tau}$ in Eq.~(\ref{eq:L2-moment-min-seq}), guaranteeing that $\widehat{\psi}_{t,\tau}$ converges to its true target. 

Second, the bound quantifies how errors can accumulate. In practice, our one-step update will not be perfect. Let $\epsilon_k = \mathbb{E}[\|\widehat{\psi}_{t,k} - \psi_{t,k}^*\|^2]$ be the total MSE at step $k$, and assume a minimal unavoidable error of $\delta_k$ at each step (due to finite data or model misspecification). The error recurrence is:
$$\epsilon_k \approx \delta_k + C \sum_{j=k+1}^{\tau} \epsilon_j$$
From this, we can derive a recursive relationship between adjacent steps: $\epsilon_{k-1} \approx (1+C)\epsilon_k + (\delta_{k-1} - \delta_k)$. Assuming the base error $\delta_k$ is roughly constant ($\delta_{k-1} \approx \delta_k$), this simplifies to $\epsilon_{k-1} \approx (1+C)\epsilon_k$. Unrolling this recursion from the final step backwards gives:
\begin{equation}
    \epsilon_k \approx (1+C)^{\tau-k} \epsilon_\tau \approx (1+C)^{\tau-k} \delta_\tau .
\end{equation}
This bound shows that errors can grow exponentially with the distance from the final step, $\tau-k$. This underscores the importance of having strong data overlap (to keep $C$ small) and an accurate model (to keep the base error $\delta_\tau$ small), especially for applications involving long horizons.

\newpage

\section{Extended Related Works}
\label{app:more-related-works}

\subsection{Non-parametric models}
There have been some attempts to use non-parametric models \cite{Schulam.2017, soleimani.2017, Xu.2016} to estimate CATE or CAPO over time, yet these approaches impose strong assumptions on the outcomes, and their scalability is limited. For these reasons, we focus on neural methods, which offer better flexibility and scalability for complex, high-dimensional medical data.

\subsection{Treatment effect Estimation over continuous time }
Another stream in the literature -- orthogonal to ours -- is about estimating HTEs or potential outcomes over \emph{continuous} time. Prominent examples like \cite{hess2024TE-CDE, Seedat.2022} leverage Neural Controlled Differential Equations (NCDEs), a method well-suited for modeling dynamic systems, to address the problem. However, these continuous-time models introduce significant complexity and computational overhead, limiting their practical applicability. Hence, we only focus on discrete-time treatment effect estimation.

\subsection{Balanced representation-based approaches}
Learning the balanced representation to predict the treatment effect or potential outcome is very popular. Similar to the previously introduced Causal Transformer \cite{Melnychuk.2022}, \cite{Wang-dual.2024} adopts an adversarial generative approach to learn the balanced representations, which reduces the confounding bias. \cite{wang-effective.2024} introduces a de-correlation strategy that removes cross-covariance between current treatments and representation of past trajectory, based on a data selective state-space model architecture called Mamba. However, balanced representation-based methods are considered \emph{biased} because they are only heuristics that lack formal theoretical guarantees for adjusting confounding. In fact, these methods were originally designed to reduce finite-sample variance, not to mitigate confounding bias. Moreover, the process of enforcing such balancing would even introduce more bias \cite{Melnychuk.2024}.

\subsection{Other model-based approaches}
\cite{Wang-variational.2025} proposed Variational Counterfactual Intervention Planning (VCIP), a framework that uses variational inference to directly model the likelihood of the potential outcome over sequential interventions. The method relies on the assumption that the ELBO derived from observational data truly approximates the interventional likelihood. However, this is hard to verify. Furthermore, the approach uses simple distributions such as Gaussian (state transition) and Beta (outcome distribution) distributions, which may fail to capture the complex relationship, leading to biased estimation. 

\cite{Meng.2024} introduces the COSTAR framework that first pre-trains a transformer-based encoder using a self-supervised loss and then fine-tunes the encoder to predict the factual outcome. The work lacks a causal adjustment mechanism, hoping that the pre-trained transformer can incidentally transfer to the setting of estimating counterfactual outcomes. Hence, the approach is a heuristic and does not properly adjust for time-varying confounding.

\newpage
\section{Additional Results}

\subsection{Ablation studies} \label{app:ablation}
We report the ablation studies on \textbf{(1)} the neural backbone of the sequential encoder and \textbf{(2)} the $L^2$-moment loss with double optimization trick. To this, we further introduce the following baselines:
\begin{enumerate}
    \item \textbf{\method-LSTM:} We replace the transformer architecture in the sequential encoders $\mathcal{E}_{\theta_N}^N, \mathcal{E}_{\theta_B}^B$ of the nuisance network and blip prediction network by the LSTM network. From the experimental results below, we find that \method-LSTM is still highly effective: it outperforms other baselines, which proves the effectiveness of our proposed \method framework regardless of the instantiations. From the examples of \method instantiated by transformer and LSTM, we demonstrate that our \method framework can seamlessly integrate popular sequential networks.
    \item \textbf{\method-WDO: } We remove the double optimization trick (WDO stands for \textbf{W}ithout \textbf{D}ouble \textbf{O}ptimization) and only make a single forward pass on $\widehat{\psi}_{t}$ to construct the $L^2$-moment loss. In this scenario, we discard the mandate on the order of solving the blip coefficients estimators, which is required for a correct estimation in SNMMs. This way, we are able to identify the contribution of applying the double optimization trick. Our results show that \method-WDO suffers from high estimation error and thus demonstrate the necessity of the double optimization trick.

\end{enumerate}
We report the RMSE on both synthetic and semi-synthetic datasets as in the experiment section. We adopt a similar test setting as in the main paper, where we test the performance against growing time-varying confounding $\gamma_{\text{conf}}$ in the synthetic data experiment and test the performance against increasing prediction horizons $\tau$ in the semi-synthetic dataset.

\subsection{Granular analysis of the blip coefficients prediction} \label{app:blip-distribution-exp}

So far, we have only demonstrated the overall RMSE on the CATE over time estimation. Our \method works by predicting the blip coefficients $\widehat{\boldsymbol{\psi}}_{t}(H_t) = (\widehat{\psi}_{t,0}(H_t), \ldots, \widehat{\psi}_{t,\tau}(H_t)) \in \mathbb{R}^{(\tau+1)d_a}$. In this subsection, we report a granular analysis on the prediction of the blip coefficients, bringing more transparency to our \method.

In fact, we could directly derive the ground truth personalized blip coefficients $\boldsymbol{\psi}_t = (\psi_{t,0}(H_t), \ldots,\psi_{t,\tau}(H_t)$ for both the synthetic dataset in Appendix~\ref{app:tumor-growth} and semi-synthetic dataset in Appendix~\ref{app:mimic-dataset}. This enables us to accurately evaluate the performances of all the blip coefficient predictors. Specifically, we visualize the distributions of all the components of the difference between the true blip coefficients and the prediction $\boldsymbol{\psi}_t - \widehat{\boldsymbol{\psi}}_{t} \in \mathbb{R}^{(\tau+1)d_a}$. For the synthetic dataset, we set $\gamma = 5$ and $\tau = 2$. For the semi-synthetic dataset, we set $\gamma = 1$ and $\tau = 4$. In both settings, we use \method models instantiated by a transformer that achieve the lowest RMSE and \method models trained without the double optimization trick to further demonstrate its necessity. The results from Figure~\ref{fig:tumor-blip-dist} and Figure~\ref{fig:mimic-blip-dist} show that our \method model is indeed capable of predicting each blip coefficient with high accuracy while with low variance. This supports our claim that \textbf{\method achieves robust estimation of CATE by decomposing the total effect into incremental blip effects with controlled error}. Further, by comparing the blip prediction of \method with and without double optimization, we see that the double optimization trick is essential to generating unbiased blip coefficient estimates.

\begin{figure}
    \centering
    \includegraphics[width=0.50\linewidth]{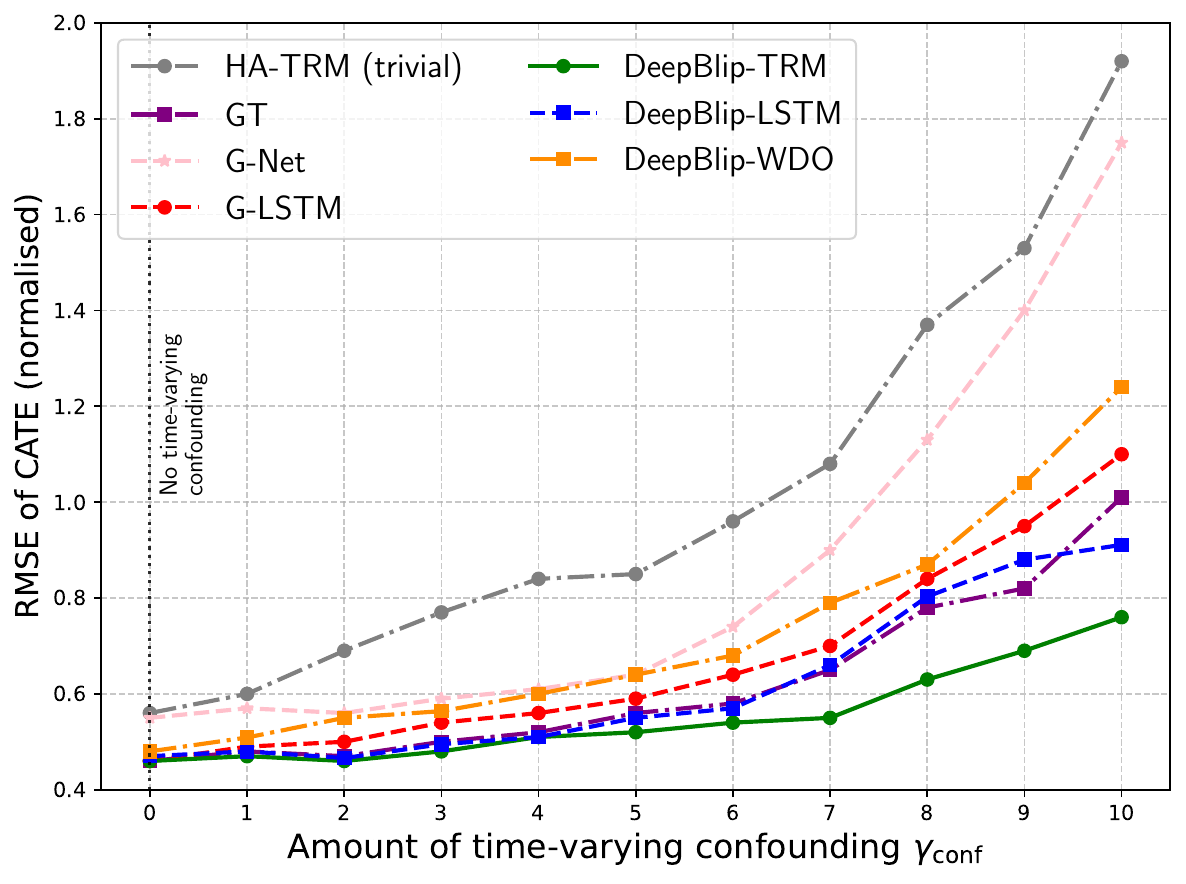}
    \caption{\textbf{Ablation study on the tumor growth synthetic dataset.} We explore two ablations: (1) We replace the transformer sequential encoder of \method and GT by LSTMs to create \textbf{\method-LSTM} and \textbf{G-LSTM}. (2) We remove the double optimization trick to create \textbf{\method-WDO}. As in the previous experiment, we report the average RMSE against increasing confounding $\gamma_{\text{conf}}$. Here, \method-TRM is the variant from the main paper. We note that \method-LSTM is still more competitive than other baselines, which indicates the \emph{effectiveness of the model-agnostic framework of \method}. \method-TRM (=our originally proposed model) performs best. }
    \label{fig:tumor-ablation}
\end{figure}

\begin{figure}
    \centering
    \includegraphics[width=0.50\linewidth]{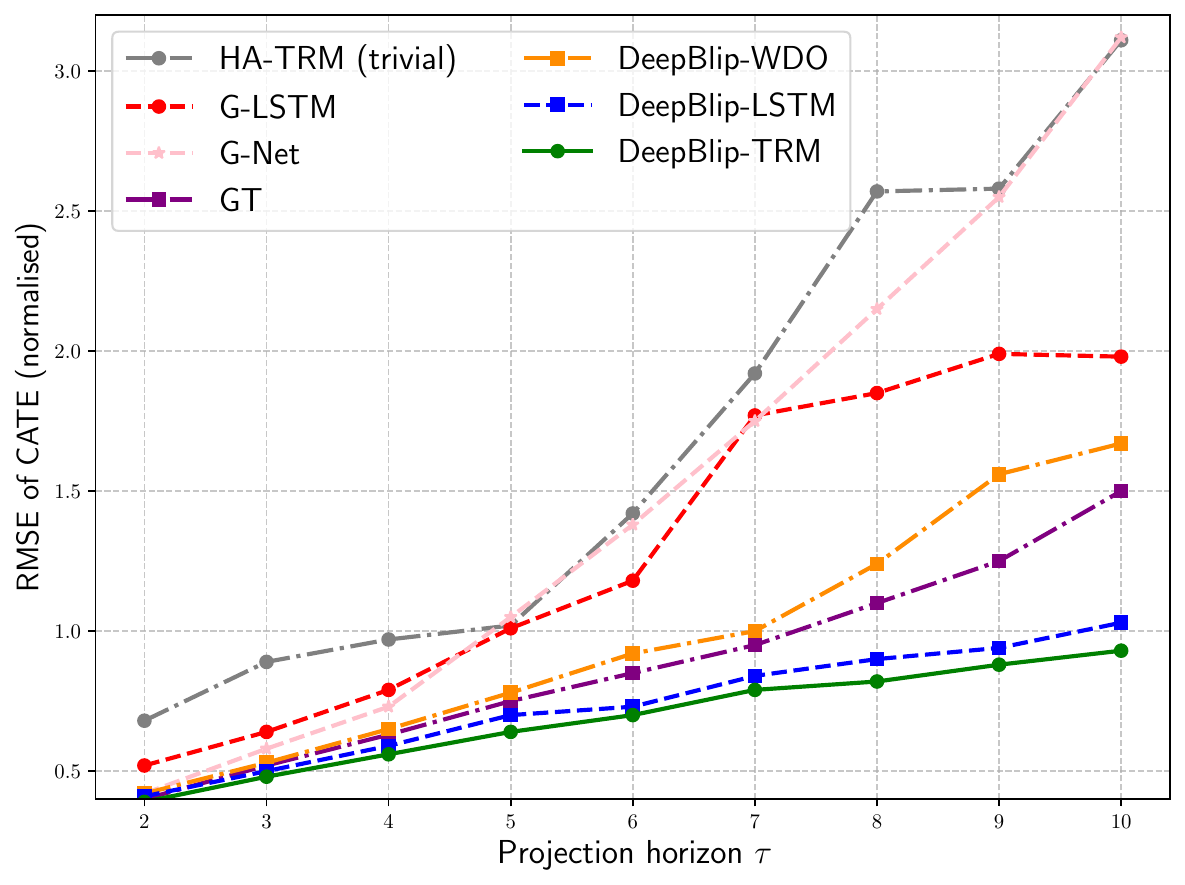}
    \caption{\textbf{Ablation study on the MIMIC semi-synthetic dataset.} Same ablations are investigated using the average RMSE against increasing projection horizon $\tau$. Here, \method-TRM is the variant from the main paper. Even though \method-LSTM slightly underperforms the original model \method-TRM, it is still superior to other baselines. Hence, our proposed \method framework is \emph{robust} over longer prediction horizons regardless of the backbone. We also observe that \method-WDO fails to predict CATE as accurately as the proposed \method-TRM.  This shows that the adoption of a stable $L^2$-moment loss, together with the double optimization trick, are crucial for learning the blip coefficients unbiasedly. }
    \label{fig:mimic-ablation}
\end{figure}

\begin{figure}
    \centering
    \includegraphics[width=1\linewidth]{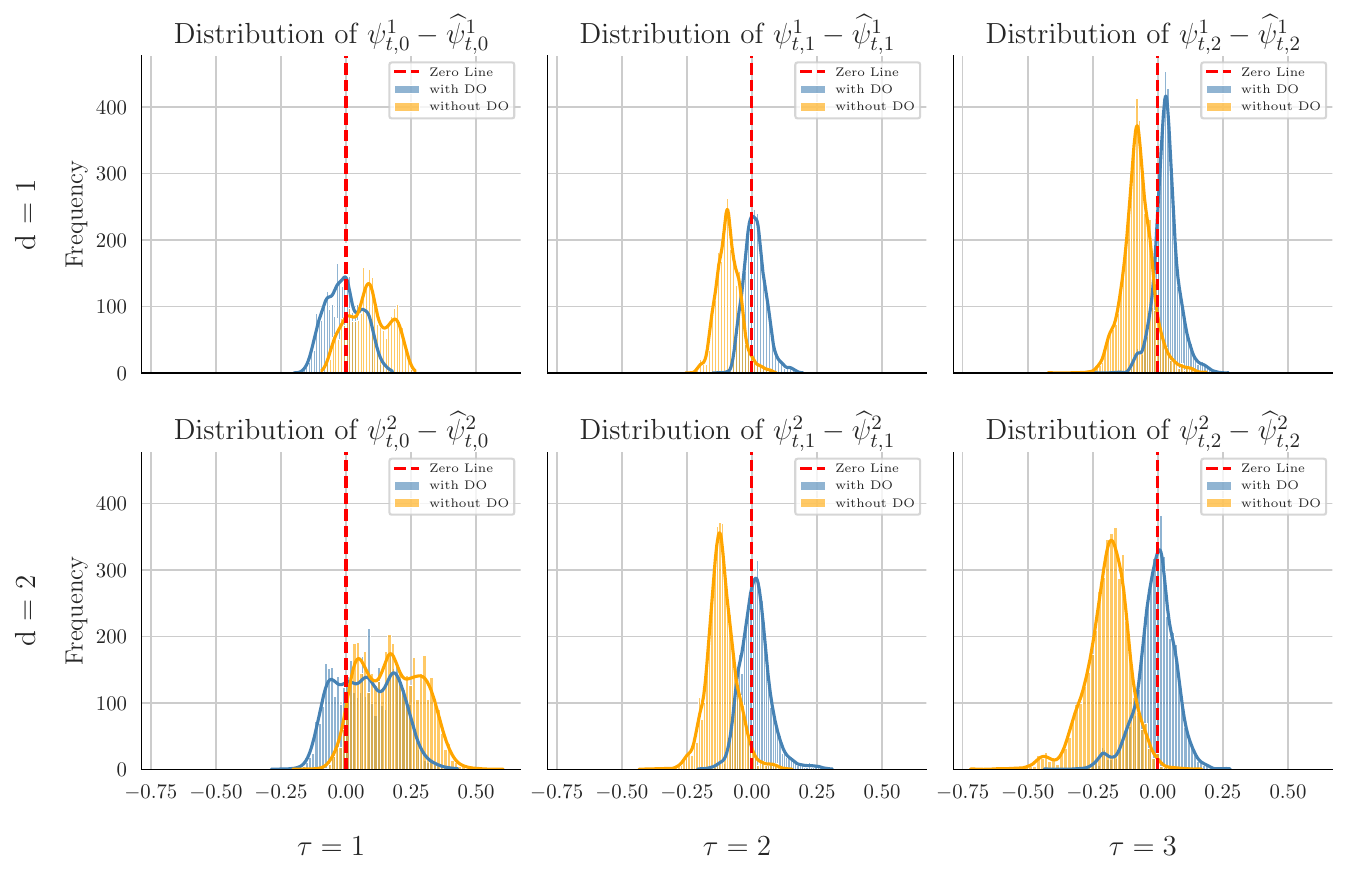}
    \caption{Tumor synthetic dataset ($\gamma_{\text{conf}}=5,\; \tau=2$): We visualize the distribution of the difference between the ground truth blip coefficients and the predictions made by our blip prediction network. Observing the histogram marked in blue represented by our \method, we find that the blip prediction network unbiasedly predicts the blip coefficients, which offers proper adjustment for the confounding. Abbreviation: double optimization trick (DO).}
    \label{fig:tumor-blip-dist}
\end{figure}

\begin{figure}
    \centering
    \includegraphics[width=1\linewidth]{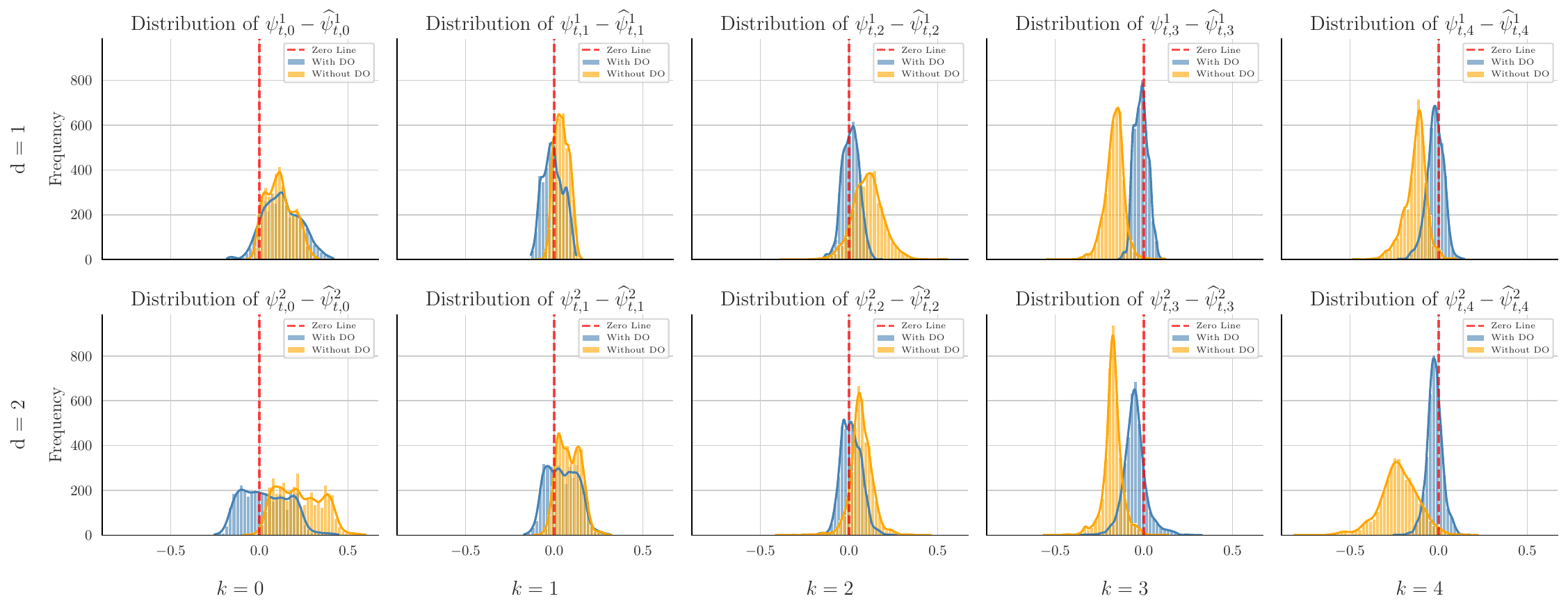}
    \caption{MIMIC semi-synthetic dataset ($\gamma_{\text{conf}}=1,\; \tau=4$): We conduct the same visualization. When $k$ grows, the prediction gets more accurate. We expect this to happen, since $\widehat{\psi}_{t,\tau}$ is the first to optimize by the double optimization trick and then $\widehat{\psi}_{t,\tau -1}$ and so on. In contrast, the blip prediction network trained without double optimization exhibits significant bias and generally higher variance. This implies that the double optimization trick is crucial for the training \method. Abbreviation: double optimization trick (DO).}
    \label{fig:mimic-blip-dist}
\end{figure}

\newpage
\section{Dataset} \label{app:dataset-details}

In this section, we give a more detailed elaboration of the data-generating process as well as the key simulation of the counterfactual outcome (namely, the potential outcome). The treatment effect is then achieved by subtracting two counterfactual outcomes.

\subsection{Tumor growth dataset} \label{app:tumor-growth}

We specify the dynamics of the tumor growth model to be:
\begin{align} \label{eq:tumor-dynamics-new}
    Y_{t+1} = Y_t + \underbrace{\Bigl(\rho \log(\frac{K}{m(\overline{Y}_{t - l})}) + \epsilon_{t+1}\Bigr) m(\overline{Y}_{t - l})}_{\text{Tumor Growth}} - 
    \underbrace{m(\overline{Y}_{t - l})\alpha_c c_{t+1}}_{\text{Chemotherapy}} + 
    \underbrace{m(\overline{Y}_{t - l})(\alpha_r d_{t+1} + \beta_r d_{t+1}^{2})}_{\text{Radiotherapy}} ,
\end{align}
where $\alpha_c,\alpha_r, \beta_r$ are the coefficients on the treatment effect of chemotherapy and radiotherapy, $\rho,K$ together control the growth model. $m(\overline{Y}_{t-l})$ represents the mean of tumor volumes $Y_{1}, \ldots, Y_{t-l}$, where $l$ is a lag parameter. The averaging of the volumes avoids abrupt changes in the tumor volume. The chemotherapy drug dosage $c_t$ and the radiation dosage $d_{t}$ are applied with probability:
\begin{equation}\label{eq:tumor-treatment-prob}
    A_{t}^c,A_{t}^d \sim \text{Ber}\Bigl(\sigma\cdot\bigl(\frac{\gamma_\textrm{conf}}{D_{\max}}(\overline{D}_{15}(\overline{Y}_{t}) - \frac{D_{\max}}{2})\bigr)\Bigr) .
\end{equation}
Since the probability distribution is determined on the mean of the previous 15 steps' tumor volumes $\overline{D}_{15}(\overline{Y}_{t})$, time-varying confounding exists through future outcomes with its influence controlled by the strength parameter $\gamma_\textrm{conf}$.

To generate the observational dataset, we simulate the trajectories of $N=1000$ patients with maximal sequence length $T=30$ under confounding strength $\gamma_\textrm{conf} \in \{0, 1, \ldots, 10\}$, resulting in 11 datasets $\mathcal{D}_{N,T,\gamma_\textrm{conf}}$. Then, we split the simulated dataset into training, validation, and test sets using a split of $(70\%,15\%,15\%)$.

\textbf{Simulating counterfactuals:} \label{app:tumor-ctf-sim} 
For the test dataset, we compute the ground truth CATE by simulating the counterfactual outcomes $Y_{t+\tau}^{a^*}$ and  $Y_{t+\tau}^{b^*}$ via a sliding window treatment along the sequence. To simulate the counterfactual outcome $ Y_{t+\tau}^{(\mathbf{a}^*)} $ under an intervention $ \mathbf{a}^* \in \mathbb{R}^{(\tau+1) \times 2} $ over the time period $[t, t+\tau]$ for the tumor growth model, we iteratively compute the counterfactual trajectory starting from the observed history $ H_t $. 
The DGP of outcome is determined by the lagged dependency on tumor volume averages and the cumulative effects of treatments. 
Below, we provide a strict formulation of the counterfactual outcome generation process, including the mathematical formulation and the steps for simulation.

Given the observed history $ H_t = (\overline{X}_t, \overline{A}_{t-1}, \overline{Y}_{t-1}) $, the counterfactual trajectory $ \{Y_{s}^{(\mathbf{a}^*)}\}_{s=t}^{t+\tau} $ under intervention $ \mathbf{a}^* = [a^*_{t}, a^*_{t+1}, \dots, a^*_{t+\tau}] $ (where $ a^*_s = (c^*_s, d^*_s) $) is generated with the following steps:
\begin{enumerate}
    \item \textbf{Initialization}: Get the observed history: $Y_s $ for $ s < t $. Note $ Y_{t-1}^{(\mathbf{a}^*)} = Y_{t-1}$.
    \item \textbf{Iterative simulation}: For each time $s \in [t, t+\tau-1] $, we simulate $Y_{s+1}^{(\mathbf{a}^*)}$ based on previous simulated outcomes. Notice that since $l>\tau$, $s - l<t$, then $m(\overline{Y}_{s-l})$ is unaffected by the interventions and therefore could be treated as constant when conditioning on $H_t$. Therefore, we have:
    \begin{align} \label{eq:tumor-ctf}
   Y_{s+1}^{(\mathbf{a}^*)} = Y_s^{(\mathbf{a}^*)} + &\underbrace{\rho \log\left(\frac{K}{m\left(\overline{Y}_{s-l}\right)}\right)m\left(\overline{Y}_{s-l}\right) + \epsilon_{s+1}}_{\text{Growth Term}} 
   \nonumber \\ & - \;\underbrace{m\left(\overline{Y}_{s-l}\right)\alpha_c c^*_{s+1}}_{\text{Chemotherapy Effect}} 
    - \;\underbrace{m\left(\overline{Y}_{s-l}\right)\left(\alpha_r d^*_{s+1} + \beta_r (d^*_{s+1})^2\right)}_{\text{Radiotherapy Effect}},
   \end{align}
   where $ \epsilon_{s+1}$ is the \textbf{original} noise term during the generation of the observational data. $ c^*_{s+1}, d^*_{s+1} $ are the intervened chemotherapy and radiotherapy dosages at $ s+1 $, drawn from $ \mathbf{a}^* $
\end{enumerate}
Under this iterative scheme, we could compute the final counterfactual outcome $Y_{t+\tau}^{(\mathbf{a}^*)}$.

\textbf{Treatment effect: }
We then proceed to compute the individual conditioned treatment effect for a patient under treatment $\mathbf{a^*}$ and the null treatment $\mathbf{0}$. Observing the process in Eq.~(\ref{eq:tumor-ctf}), we find that the growth term \textbf{cancels off} each other during the subtraction, leaving only the terms for chemotherapy effect and radiotherapy effect:
\begin{equation}
     Y_{t+\tau}^{(\mathbf{a}^*)} -Y_{t+\tau}^{(\mathbf{0})} = 
    \sum_{s=t}^{t+\tau} \left\{ m\left(\overline{Y}_{s-l}\right)\alpha_c c^*_{s+1} + m\left(\overline{Y}_{s-l}\right)\left(\alpha_r d^*_{s+1} + \beta_r (d^*_{s+1})^2\right) \right\}
\end{equation}

The individual treatment effect is calculated for all the units on the test dataset $D_{test}$ in a sliding treatment window rolling over $t\in \{1, \ldots, T-\tau\}$, producing $N_{test}\times(T - \tau)$ targets of individual treatment effect. We use these individual treatment effects as realizations of the CATE target $\mathbb{E}[Y_{t+\tau}^{(\mathbf{a}^*)} -Y_{t+\tau}^{(\mathbf{0})}\mid H_t]$.

\subsection{MIMIC-III semi-synthetic dataset} \label{app:mimic-dataset}

\textbf{MIMIC Benchmark:}  
We use the \textbf{MIMIC-Extract} pipeline \cite{Wang.2020} to preprocess the \textbf{MIMIC-III} ICU dataset \cite{Johnson.2016}, aggregating hourly clinical measurements. Missing values are addressed via forward and backward filling, and continuous time-varying covariates are normalized. Our semi-synthetic dataset extends \cite{Schulam.2017} to achieve ground-truth treatment effects.  

A cohort of 1,000 patients is selected, restricted to ICU stays of 50--100 hours ($T^{(i)} \in [50,100]$). The feature space includes 18 time-varying vital signs and 3 static features (gender, ethnicity, age), where categorical variables are one-hot encoded. These variables, dynamic and static, are concatenated together to form a covariate vector $X \in \mathcal{X}$ with $d_x=\text{dim}(\mathcal{X})=18+2+3+1=25$.

\textbf{Untreated Outcomes:} For each patient $i$, the untreated trajectory $\mathbf{Z}_t^{(i)}$ are simulated as:
\begin{equation}
    Z_t^{(i)}=\underbrace{\alpha_s \text{B-spline}(t) + \alpha_g g^{(i)}(t)}_{\text{Endogenous}} + \underbrace{\alpha_f f_Z(X_t^{(i)})}_{\text{Exogenous}} + \underbrace{\varepsilon_t}_{\text{Noise}}, \quad \varepsilon_t \sim \mathcal{N}(0, 0.005^2),
\end{equation}
where $\alpha_S^j, \alpha_g^j, \alpha_f^j \in [0,1]$ control component weights. The \textbf{endogenous} term combines: (1) a global trend $\mathbf{B}$-spline($t$) sampled from three cubic splines; (2) patient-specific local variations $g^{j,(i)}(t)$ from a Gaussian process with Matérn kernel (length-scale $\ell = 2.0$). The \textbf{exogenous} term is modeled by the effect $f_Z(X_t^{(i)})$ via random Fourier features (RFF), which approximates a Gaussian process. By combining the three terms, we are able to simulate a complex, untreated outcome that has an intrinsic trend modeled by (1)~endogenous dependencies on a different scale (B-spline for the global trend and RFF for the local perturbations), and (2)~an exogenous dependency on the time-varying covariate $X_t$. The untreated outcome model presents a challenging task that requires complex deep neural networks.

\textbf{Treatment assignment:} The $2$ binary treatments $\mathbf{A}_t^l,l\in \{1,2\}$ are assigned sequentially using:
\begin{equation}
    p_{t}^l = \sigma\left(\gamma_A^l m(Y_{t-w:t-1}) + \gamma_X^l f_X^l(X_t) + b^l\right), \quad A_t^l \sim \text{Bernoulli}(p_t^l),
\end{equation}
where $(\gamma_A^1=2.5, \gamma_X^1=1.0,\gamma_A^2=2.0, \gamma_X^2=0.25)$ control confounding strength, $b=(-3.5,-1.75)$ biases treatment probability, and $f_X^l(\cdot)$ uses RFF approximations of Gaussian process similar to $f_Y$. The term $m(Y_{t-w:t-1})$ computes the average of prior outcomes over a window $[t-w,t-1]$. 

\textbf{Treatment Effects:} Treatments affect outcomes via  cumulative effects that decay over time:
\begin{equation}
    E(t) = \sum_{i = t-w}^{t} \sum_{l=1}^{2} \frac{\beta_{l} A_{i} \kappa^{l}(X_i)}{\sqrt{t - i + 1}}, \quad \text{where } \kappa^l(X_i)=\tanh((\omega^l)'X_{i}) + 1 ,
\end{equation}
where $w=5$ defines the effect window, and the inverse square root decay ensures maximal effect $\beta_{l}$ at application time $t$. Effects are summed across all treatments to model the local treatment effect at time $t$. We further enforce sparse dependencies, so that outcomes are affected by at most 3 covariates. The function $\kappa^l$ further provides heterogeneity to the treatment effect by scaling the full effect $\beta$, parametrized by the coefficients $\omega^l \in \mathbb{R}^{d_x}$ (also sparse). These arrangements \textbf{increase} the challenge to estimate treatment effect compared to previous benchmarks \cite{Hess.2024, Melnychuk.2022}.

\textbf{Observed outcomes: } Finally, we combine the untreated outcome and the treatment effects together to form the observed outcome trajectories:
\begin{equation}
    Y^i_t=Z_t^i\;+\;E(t)^i .
\end{equation}

\textbf{Simulating counterfactuals: }
To generate the counterfactual outcome under an intervention $ \mathbf{a}^* \in \mathbb{R}^{(\tau+1) \times 2} $ during $ [t, t+\tau] $, we intervene on the treatment effect $ E(t) $ by replacing the observed treatments $A_t$ by the intervened treatments $ \mathbf{a}^* $ while keeping the untreated trajectory $ Z_t $ and covariates $ X_t $ unchanged (note that the covariates $X_t$ are predefined vital signs which remain unaffected during the DGP). The counterfactual outcome $ Y^{\text{CF}}_{t+\tau} $ at time $ s \in [t, t+\tau] $ is:

\begin{equation}   
Y^{(\mathbf{a}^*)}_{t+\tau} \mid H_t = Z_{t+\tau} + \underbrace{\sum_{i = t+\tau - w}^{s} \sum_{l=1}^2 \frac{\beta_l \cdot a^{*,l}_i \cdot \kappa^l(X_i)}{\sqrt{t + \tau - i + 1}}}_{E^*(t+\tau)},
\end{equation}
where 
$ Z_{t+\tau} $ is the untreated outcome (identical to the observed scenario). The treatments are divided by the intervention period and the observed period (history) 
\begin{equation}
A^{*,l}_i = \begin{cases} 
        a^{*,l}_i , & \text{if } t \leq i \leq t+\tau \ (\text{intervention period}), \\
        A^l_i , & \text{otherwise (observed treatment)} ,
        \end{cases} 
\end{equation}
and $ \kappa^l(X_i) = \tanh((\omega^l)^\top X_i) + 1 $ encodes effect heterogeneity via covariates. The simulation is performed for the units only on the \textbf{test} set in a sliding window treatment pattern. This means we start by conditioning on $H_1$ to simulate the counterfactual outcome at time $\tau + 1$ and end at $t=T-\tau$ to simulate the counterfactual outcome at time $T$. This results in $N_{test}\times (T-\tau)$ counterfactual outcomes under intervention $\mathbf{a}^*$.

The full implementation is available in our GitHub repository, including parameter configurations for reproducibility.  

\newpage

\section{Baseline methods} \label{app:baseline-methods}
We select six baselines to demonstrate the performance of our \method for CATE estimation over time.  They are: (1) history adjusted plug-in learner with LSTM instantiation (\textbf{HA-TRM}) \cite{Frauen.2025}, (2) recurrent marginal structural networks (\textbf{R-MSNs}) \cite{Lim.2018}, (3) counterfactual recurrent network (\textbf{CRN}), \cite{bica.2020} (4) \textbf{G-Net} \cite{Li.2021}, (5) G-transformer (\textbf{GT}) \cite{Hess.2024}, and (6) causal transformer (\textbf{CT}) \cite{Melnychuk.2022}. We provide details for each baseline and briefly introduce the theory framework of MSM and G-computation. The hyperparameter tuning is detailed in Appendix~\ref{app:hyperparameter-tumor} and Appendix~\ref{app:hyperparameter-mimic}.

\subsection{History-adjusted learner with transformer (HA-TRM)}
A na{\"i}ve approach for estimating CATE over time is to create a regressor with the treatment as the condition:
\begin{equation}
\hat{\delta}_{\theta}(a_{t:t+\tau}, h_t)  \approx \mathbb{E}[Y_{t+\tau} \mid A_{t:t+\tau}=a_{t:t+\tau}, H_t=h_t] ,
\end{equation}
where $\hat{\delta}_{\theta}$ is a non-parametric model (like neural networks). Then we estimate the CATE of $\boldsymbol{a}^*$ and  $\boldsymbol{b}^*$ as:
\begin{equation}
\hat{\delta}_{\theta}(\boldsymbol{a}^*, h_t) - \hat{\delta}_{\theta}(\boldsymbol{b}^*, h_t) \;\approx\; \mathbb{E}[Y_{t+\tau} \mid A_{t:t+\tau}=\boldsymbol{a}^*, H_t=h_t] - \mathbb{E}[Y_{t+\tau} \mid A_{t:t+\tau}=\boldsymbol{b}^*, H_t=h_t] .
\end{equation}
We refer to this method as \textbf{PI-HA learner} (plugged-in history-adjusted meta-learner) from the work of \cite{Frauen.2025}. We instantiate this method with the transformer architecture to encode the history into a latent representation (see Appendix~\ref{app:implementation}), which is then fed into a fully connected linear network to predict the outcome.

Since this approach does not adjust for time-varying confounding, it is biased \cite{Frauen.2025}. While an advantage of the HA-TRM is low variance, it is subject to the level of confounding within the dataset, as we observed in the experiment.

\subsection{Marginal Structural Models (MSM) and its neural Extension} 
\label{sec:msm}

\textbf{MSM Framework:} 
Marginal structural models (MSMs) \cite{Hernan.2001, Robins.2000} address time-varying confounding via inverse probability of treatment weighting (\textbf{IPTW}). IPTW re-weights the targets to create a pseudo population that approximates the randomized controlled trial. With projection horizon $\tau$, the stabilized weight (\textbf{SW}) at time $t$ for each sample is defined as:
\begin{equation} \label{eq:sw-def}
SW(t,\tau) = \prod_{n=t}^{t+\tau} \frac{f(\mathbf{A}_n | \bar{\mathbf{A}}_{n-1})}{f(\mathbf{A}_n | \bar{\mathbf{H}}_n)},
\end{equation}
where $f(\mathbf{A}_n | \bar{\mathbf{A}}_{n-1})$ and $f(\mathbf{A}_n | \bar{\mathbf{H}}_n)$ represent treatment probabilities conditioned on past treatments or full history. These treatments are assumed to be discrete, often binary. The probabilities are estimated via logistic regressions. In practice, the weights are truncated at 1st and 99th percentiles to avoid numeric overflow and then renormalized \cite{Lim.2018}. However, this division of probabilities still creates significant instability due to the sequential multiplication of the propensities in Eq.~(\ref{eq:sw-def}).

\textbf{Recurrent marginal structural networks (R-MSNs): } 
R-MSNs \cite{Lim.2018} extend MSMs via LSTM networks. Four components constitute the whole R-MSNs: propensity treatment network, propensity history network, encoder, and decoder. Each sub-network uses LSTM as the sequential modeling architecture and a fully connected network to produce the desired prediction. Here is a detailed description of each model:
\begin{itemize}
\item \emph{Propensity network (conditioned on past treatment)} estimates numerator of $SW(t,\tau)$.
\item \emph{Propensity network (conditioned on history)} estimates denominator of $SW(t,\tau)$.
\item \emph{Encoder} maps history $\bar{\mathbf{H}}_t$ to latent representation
\item \emph{Decoder} predicts $\tau$-step outcomes using the  representation generated by the encoder from the last step.
\end{itemize}
In our experiment, we replace the LSTM backbone of the encoder and decoder with transformers to ensure fairness. Training occurs in 3 phases: (1)~First, the two propensity networks are trained to predict the binary treatment probabilities using binary cross-entropy losses. Then the stabilized weights can be computed for subsequent training of the encoder and decoder.
(2)~Encoder is trained to predict the $1$-step ahead outcome with the computed $SW(\cdot,1)$-weighted MSE. 
(3)~The decoder takes the latent representation from the encoder and then transforms the vector into the latent representation fit for the decoder via a memory adapter (normally a fully connected linear layer). The decoder then predicts the $\tau$-step outcome with the $SW(\cdot,\tau)$-weighted MSE.

In each phase, the training loss is aggregated through averaging the local loss at each time step $0\leq t \leq T-\tau$ (identical to our \method). At each local time step $t$, a representation is built with the transformer for the tasks. For more details, please refer to \cite{Lim.2018}.

\subsection{Causal estimation via balanced representations} 
\label{app:baseline-BR}

\textbf{Counterfactual recurrent network (CRN): } CRN \cite{bica.2020} employs adversarial learning to create treatment-invariant representations. CRN heuristically addresses the time-varying confounding by creating temporal representations that are non-predictive of the treatment assignment. For this, an LSTM encoder-decoder architecture with gradient reversal layers is built to enforce balanced representation:
\begin{itemize}
\item \emph{Encoder}: LSTM produces hidden states $\mathbf{h}_t$
\item \emph{Balanced representation}: $\mathbf{\Phi}_t = \text{FC}(\mathbf{h}_t)$
\item \emph{Adversarial loss}: 
\begin{equation}
\mathcal{L} = \|\mathbf{Y}_{t+1} - G_Y(\mathbf{\Phi}_t,\mathbf{A}_t)\|^2 - \lambda \sum_{j=1}^{d_a} \mathbf{1}_{[\mathbf{A}_t=a_j]} \log G_A(\mathbf{\Phi}_t)
\end{equation}
\end{itemize}
Here, $G_Y$ predicts outcomes while $G_A$ attempts (but fails due to gradient reversal) to predict treatments from $\mathbf{\Phi}_t$. We fix $\lambda=1$ as in the original implementation.

\textbf{Causal transformer (CT): } CT \cite{Melnychuk.2022} is the first transformer-based approach that estimates potential outcomes over time. CT employs a multi-input transformer architecture specifically designed to create history representations $\mathbf{\Phi}_t$ under the time-varying treatment effect estimation setting. On top of $\mathbf{\Phi}_t$, two additional networks are built:
\begin{itemize}
    \item \textbf{\(G_Y\)}: The outcome prediction network, which predicts the 1-step-ahead outcome \(\mathbf{Y}_{t+1}\) using \(\mathbf{\Phi}_t\) and the current treatment \(\mathbf{A}_t\).
    \item \textbf{\(G_A\)}: The treatment classifier network, which attempts to predict the current treatment \(\mathbf{A}_t\) from \(\mathbf{\Phi}_t\).
\end{itemize}
Similar to CRN, CT attempts to create representations that are \textbf{predictive of outcomes} but \textbf{non-predictive of treatment assignments}, thereby mitigating confounding bias. To this, CT proposes counterfactual domain loss (CDC). The loss consists of: 
    (1)~Factual outcome loss:
    $\mathcal{L}_{G_Y} = \|\mathbf{Y}_{t+1} - G_Y(\mathbf{\Phi}_t, \mathbf{A}_t)\|^2$
    This mean squared error ensures that \(\mathbf{\Phi}_t\) is useful for predicting \(\mathbf{Y}_{t+1}\).
    (2): Confusion loss: $\mathcal{L}_{\text{conf}} = -\sum_{j=1}^{d_a} \frac{1}{d_a} \log G_A(\mathbf{\Phi}_t)$
    This cross-entropy loss encourages \(G_A\) to output a uniform distribution over treatments, making \(\mathbf{\Phi}_t\) treatment-invariant.

The training of CT involves optimizing two adversarial objectives: 
\begin{itemize}
    \item Optimize representation parameters (\(\theta_R\)) and \(G_Y\) parameters (\(\theta_Y\)): $(\hat{\theta}_Y, \hat{\theta}_R) = \underset{\theta_Y, \theta_R}{\arg \min} \mathcal{L}_{G_Y}(\theta_Y, \theta_R) + \alpha \mathcal{L}_{\text{conf}}(\hat{\theta}_A, \theta_R)$
    \item Optimize the \(G_A\) parameters (\(\theta_A\)):
    $\hat{\theta}_A = \underset{\theta_A}{\arg \min} \alpha \mathcal{L}_{G_A}(\theta_A, \hat{\theta}_R)$
\end{itemize}
During inference, CT autoregressively predicts outcomes $\widehat{Y}_{t+k}$ using the balanced representations $\mathbf{\Phi}_{t+k}$ based on the observed history $H_t$.

\subsection{G-Computation-based network (G-Net, G-LSTM)} 
\label{sec:gnet}

Both methods are based on the G-computation formula \cite{Robins.1986, Robins.1999}:

\begin{equation} \label{eq:g-comp-nested}
\begin{aligned}
\mathbb{E}\big[Y_{t+\tau}^{(a_{t:t+\tau})} \mid H_t = h_t\big] = \mathbb{E}\Bigg\{ & \mathbb{E}\bigg[ \cdots \mathbb{E}\Big\{ \mathbb{E}\big[Y_{t+\tau} \mid H_{t+\tau}, A_{t:t+\tau} = a_{t:t+\tau}\big] \\
& \qquad\qquad \Big| H_{t+\tau-1}, A_{t:t+\tau-1} = a_{t:t+\tau-2} \Big\} \\
& \qquad \cdots \;\bigg| H_{t+1}, A_{t:t+1} = a_{t:t+1} \bigg]  \Bigg| H_t = \bar{h}_t, A_t = a_t \Bigg\}
\end{aligned}
\end{equation}
Due to the nested structure of G-computation in Eq.~(\ref{eq:g-comp-nested}), the model misspecification error get easily propagated through the iterative steps, leading to potential bias and high variance.

\textbf{G-Net (sampling-based)}
G-Net\cite{Li.2021} is the first neural network model that uses G-computation to estimate CAPO over time. It uses a Monte-Carlo simulation based on the integration:

\begin{equation} \label{eq:g-comp-integral}
\begin{aligned}
\int_{\mathbb{R}^{d_x \times (\tau)} \times \mathbb{R}^{d_y \times (\tau)}} & \mathbb{E}\left[Y_{t+\tau} \mid H_{t+\tau} = h_{t+\tau-1}, A_{t:t+\tau} = a_{t:t+\tau}\right] \\
& \times \prod_{\delta=1}^{\tau} p\left(x_{t+\delta}, y_{t+\delta} \mid H_t=h_t, x_{t+1:t+\delta-1}, y_{t:t+\delta-1}, a_{t:t+\delta-1}\right) \\
& \quad d(x_{t+1:t+\tau-1}, y_{t:t+\tau-1})
\end{aligned}
\end{equation}

G-Net predicts the potential outcome in two steps:
\begin{enumerate}
    \item Estimate the conditional distribution $p\left(x_{t+\delta}, y_{t+\delta} \mid H_t=h_t, x_{t+1:t+\delta-1}, y_{t:t+\delta-1}, a_{t:t+\delta-1}\right)$
    \item Compute the empirical sum of M (M=50) trajectories via Monte-Carlo simulation to estimate the integration in Eq.~(\ref{eq:g-comp-integral}).
\end{enumerate}

G-Net originally uses LSTM to encode the history into a hidden state, with an extra linear transformation layer mapping the hidden state to the latent representation. We replace the LSTM architecture with a transformer in our experiment to ensure fairness. The representation is fed into the output computing head (i.e., fully connected networks with respective activation) to predict the conditional distribution. For fairness, we replace the LSTM encoder with the encoder of the transformer.

\textbf{G-transformer (regression-based)}
Unlike G-Net, which uses Monte-Carlo simulation, G-transformer (\textbf{GT}) \cite{Hess.2024} is built via an iterative regression on the pseudo outcomes defined as:
\begin{enumerate}
    \item $G_{t+\tau}^a = Y_{t+\tau}$
    \item $G_{t+\delta}^a = \mathbb{E}[G_{t+\delta+1}^a \mid \bar{H}_{t+\delta}^t, A_{t:t+\delta} = a_{t:t+\delta}] $ for $\delta=0, \ldots ,\tau-1$
\end{enumerate}

GT predicts the outcome in a masked transformer encoder. It uses transformer to encode the history at $t+\delta$ into a representation $Z_{t+\delta} = z_\phi(h_{t+\delta})$, and then predicts the pseudo outcome at step $\delta$ as:
\begin{equation}
g_\phi^\delta(Z_{t+\delta}^a, A_{t+\delta}) \approx G_{t+\delta}^a ,
\end{equation}
and then the CAPO is predicted via $g_{\phi}^0(Z_t, a_t)$. The encoder $z_\phi(\cdot)$ is instantiated by a transformer architecture similar to CT (For more details see \cite{Hess.2024}). 

\subsection{Dynamic DML}
The Dynamic DML method (heterogeneous version) is implemented as originally designed (see \href{https://proceedings.neurips.cc/paper/2021/hash/bf65417dcecc7f2b0006e1f5793b7143-Abstract.html}{link to the code (supplemental material)}), using sequential g-estimation to solve the blip predictors. Since the original method was developed for static covariates, we adapt it to our time-series setting by concatenating all variables in the patient history $H_t$ into a single, high-dimensional feature vector. This vector then serves as the conditioning variable for the Dynamic DML model. To ensure fairness, the nuisance residuals ($\widetilde{Y}, \widetilde{A}$) are taken directly from our trained Stage 1 nuisance network, providing both methods with inputs of the same quality. However, in its second stage, Dynamic DML is limited because it can only accept a history variable from a fixed time step. This limitation prohibits the use of a complex neural network in scenarios where one needs to take a dynamic patient history as input. Hence, we follow the original implementation and use the LASSO regression to model the blip coefficient predictor.

\newpage
\section{Algorithm}

\begin{algorithm}
\caption{\method training algorithm}
\label{alg:train}

\begin{algorithmic}[1]
\STATE \label{alg:line1} \textbf{Input:} Dataset $\mathcal{D}$, horizon $\tau$, feature map $\phi$, number of folds $W$, learning rates $\eta_N,\eta_B$
\STATE \textbf{Output:} Blip prediction network $(\mathcal{E}^B_{\theta_B}, \text{gb}^{k}_{\theta_B})$$\widehat{\mathbf{\psi}}_{\theta_B}$
\STATE \hrulefill
\STATE \textbf{First Stage: Train Nuisance Network $(\mathcal{E}^N_{\theta_N}, \text{gp}^{k}_{\theta_N}, \text{gq}^{j,k}_{\theta_N})$}
\STATE Partition $\mathcal{D}$ into $W$ folds: $S_1, S_2, \dots, S_W$
\STATE Partition $\mathcal{D}$ into $W$ folds: $S_1, S_2, \dots, S_W$
\FOR{each fold $w = 1$ to $W$}
    \STATE Train nuisance network on $\mathcal{D} \setminus S_w$
    \FOR{each $t = 1$ to $T-\tau$, $k = 0$ to $\tau$}
        \STATE $z^N_{t+k} \leftarrow \mathcal{E}^N_\theta(h_{t+k})$
        \STATE $\hat{p}_k(z_{t+k}^N) \leftarrow \text{gp}^{k}_{\theta_{N}}(z^N_{t+k})$
        \STATE $\mathcal{L}^p_{t,k} = (Y_{t+\tau} - \hat{p}_k(z_{t+k}^N))^2$
        \FOR{j = k to $\tau$}
            \STATE $\hat{q}_{j,k}(z_{t+k}^N) \leftarrow \text{gq}^{j,k}_{\theta_{N}}(z^N_{t+k})$
            \STATE $Q_{t,j} = \phi(X_{t+1:t+j}, A_{t:t+j}) - \phi(X_{t+1:t+j}, (A_{t:t+j-1}, 0))$
            \STATE $\mathcal{L}^q_{t,j,k} = BCE(\hat{q}_{j,k}(z_{t+k}^N), Q_{t,j})$ (if binary treatment)
        \ENDFOR
    \ENDFOR
    \STATE $\mathcal{L}_{N} \leftarrow \frac{1}{T-\tau} \sum_{t=1}^{T - \tau} 
    \left( \frac{1}{\tau+1} \sum_{k=0}^{\tau} \mathcal{L}^p_{t,k} + 
    \frac{2}{(\tau+1)(\tau+2)}\sum_{\tau \geq j\geq k \geq 0} \mathcal{L}^q_{t,j,k} \right)$
    \STATE Compute the gradient update the nuisance network parameter $\theta_N$: $\theta_N \leftarrow \theta_N - \eta_N \nabla_{\theta_N} \mathcal{L}_{N}$
    \FOR{each sample in $S_w$}
        \STATE Compute the residuals:
        \STATE $\widetilde{Y}_{t,k} = Y_{t+\tau} - \text{gp}^{k}_{\theta_N}(z_{t+k}^N)$
        \STATE $\widetilde{A}_{t,j,k} = A_{j} - \text{gq}^{j,k}_{\theta_N}(z_{t+k}^N)$
    \ENDFOR
\ENDFOR

\STATE \hrulefill

\STATE \textbf{Second Stage: Train Blip Prediction Network} $(\mathcal{E}^B_{\theta_B}, \text{gb}^{k}_{\theta_B})$
\FOR{each $t = 1$ to $T-\tau$}
    \STATE Compute the hidden state of the encoder:$z^B_{t} \leftarrow \mathcal{E}^B_{\theta_B}(h_{t})$
    \FOR{ $k = 0$ to $\tau$}
        \STATE $\hat{\boldsymbol{\psi}}^{1}_{k}(z^B_{t}) \leftarrow \text{gb}^{k}_{\theta_B}(z^B_{t})$
        \STATE $\hat{\boldsymbol{\psi}}^{2}_{k}(z^B_{t}) \mapsfrom \text{gb}^{k}_{\theta_B}(z^B_{t})$
        \STATE $\mathcal{L}_{\text{blip},t,k} = \left(\widetilde{Y}_{t,k} - \sum_{j=k+1}^{\tau} \hat{\boldsymbol{\psi}}^{2}_{j}(z^B_{t})' \widetilde{A}_{t,j,k} 
        - \hat{\boldsymbol{\psi}}^{1}_{k}(z^B_{t})' \widetilde{A}_{t,k,k}\right)^2$
    \ENDFOR
\ENDFOR
\STATE $\mathcal{L}_{\text{blip}} \leftarrow \frac{1}{T-\tau} \sum_{t=1}^{T - \tau} 
        \left( \frac{1}{\tau+1} \sum_{k=0}^{\tau} \mathcal{L}_{\text{blip},t,k} \right)$
\STATE Compute the gradient update for the blip prediction network parameter $\theta_B$: $\theta_B \leftarrow \theta_B - \eta_B \nabla_{\theta_B} \mathcal{L}_{\text{blip}}$
\STATE \hrulefill
\end{algorithmic}
\vspace{0.5em}
{\footnotesize\emph{Note:} We use $\leftarrow$ operation for value assignment in the computation graph while $\mapsfrom$ operations are detached from the computation graph.}
\end{algorithm}

\newpage
\begin{algorithm}
\caption{\method inference algorithm}
\label{alg:inference}
\begin{algorithmic}[1]
\STATE \textbf{Input:} Patient history $H_t = h_t$, treatment sequence $a^*$, baseline treatment sequence $b^*$
\STATE \textbf{Output:} CATE estimate $\mathbb{E}[Y_{t+\tau}^{(a^*)} - Y_{t+\tau}^{(b^*)} \mid H_t=h_t]$
\STATE \textbf{Step 1: Compute the hidden state of the encoder}
\STATE $z^B_{t} \mapsfrom \mathcal{E}^B_{\theta_B}(h_{t})$
\STATE \textbf{Step 2: Predict the blip coefficients}
\STATE $\hat{\boldsymbol{\psi}}_{k}(z^B_{t}) \mapsfrom \text{gb}^{k}_{\theta_B}(z^B_{t})$
\STATE \textbf{Step 3: Estimate CATE} 
\STATE (when $\phi$ only depends the last treatment: $\phi(x_{t+1:t+k},a_{t:t+k})=\phi(a_{t+k})$)
\STATE $\text{CATE} = \sum_{k=0}^{\tau}\hat{\boldsymbol{\psi}}_{k}(z^B_{t})'(\phi(a_k^*) - \phi(b_k^*))$
\end{algorithmic}
\end{algorithm}

\newpage

\section{Implementation details} \label{app:implementation}
In Appendix~\ref{app:runtime}, we introduce the runtime of our system. We provide a runtime comparison of the different methods in Appendix~\ref{app:runtime}. Then, in Appendix~\ref{app:lstm-implementation} and \ref{app:trm-implementation}, we explain how to instantiate our \method with LSTM and transformers, respectively. In Appendix~\ref{app:hyperparameter-tumor} and Appendix~\ref{app:hyperparameter-mimic}, we report the details of hyperparameter tuning.  

\subsection{Software \& system}
Our \method is implemented using \textbf{PyTorch} \cite{Paszke.2019}, combined with \textbf{PyTorch Lightning} \cite{Falcon.2019} for streamlined code organization and model training. Our code is available at \href{https://github.com/mHaoruikk/DeepBlip}{https://github.com/mHaoruikk/DeepBlip}. Training and inference are performed on a single GeForce RTX 4080 with 18GB memory.

\subsection{Runtime comparison} \label{app:runtime}
In this section, we report: (1)~The runtime for training a single epoch on the training set, (2)~The runtime for inference on the test set, and (3)~The theoretical runtime required to identify the optimal treatment offline. The last metric is crucial for efficient offline treatment planning in practical medical scenarios. During the test, we chose a transformer as the main backbone for all the selected methods. For two-stage methods like RMSNs and \method, the first stage uses a simpler LSTM instantiation instead for higher efficiency. The results are reported in table~\ref{tab:runtime}.

Baselines like HA-TRM, RMSNs, CRN, and CT estimate one conditional average potential outcome (CAPO) over time within a single inference. Therefore, it takes $N(\tau)$ inferences to identify the treatment sequence with the best outcome, where $N(\tau)$ denotes the number of possible treatment combinations within the time window $[t,t+\tau]$. G-Net computes the CAPO over time by performing Monte Carlo simulations, generating $M$ trajectories for each possible treatment sequence. G-transformer only predicts the CAPO under a fixed intervention sequence, which means it requires additional $N(\tau) - 1$ re-trainings to calculate all the possible outcomes. Our \method, however, estimates CATE over time via predicting blip coefficients to identify the treatment combination with the optimal effect. By Eq.~(\ref{eq:cate-identification}), only one forward pass to predict the blip coefficients suffices to identify the optimal treatment. Hence, although our \method is not the most efficient method during training or inference, \textbf{it is significantly faster in offline treatment planning}, which is critical for personalized medicine.

\begin{table}[b]
    \centering
    \footnotesize{
    \begin{tabular}{lccc}   
    \hline
    Methods & Training ($t_T$) &  Inference ($t_I$) & Offline optimal treatment identification\\
    \hline
    HA-TRM \cite{Frauen.2025}   & 2.5 $\pm$ 0.4 &   0.16 $\pm$ 0.04 &   $t_I\cdot N({\tau})$ \\
    RMSNs \cite{Lim.2018}   & 7.2 $\pm$ 0.9 &   0.30 $\pm$ 0.06 &   $t_I \cdot N(\tau)$ \\ 
    CRN \cite{bica.2020}     & 5.4 $\pm$ 0.6 &   0.28 $\pm$ 0.06 &   $t_I \cdot N(\tau)$ \\
    CT \cite{Melnychuk.2022}      & 3.4 $\pm$ 0.5 &   0.20 $\pm$ 0.04 &   $t_I \cdot N(\tau)$ \\
    G-Net \cite{Li.2021}   & 2.8 $\pm$ 0.4 &   0.24 $\pm$ 0.03 &  ${t_I \cdot N(\tau)\cdot M}^{\quad \dagger}$ \\
    GT \cite{Hess.2024}       & 2.6  $\pm$ 0.4 & 0.20 $\pm$ 0.04 & ${(N(\tau)-1)\cdot t_T \cdot n_e + t_I \cdot N(\tau)}^{\quad \dagger\dagger}$ \\
    \hline
    \textbf{\method (ours)}   & 3.6 $\pm$ 0.5 &  0.17 $\pm$ 0.06 &  \textbf{$t_I$} \\
    \hline
    \end{tabular}
    }
    \\$\dagger$: G-Net performs $M$ simulations to infer the potential outcome. \\$\dagger\dagger$: G-transformer requires re-training additionally $N(\tau)-1$ times.
    \caption{Runtime report for the methods instantiated by transformers. $t_T$ (in min) is the training time on the training dataset per epoch. $t_I$ (in min) is the inference time on the test dataset. In the last column, we report the time needed to identify the optimal treatment. $N(\tau)$ is the number of all possible combinations of treatments during the future horizons $[t:t+\tau]$. }
    \label{tab:runtime}
\end{table}

\subsection{Transformer instantiation for \method} \label{app:trm-implementation}
Our \method uses the transformer architecture to encode history into a latent representation. The transformer is built upon the multi-input transformer from causal transformer \cite{Melnychuk.2022}. The transformer is specially designed for medical applications where the inputs include the outcomes, covariates, and treatments. Each type of variable has a corresponding sub-transformer. The three sub-transformers perform not only the classic multi-headed self-attention within themselves, but also the cross-attention in between. This ensures that information is shared across these sub-transformers. 

Let the three sub-transformers be $z_\theta^k(\cdot), k=1,2,3$. Suppose for $k=1$, the input sequence is the $\boldsymbol{U}^k=(X_1,\ldots,X_{t})$. Then, the sub-transformer $z_\phi^k(\cdot)$ generates the representation in the following steps.

\textbf{Input transformation: }
The raw input first goes through a linear transformation:
\begin{align}
    H^k_0 = \text{Linear}_{k}(\boldsymbol{U}) \in \mathbb{R}^{t\cdot d_{\text{trm}}}
\end{align}
, where $H^k_0$ is the input of the first transformer blocks.

\textbf{Transformer blocks: }
The multi-input sub-transformer has $B$ transformer blocks. Within each transformer block, the multi-headed self-attention and cross-attention are performed. Then, the output goes through a feed-forward network. Below are the details of a transformer block at layer $b$.

\underline{(1) Multi-headed self/cross-attentions:}
First, the keys, queries, and values, namely $K, Q,V$ for $n_h$ attention heads, are computed from the output from the previous transformer block:
\begin{align} \label{eq:qkv}
    Q_{b}^{k,i} & = H_{b}^{k,i} W^{k,i}_{Q} + \mathbf{b}_{Q}^{k,i} \in \mathbb{R}^{t\cdot d_{qkv}},  \\
    K_{b}^{k,i} &= H_{b}^{k,i} W^{k,i}_{K} + \mathbf{b}_{W}^{k,i} \in \mathbb{R}^{d_{t \cdot qkv}},  \\
    V_{b}^{k,i} &= H_{b}^{k,i} W^{k,i}_{V} + \mathbf{b}_{V}^{k,i} \in \mathbb{R}^{d_{t \cdot qkv}},
\end{align}
where $i \in [1,n_h]$ is the index of a single attention head. Then the $i$-th attention is computed via a softmax over the scaled dot-product:
\begin{align}
    \text{Attn}^{k,i}_b = \text{Softmax}\bigl(
    \frac{Q_{b}^{k,i}(K_{b}^{k,i})^T}{\sqrt{d_{qkv}}}\bigr) V_{b}^{k,i}
\end{align}
The multi-head self-attention is then calculated by concatenating the attention heads:
\begin{align}
    \text{MHA}(Q_{b}^{k},K_{b}^{k},V_{b}^{k}) = \text{Concat}(\text{Attn}^{k,1}_b, \ldots, \text{Attn}^{k,n_h}_b) \in \mathbb{R}^{t\cdot d_{\text{trm}}}
\end{align}
Since $\text{MHA}(Q_{b}^{k},K_{b}^{k},V_{b}^{k})$ has the same dimension as $H^k_b$, then we could generate a new set of queries for the cross-attentions with residual connection:
\begin{align}
    \widetilde{Q}_{b}^{k} = H^k_b+ \text{MHA}(Q_{b}^{k},K_{b}^{k},V_{b}^{k})
\end{align}
The cross-attentions are put on top of the self-attention layers and use the queries and keys from the other two sub-transformers. For example, the multi-head cross-attention of sub-transformer $k$ with sub-transformer $m$ is:
\begin{align}
    \text{MHA}( \widetilde{Q}_{b}^{k},K_{b}^{m},V_{b}^{m}) = \text{Concat}(\text{CrossAttn}^{k,m,1}_b, \ldots, \text{CrossAttn}^{k,m,n_h}_b), \quad (m\neq k),
\end{align}
where the $\text{CrossAttn}$ is defined as:
\begin{align}
    \text{CrossAttn}^{k,m,i}_b = \text{Softmax}\left( \frac{ \widetilde{Q}_{b}^{k} (K_{b}^{m,i})^T}{\sqrt{d_{qkv}}} \right) V_{b}^{m,i}.
\end{align}
Finally, the output is achieved by adding self-attention's output and cross-attention's output together:
\begin{align}
    Z_{b}^{k} = \widetilde{Q}_{b}^{k}+\sum_{m \neq k}  \text{MHA}( \widetilde{Q}_{b}^{k},K_{b}^{m},V_{b}^{m}) + S.
\end{align}
The static covariates $S$ are added when pooling different cross-attention outputs (see section 4.1 from \cite{Melnychuk.2022}). The operation ensures that the information is shared across sub-transformers. Layer normalizations are added for all the self-~and~cross attentions \cite{Vaswani.2017}.

\underline{(2) Feed-forward networks:}
The output of the attention mechanism is processed through a position-wise feed-forward network (FFN) applied to each time step:
\begin{align}
    H_b^{k, \text{ff}} = \text{Linear}(\text{ReLU}(\text{Linear}(Z_b^{k})),
\end{align}
where linear layers are followed by a dropout \cite{Melnychuk.2022}.
The output $H_{b}^{k, \text{ff}}$ serves as the input ($H_{b+1}^{k}$) to the next transformer block. After processing through all $B$ blocks, the final representation for sub-transformer $k$ becomes $H_B^k$. These final representations are then aggregated via an element-wise average pooling followed by a linear transformation and a exponential linear unit (ELU):
\begin{align}
    H_B=\text{ELU}(\text{Linear}(\text{Pool}(H_B^1, H_B^2, H_B^3))) \in \mathbb{R}^{t \cdot d_{\text{hr}}}.
\end{align}
Hence, the transformer-based sequential encoder $\mathcal{E}_\theta$ outputs the latent vector:
$\mathcal{E}_{\theta}(H_t)=\textbf{hr}_t=H_{B,t} \in \mathbb{R}^{d_{\text{trm}}}$
at time step $t$. We omit the positional encoding for better clarity. For more relevant details, see Section~4.2 in \cite{Melnychuk.2022}.

Our \method has two stages: (1)~Nuisance network, and (2)~Blip prediction network. Each stage uses a separate sequential encoder: $\mathcal{E}_{\theta_N}^N$ for (1) and $\mathcal{E}_{\theta_B}^B$ for (2). To accelerate the training, we only instantiate $\mathcal{E}_{\theta_B}^B$ by the transformer, and use LSTM to instantiate (1). The underlying reason is that the demand for accuracy in Stage~(1) is not as high as Stage~(2), as the $L^2$-moment loss is Neyman-orthogonal over the nuisance functions.

\textbf{Nuisance network:}
At each time step $t$, the nuisance network estimates the nuisance functions:
\begin{align}
     q_{t,j,k}(h_{t+k})\;:=\; & \mathbb{E}\bigl[Y_{t+\tau} \mid H_{t+k}=h_{t+k}\bigr],
    \quad 1\leq t \leq T-\tau, 0\leq k \leq \tau\\
    q_{t,j,k}(h_{t+k}) \;:=\; & \mathbb{E}\bigl[Q_{t,j} \mid H_{t+k}=h_{t+k}\bigr], 1\leq t \leq T-\tau, 0\leq k\leq j\leq \tau
\end{align}
Therefore, a \textbf{multi-head output layer} is added on top of the encoder $\mathcal{E}^N_\theta$. The heads receive the representation $\text{hr}_t^N$ from the encoder and transform the input with multi-layer perceptron networks:
\begin{enumerate}
    \item $\hat{p}_{t,k}(h_{t+k}) = \text{MLP}_p^{(k)}(\mathbf{hr}_{t+k}^N) \;\approx\; p_{t,k}(h_{t+k}),\quad k=0,1, \ldots, \tau$
    \item $\hat{q}_{t,j,k}(h_{t+k}) = \text{MLP}_q^{(j,k)}(\mathbf{hr}_{t+k}^N) \;\approx\; q_{t,j,k}(h_{t+k}), \quad 0\leq k \leq j \leq \tau$
\end{enumerate}
where $\text{MLP}_p^{(k)}$ and $\text{MLP}_q^{(j,k)}$ are fully connected networks with ReLU activations. Each MLP has an input size of $d_{\text{hr}}$, a single hidden layer with $d_{\text{hidden}}$ perceptrons, and a single output transformation that maps the hidden layer to a 1-dimensional output. For binary output in some $q_{t,j,k}$, an additional sigmoid activation is added to contract the range into $[0,1]$.

\textbf{Blip prediction network's output}
Likewise, a multi-head output layer is added on top of the sequential encoder $\mathcal{E}_{\theta_B}^B$ to predict the blip coefficients. For each horizon $k\in \{0,1.,,\tau\}$:  
\begin{equation}
\widehat{\psi}_{k}(h_t) = \text{MLP}_{\theta_B}^{(k)}(\mathbf{hr}_t^B) \;\approx\; \psi_{t,k}(h_t) 
\end{equation}  
where $\text{MLP}_{\theta_B}^{(k)}$ maps $\mathbf{hr}_t^B$ to blip coefficients. $\text{MLP}_{\theta_B}^{(k)}$ has the same structure as the MLPs in stage~(1).

\subsection{LSTM instantiation for \method} \label{app:lstm-implementation}

\method-LSTM uses LSTM as the sequential encoders in two stages:  
1. Stage~(1) (Nuisance Network): Estimates residuals for blip function estimation.  
2. Stage~(2) (Blip Prediction Network): Predicts blip parameters $\psi_{t,k}(h_t)$.  

\textbf{Sequential encoding via LSTM:}
Each stage shares an individual LSTM to encode patient history. Let $\textbf{hz}_t$ be the hidden state at time $t$.  
At each time step $t$, the input vector $\text{v}_t \in \mathbb{R}^{d_{\text{input}}}$ concatenates:
\begin{enumerate}
    \item Static features $X_{\text{s}} \in \mathbb{R}^{d_{\text{static}}}$ (remain constant across time)
    \item Current time-varying covariates $X_d \in \mathbb{R}^{d_{\text{dynamic}}}$ ($d_{\text{dynamic}} + d_{\text{static}}=d_x$)
    \item Previous treatments $a_{t-1} \in \mathbb{R}^{d_a}$
    \item Previous outcomes $Y_{t-1} \in \mathbb{R}$
\end{enumerate}
to form the input vector:
\begin{equation}
V_t = \text{Concat}\big(X_s, X_d, A_{t-1}, Y_{t-1}\big) \in \mathbb{R}^{d_{\text{input}}}, \quad d_{\text{input}}=d_x+d_a+1.
\end{equation}  
Then the LSTM processes $\{V_t\}_{t=1}^T$ into hidden state $\text{hs}_t$ and cell state $c_t$:  
\begin{equation}
(\text{hs}_t, c_t) = \text{LSTM}\big(v_t, (\text{hs}_{t-1}, c_{t-1})\big),
\end{equation}  
where $\mathbf{h}_t \in \mathbb{R}^{d_{\text{hidden}}}$. Next, we take the hidden state and derive the temporal representation of the patient state at time $t$: 
\begin{equation}
\textbf{hr}_t = \text{ELU}\big(\mathbf{W}_{\text{hr}} \cdot \text{dropout}(\text{hs}_t) + b_{\text{hr}}\big),
\end{equation}  
where $\mathbf{W}_{\text{hr}} \in \mathbb{R}^{d_{\text{hr}} \times d_{\text{hidden}}}$ is a learnable projection matrix.

Above is the whole process of generating the temporal representation at time $t$. Since the \method has two stages, therefore we need to train \textbf{two separate} sequential encoders: $\mathcal{E}^N_{\theta_N}(h_t) = \textbf{hr}_t^N$ for the nuisance network and $\mathcal{E}^B_{\theta_B}(h_t) = \textbf{hr}_t^B$ for the blip prediction network. The output layers of both stages remain the same as the \method-TRM.

\newpage
\subsection{Hyperparameter tuning for tumor growth dataset} \label{app:hyperparameter-tumor}

We compare the baselines with two different architectures: LSTM and transformer. In the main experiment, the models are tested with the transformer architecture. In the ablation study, all methods are tested with the LSTM architecture. By doing this, we ensure that the comparison is fair. For each architecture, we separately tune the hyperparameters using a random search with 20 attempts and select the best configurations for each method. The search spaces for LSTM- and transformer-based architectures are summarized in Tables~\ref{tab:hp-lstm-tumor} and~\ref{tab:hp-trm-tumor}, respectively, where $C = d_{\text{input}}$ denotes the input dimension of the data. The causal transformer (CT) uses a fixed CDC regularization coefficient $\alpha = 0.01$ following \citet{Melnychuk.2022}. CRN~\cite{bica.2020} is instantiated as an encoder-decoder model, where both the encoder and decoder are separate networks. R-MSNs~\cite{Lim.2018} comprises three components: a treatment propensity network, a history propensity network, and an encoder-decoder network, which are all tuned within the same hyperparameter grid. For G-Net~\cite{Li.2021}, we follow the original work and fix the number of Monte Carlo samples to 50.

\begin{table}[h]
    \centering
    \footnotesize
    \begin{tabular}{|l|l|}
        \hline
        \textbf{Hyperparameter} & \textbf{Tuning Range} \\
        \hline
        LSTM layers ($l$) & [1, 2, 3] \\
        LSTM hidden units ($d_h$) & [32, 64, 128, 256, 512, 1024] \\
        LSTM output size ($d_o$) & [0.5$d_h$, 1$d_h$, 2$d_h$] \\
        Balanced / hidden representation size ($d_r$) & [0.5$d_h$, 1$d_h$, 2$d_h$] \\
        FC / FF hidden units ($d_{\text{fc}}$, $d_{\text{ff}}$) & [0.5$d_h$, 1$d_h$, 2$d_h$] \\
        LSTM dropout rate ($p$) & [0.1, 0.2, 0.3, 0.4] \\
        Learning rate ($\eta$) & [0.1, 0.01, 0.001, 0.0001] \\
        Max gradient norm & [0.5, 1.0, 2.0] \\
        Number of epochs ($n_e$) & [10, 20] \\
        \hline
    \end{tabular}
    \vspace{0.2cm}
    \caption{Hyperparameter search space for LSTM instantiations of all the baselines on tumor growth data. }
    \label{tab:hp-lstm-tumor}
\end{table}

\begin{table}[h]
    \centering
    \footnotesize
    \begin{tabular}{|l|l|}
        \hline
        \textbf{Hyperparameter} & \textbf{Tuning Range} \\
        \hline
        Transformer blocks ($B$) & [1, 2] \\
        Learning rate ($\eta$) & [0.01, 0.001, 0.0001] \\
        Batch size & [64, 128] \\
        Attention heads ($n_h$) & [2] \\
        Transformer units ($d_{\text{trm}}$) & [0.5$C$, 1$C$] \\
        Hidden representation ($d_\text{hr}$) & [0.5$C$, 1$C$] \\
        FC hidden units ($d_{\text{fc}}$) & [0.5$d_\text{hr}$, 1$d_\text{hr}$] \\
        Number of epochs ($n_e$) & [10, 20] \\
        \hline
    \end{tabular}
    \vspace{0.2cm}
    \caption{Hyperparameter search space for transformer instantiations of all the baselines on tumor growth data.}
    \label{tab:hp-trm-tumor}
\end{table}

\newpage
\subsection{Hyperparameter tuning for MIMIC-III dataset} \label{app:hyperparameter-mimic}

We compare the baselines with two different unified architectures: LSTM and transformer. In the main experiment, the models are tested with the transformer architecture. In the ablation study, all methods are tested with the LSTM architecture. By doing this, we ensure that the comparison is fair. For each architecture, we separately tune the hyperparameters using a random search with 20 attempts and select the best configurations for each method. The search spaces for LSTM- and transformer-based architectures are summarized in Tables~\ref{tab:hp-lstm-mimic} and~\ref{tab:hp-trm-mimic}, respectively, where $C = d_{\text{input}}$ denotes the input dimension of the data. The causal transformer (CT) uses a fixed CDC regularization coefficient $\alpha = 0.01$ following \citet{Melnychuk.2022}. CRN~\cite{bica.2020} is instantiated as an encoder-decoder model, where both the encoder and decoder are separate networks. R-MSNs~\cite{Lim.2018} comprises three components: a treatment propensity network, a history propensity network, and an encoder-decoder, which are all tuned within the same hyperparameter grid. For G-Net~\cite{Li.2021}, we follow the original work and fix the number of Monte Carlo samples to 50.

\begin{table}[h]
    \centering
    \footnotesize
    \begin{tabular}{|l|l|}
        \hline
        \textbf{Hyperparameter} & \textbf{Tuning Range} \\
        \hline
        LSTM layers ($l$) & [1, 2, 3] \\
        LSTM hidden units ($d_h$) & [32, 64, 128, 256, 512, 1024] \\
        LSTM output size ($d_o$) & [0.5$d_h$, 1$d_h$, 2$d_h$] \\
        Balanced / hidden representation size ($d_r$) & [0.5$d_h$, 1$d_h$, 2$d_h$] \\
        FC / FF hidden units ($d_{\text{fc}}$, $d_{\text{ff}}$) & [0.5$d_h$, 1$d_h$, 2$d_h$] \\
        LSTM dropout rate ($p$) & [0.1, 0.2, 0.3, 0.4] \\
        Learning rate ($\eta$) & [0.1, 0.01, 0.001, 0.0001] \\
        Max gradient norm & [0.5, 1.0, 2.0] \\
        Number of epochs ($n_e$) & [10, 20] \\
        \hline
    \end{tabular}
    \vspace{0.2cm}
    \caption{Hyperparameter search space for LSTM instantiations of all the baselines on MIMIC-III semi-synthetic data. }
    \label{tab:hp-lstm-mimic}
\end{table}

\begin{table}[h]
    \centering
    \footnotesize
    \begin{tabular}{|l|l|}
        \hline
        \textbf{Hyperparameter} & \textbf{Tuning Range} \\
        \hline
        Transformer blocks ($B$) & [1, 2] \\
        Learning rate ($\eta$) & [0.01, 0.001, 0.0001] \\
        Batch size & [64, 128] \\
        Attention heads ($n_h$) & [2] \\
        Transformer units ($d_{\text{trm}}$) & [0.5$C$, 1$C$] \\
        Hidden representation ($d_\text{hr}$) & [0.5$C$, 1$C$] \\
        FC hidden units ($d_{\text{fc}}$) & [0.5$d_\text{hr}$, 1$d_\text{hr}$] \\
        Number of epochs ($n_e$) & [10, 20] \\
        \hline
    \end{tabular}
    \vspace{0.2cm}
    \caption{Hyperparameter search space for transformer instantiations of all the baselines on MIMIC-III semi-synthetic data.}
    \label{tab:hp-trm-mimic}
\end{table}

\end{document}